\setlist{nolistsep}
\newtheorem{lemma}{Lemma}
\newtheorem*{example*}{Example}
\newtheorem{theorem}{Theorem}
\newtheorem{corollary}{Corollary}
\newtheorem*{theorem*}{Theorem}
\newcommand{\Rmnum}[1]{\expandafter\@slowromancap\romannumeral #1@}
\titlespacing*{\section}{0pt}{-0.20\baselineskip}{-0.350\baselineskip}
\titlespacing*{\subsection}{0pt}{-2.5pt}{-7.0pt}
\def\expandafter\normalsize\expandafter{%
    \normalsize
    \setlength\abovedisplayskip{2.0pt}
    \setlength\belowdisplayskip{2.0pt}
    \setlength\abovedisplayshortskip{2pt}
    \setlength\belowdisplayshortskip{2pt}
}
\DeclareMathOperator*{\argmin}{arg\,min}
\DeclareMathOperator*{\argmax}{arg\,max}
\DeclarePairedDelimiterX{\norm}[1]{\lVert}{\rVert}{#1}
\newcommand{\normsize}{\ensuremath{1}}
\newcommand{\bigObound}{\ensuremath{\mathcal{O}}}
\newcommand{\Thetabound}{\ensuremath{\Theta}}
\newcommand{\R}{\ensuremath{\mathbb{R}}}
\newcommand{\indfunc}{\ensuremath{\mathbf{1}}}
\newcommand{\itemset}{\ensuremath{\mathcal{A}}}
\newcommand{\num}{\ensuremath{n}}
\newcommand{\dataset}{\ensuremath{\mathcal{Z}}}
\newcommand{\datapoint}{\ensuremath{z}}
\newcommand{\queryset}{\ensuremath{\mathcal{X}}}
\newcommand{\querypoint}{\ensuremath{x}}
\newcommand{\labelpoint}{\ensuremath{y}}
\newcommand{\labelfunc}{\ensuremath{Y}}
\newcommand{\hypclass}{\ensuremath{\mathcal{D}}}
\newcommand{\hyp}{\ensuremath{D}}
\newcommand{\hypstar}{\ensuremath{\hyp^*}}
\newcommand{\hyphat}{\ensuremath{\hat{\hyp}}}
\newcommand{\Dhat}{\ensuremath{\hat{D}}}
\newcommand{\D}{\ensuremath{D}}
\newcommand{\uclass}{\ensuremath{\hypclass}}
\newcommand{\lclass}{\ensuremath{\mathcal{L}}}
\newcommand{\closure}{\ensuremath{\textnormal{cl}}}
\newcommand{\clique}{\ensuremath{\mathcal{C}}}
\newcommand{\cost}{\ensuremath{c}}
\newcommand{\clusters}{\ensuremath{K}}
\newcommand{\posr}{\ensuremath{\text{positive}}}
\newcommand{\negr}{\ensuremath{\text{negative}}}
\newcommand{\range}{\ensuremath{r}}
\newcommand{\rangeintra}{\ensuremath{r_{\textnormal{in}}}}
\newcommand{\prob}{\ensuremath{\mathbb{P}}}
\newcommand{\Time}{\ensuremath{m}}
\newcommand{\cuisine}{\ensuremath{\text{cuisine}}}
\newcommand{\review}{\ensuremath{\text{review}}}
\newcommand{\longitude}{\ensuremath{\text{long}}}
\newcommand{\latitude}{\ensuremath{\text{lat}}}
\newcommand{\geo}{\ensuremath{\text{geo}}}
\newcommand{\rand}{\ensuremath{\text{random}}}
\newcommand{\naive}{\textsc{IndGreedy}\xspace}
\newcommand{\naiveSIT}{\textsc{IndGreedy-SIT}\xspace}
\newcommand{\learnhm}{\textsc{LearnHM}\xspace}
\newcommand{\luproj}{\textsc{LU-Proj}\xspace}
\newcommand{\uproj}{\textsc{U-Proj}\xspace}
\newcommand{\lproj}{\textsc{L-Proj}\xspace}
\newcommand{\getuserresponse}{\textsc{GetUserResponse}\xspace}
\newcommand{\qclique}{\textsc{QClique}\xspace}
\newcommand{\qgreedy}{\textsc{QGreedy}\xspace}
\icmltitlerunning{Actively Learning Hemimetrics}
\begin{document}

\twocolumn[
\icmltitle{Actively Learning Hemimetrics \\ with Applications to Eliciting User Preferences}

% You may provide any keywords that you
% find helpful for describing your paper; these are used to populate
% the "keywords" metadata in the PDF but will not be shown in the document
\icmlkeywords{hemimetrics, metrics, active learning, crowd preferences}

\icmlauthor{Adish Singla}{adish.singla@inf.ethz.ch}
\icmlauthor{Sebastian Tschiatschek}{sebastian.tschiatschek@inf.ethz.ch}
\icmlauthor{Andreas Krause}{krausea@ethz.ch}
\icmladdress{ETH Zurich, Switzerland}

\vskip 0.3in
]
%%%%%%%%%%%%%%%%%%%%%%%%%%%%%%%%%%%%%%%%%%%%%%
%%%%%%%%%%%%%%%%%%%%%%%%%%%%%%%%%%%%%%%%%%%%%%
\begin{abstract}
Motivated by an application of eliciting users' preferences, we investigate the problem of learning hemimetrics, {\em i.e.}, pairwise distances among a set of $\num$ items that satisfy triangle inequalities and non-negativity constraints. In our application, the (asymmetric) distances quantify private costs a user incurs when substituting one item by another. We aim to learn these distances (costs) by asking the users whether they are willing to switch from one item to another for a given incentive offer.
Without exploiting structural constraints of the hemimetric polytope, 
learning the distances between each pair of items requires $\Thetabound(\num^2)$ queries. We propose an active learning algorithm that substantially reduces this sample complexity by exploiting the structural constraints on the version space of hemimetrics. Our proposed algorithm achieves provably-optimal sample complexity for various instances of the task. For example, when the items are embedded into $\clusters$ tight clusters, the sample complexity of our algorithm reduces to $\bigObound(\num\clusters)$.  Extensive experiments on a restaurant recommendation data set 
support the conclusions of our theoretical analysis.
\end{abstract}
\vspace{-5mm}
%%%%%%%%
% Motivated by the importance of users' preferences in recommender systems and market places, we study the problem of learning these preferences. We pose it as the problem in the form of learning hemimetrics.
% Users' preferences are 
%%%%%%%%
%%TODO \st{Is this first sentence good? I am not sure what \emph{applications of learning users' preferences} means, but I didn't know what to put else :(.}
%%TODO \st{Should we go for algorithm or algorithms?} 
%%%%
%%% This work is concerned with the sample complexity or the number of queries required to learn the underlying hemimetric. 
%
%%%, and $\negr$ otherwise
%%%%%%%%%%%%%%%%%%%%%%%%%%%%%%%%%%%%%%%%%%%%

%%%%%%%%%%%%%%%%%%%%%%%%%%%%%%%%%%%%%%%%%%%%%%%%%%%%%%%%%
%%%%%%%%%%%%%%%%%%%%%%%%%%%%%%%%%%%%%%%%%%%%%%%%%%%%%%%%% INTRODUCTION
%!TEX root = arxiv-hemimetrics.tex

%%%%%%%%%%%%%%%%%%%%%%%%%%%%%%%%%%%%%%%%%%%%%%%%%%%%%%%%%
%%%%%%%%%%%%%%%%%%%%%%%%%%%%%%%%%%%%%%%%%%%%%%%%%%%%%%%%% 
\section{Introduction}\label{sec.introduction}
Learning a distance function over a set of items or a data manifold plays a crucial role in many real-world applications. In machine learning algorithms, the distances serve as a notion of similarity (or dissimilarity) between data points and are important for various tasks such as clustering \cite{xing2002distance}, object ranking \cite{lim2014efficient}, image retrieval / classification \cite{he2004manifold,huang2015log}, 
% sequence alignment \cite{garreau2014metric}, exploratory data analysis \cite{changpinyo2013similarity}, 
etc.\footnote{We refer the interested reader to the survey by~\citet{bellet2013survey} for a detailed discussion of various applications.}
%, \emph{cf.} the survey by~\citet{bellet2013survey} for a detailed discussion of various applications.
%\footnote{We refer the interested reader to the survey by~\citet{bellet2013survey} for a detailed discussion of various applications.} 
In economics, the distance function can encode the preferences of users ({\em e.g.}, buyers or sellers in a marketplace) for different items ({\em e.g.}, from a catalogue of products) to improve product recommendation and dynamic pricing of goods \cite{desjardins2006learning,horton2015quality,blum2015learning,vazquez2014active}.
\\
{\bf Motivating applications.}
We are interested in learning the preferences of users across different choices available in a marketplace --- these choices are given in the form of $\num$ (types of) items. For instance, in a restaurant recommendation system such as \emph{Yelp}, the item types could correspond to restaurants abstracted by attributes such as cuisine, locality, reviews and so on. Consider a user who seeks recommendations from the system and has chosen item $i$ ({\em e.g.}, ``Mexican restaurant in Manhattan with over $50$ reviews"). However, to incentivize exploration and maximize its overall utility, the marketplace may consider offering a discount to the user to instead choose item $j$ ({\em e.g.}, ``Newly opened fastfood restaurant in New Jersey with $0$ reviews"), \emph{e.g.}, to gather more reviews for item $j$. The price of the offer would clearly depend on how similar or dissimilar the choices $i$ and $j$ are. In general, a high dissimilarity would require the system to offer higher incentives (larger discounts). 
\\
{\bf Distance function quantifying private costs.}
We capture the above-mentioned notion of dissimilarity by a pairwise distance $\D_{i,j}$ --- the distance $\D_{i,j}$ corresponds to the private cost of a user incurred by switching from her default choice of item $i$ to item $j$.  We assume that the distance function $\D$ is a {\em hemimetric}, {\em i.e.}, a relaxed form of a metric, satisfying only non-negativity constraints %({\em i.e.}, $\D_{i,j} \geq 0 \ \forall \ i, j \in [\num]^2$) 
and triangle inequalities. % ({\em i.e.}, $\D_{i,j} \leq \D_{i,k} + \D_{k,j} \ \forall \ i, j, k \in [\num]^3$)
%%%%%
The asymmetry in the distances ({\em i.e.}, $\D_{i,j} \neq \D_{j,i}$) is naturally required in our application setting --- it may arise from various factors such as the underlying quality of the items ({\em e.g.}, switching between a highly rated and an unreviewed restaurant).
%% or from inherent user preferences among item choices.
%%%%%
Our goal is to efficiently learn the distance function $\D$ via interactions with the users, without assuming any knowledge of the underlying attributes that affect the users' preferences.
\\
{\bf User query and active learning.}
In our setting, the interactions with the users take the form of a binary labeling query, \emph{i.e.}, given two items $i$ and $j$, and a proposed value $\cost$, the user provides a $\posr$ response iff value $\cost$ is at least the underlying distance $\D_{i,j}$, and a $\negr$ response otherwise. This query is motivated by the \emph{posted-price model} used in marketplaces  \cite{abernethy2015low,singla2013truthful},  where users are offered a take-it-or-leave-it price by the system, and they can accept by providing a $\posr$ response, or reject by providing a $\negr$ response.  However, we are not considering the economic aspects of the query, and each query has unit cost for the algorithm. Such a feedback setting is realistic and often employed by online marketplaces where queries are posted to the users in form of surveys to seek feedback or the users who accept are awarded the actual monetary offer via a lottery. This paper is concerned with the sample complexity, \emph{i.e.}, the number of queries required to learn $\D$.
%%%%%%%%%%%%%%%%%%%%%%%%%%%%%%%%%%%%%%%%%%%%%%%%%%%%%%%%%
%%%%%%%%%%%%%%%%%%%%%%%%%%%%%%%%%%%%%%%%%%%%%%%%%%%%%%%%%
%\vspace{-3mm}
\subsection{Our Approach and Main Contributions}
%\vspace{-2mm}
A naive approach to this problem is to learn each of the $\num^2$ pairwise distances independently. However, the sample complexity of this approach is $\Thetabound(\num^2)$. Our goal is to reduce the sample complexity by exploiting the structural constraints  on the version space of hemimetrics. The main contributions of this paper are as follows: % and the key ideas behind our 
%%%%%%%%%%%%%%%%%%
\\
{\bf Novel metric learning framework.} We propose a new learning problem in the framework of metric learning, motivated by the applications to eliciting users' preferences. The key distinctive features of our setting % compared to the usual metric learning framework 
are: \emph{(i)} the specific modality of the user queries (with natural motivation from economics), and \emph{(ii)} the asymmetry of the distances learnt.
%learning via interactions with the users through binary labeling queries.
% user query for learning as binary labeling.
%%%%%%%%%%%%%%%%%%
\\
{\bf Exploiting structural constraints.} We develop a novel active learning algorithm $\learnhm$ that substantially reduces the sample complexity by exploiting the structural constraints of the hemimetric polytope. We provide tight theoretical guarantees on the sample complexity of the proposed algorithm for several natural problem instances.
%%%%%%%%%%%%%%%%%%
\\
{\bf Practical extensions.} Our algorithm extends to various important practical settings, including: \emph{(i)} the online setting where the $\num$ items are not known beforehand, and rather appear over time, and \emph{(ii)} the noisy setting that reflects the stochastic nature of acceptance of offers by the users.
\section{Problem Statement}\label{sec.model}
We now formalize the problem addressed in this paper.
%%%
{\bf Items $\itemset$.}
There are $\num$ items (or types of items), denoted by the set $\itemset=\{1, 2, \ldots, \num\}$.  For instance, in a restaurant recommendation system such as \emph{Yelp}, $\itemset$ could consist of types of restaurants distinguished by high-level attributes such as cuisine, locality, reviews and so on. 
\\
{\bf Hemimetric $\hypstar$.}
Let $\hypclass$ be the set of bounded {\em hemimetrics}, \emph{i.e.}, matrices $\D\in\mathbb{R}^{\num \times \num}$ that satisfy
\begin{align}
\D_{i,i} &= 0 \quad \forall \ i \in [n],\\
\D_{i,j} &\geq 0 \quad \forall \ i, j \in [n],\\
\D_{i,j} &\leq \range \quad \forall \ i, j \in [n], \textrm{ and } \\
\D_{i,j} &\leq \D_{i,k} + \D_{k,j} \quad \forall \ i, j, k \in [n], \label{eq.triangle.inequality}
\end{align}
where $[n] = \{1, 2, \ldots, n\}$ and $\range$ is the upper bound on the value.
We assume that user preferences are represented by an underlying \emph{unknown} hemimetric $\hypstar \in \hypclass$. The (asymmetric) distance $\hypstar_{i,j}$ quantifies the private costs of the user for switching from item $i$ to $j$. 
Our goal is to learn $\hypstar$ via interactions with the users, without assuming any knowledge of the underlying attributes that affect the user preferences. 
%%%
\\
{\bf User query.}
A query $\querypoint \in \queryset \coloneqq \{ (i,j,\cost) \mid i,j \in [n], \cost \in [0, r] \}$ to the user is characterized by the tuple $(i,j,\cost)$, where item $i$ denotes the choice of the user, item $j$ denotes the alternative suggested by the algorithm, and $\cost$ denotes the monetary incentives offer. Note that the notion of \emph{user} as used in this paper is rather generic, and could correspond to one single user or a crowd / cohort of users. 
\\
{\bf User response.}
The response to a query (also called label) is denoted as $\labelpoint = \labelfunc(x)$, where $\labelfunc \colon \queryset \mapsto \{0, 1\}$. A label $\labelpoint=1$ indicates acceptance, while $\labelpoint=0$ indicates rejection.
We denote a labeled datapoint by $\datapoint = (\querypoint, \labelpoint)$, where $\querypoint \in \queryset$, and $\labelpoint = \labelfunc(\querypoint)$. In our setting the acceptance function is stochastic. It is characterized by $\prob(\labelfunc(\querypoint)=\labelpoint)$ and is required to satisfy the following two mild conditions related to the decision boundary at $\hypstar_{i,j}$ and monotonicity: 
\vspace{1mm} 
\begin{align*}
  &\quad \prob(\labelfunc((i,j,\cost))=1) \geq 0.5 \quad \textrm{iff } \cost \geq \hypstar_{i, j}, \textrm{ and}\\
  &\quad \prob(\labelfunc((i,j,\cost))=1) \geq \prob(\labelfunc((i,j,\cost'))=1) \quad \textrm{for } \cost \geq \cost'.
\vspace{2mm}
\end{align*}
For ease of exposition of our main results, we focus on a deterministic noise-free setting in the main paper --- treatment of the (more realistic) stochastic acceptance function is presented in Appendix~\ref{app.algoPart4-Extensions-Noise}. %the extended version of this paper~\cite{singla16hemimetric-extended}. 
In this noise-free setting, the acceptance function reduces to the threshold function
\vspace{1mm}
\begin{align}
\labelfunc((i,j,\cost)) = 
\begin{cases}
1 & \text{ if } \cost \geq \hypstar_{i, j}, \\
0 & \text{ otherwise. } 
\end{cases}
\label{eq.querylabel.D}
\end{align}
\vspace{1mm}
%\vspace{-3mm}
\\
%%%
{\bf Objective.}
This paper is concerned with the sample complexity, {\em i.e.}, the number of queries required for learning the unknown hemimetric $\hypstar$. 
We consider a PAC-style setting, \emph{i.e.}, we aim to design an algorithm that, given positive constants $(\epsilon, \delta)$, determines a hemimetric $\hyphat \in \hypclass$, such that with probability at least $1 - \delta$ it holds that  
%\vspace{-2mm}
\vspace{1mm}
\begin{align}
  \| \hyphat - \hypstar \|_{\infty} \leq \epsilon, \label{eq.objectiveInfinityNorm}
%\vspace{-3mm}
\end{align}
\emph{i.e.}, $|\hyphat_{i,j} - \hypstar_{i,j}| \leq \epsilon \quad \forall \ i, j \in [\num]$.

Our objective is to achieve the desired $(\epsilon, \delta)$-PAC guarantee while minimizing the number of user queries.

\section{Warmup: Overview of our Approach}\label{sec.algoPart1-Ideas}
%\vspace{-2mm}
We now present the high-level ideas behind our approach. 

%\vspace{-3mm}
\subsection{Independent Learning: \naive} \label{sec.algoPart1-Ideas.naive}
%\vspace{-2mm}
One possible way to tackle our learning problem is to learn each of the $\num^2$ pairwise distances independently. Let us fix a particular pair of items $(i,j) \in [\num]^2$. Given the query modality considered in our framework, the goal of learning the distance $\hypstar_{i,j}$ up to precsion $\epsilon$ is equivalent to learning a threshold function in the active-learning setting \cite{castro2006upper,settles2012active}. In terms of sample complexity, this is most effectively achieved by perfoming a binary-search over the range $[0, \range]$. More formally, at iteration $t=0$, we initialize a lower bound of $\hypstar_{i,j}$ to $L^t_{i,j} = 0$ and an upper bound to $U^t_{i,j} = \range$. At any $t > 0$, we pick a value $\cost^t =  \tfrac{1}{2} ( L^{t-1}_{i,j} + U^{t-1}_{i,j})$ and issue the query $\querypoint^t = (i, j, \cost^t)$. Then, based on the returned label $\labelpoint^t$, we update $U^{t}_{i, j} = \cost^t$ if $\labelpoint^t = 1$, otherwise we update $L^{t}_{i, j} = \cost^t$ if $\labelpoint^t = 0$. We continue querying until $(U^{t}_{i,j} - L^{t}_{i,j}) \leq \epsilon$, and then output any $\Dhat_{i,j} \in [L^t_{i,j}, U^t_{i,j}]$ as the estimated distance. The number of queries required in the noise-free setting is given by $\lceil\log(\frac{\range}{\epsilon})\rceil$. As there are $\num^2$ pairwise distance learning problems, the total sample complexity of this approach is  $\num^2 \lceil\log(\frac{\range}{\epsilon})\rceil$. \\
%%%
Such an algorithm, based on independently learning the pairwise distances, also needs to pick a pair $(i^t, j^t)$ to query at iteration $t$. One policy inspired by uncertainty sampling~\cite{settles2012active} is to pick the pair $(i^t, j^t)$ with maximum uncertainty quantified by $(  U^{t-1}_{i,j} - L^{t-1}_{i,j} )$. This policy can also be seen as to greedily minimize our objective stated in Equation~\ref{eq.objectiveInfinityNorm}. We call this query policy \qgreedy. At any iteration, it issues the query $\querypoint^t = (i^t, j^t, \cost^t)$ according to
\vspace{2mm}
\begin{align}
(i^t, j^t) &=  \argmax_{(i, j) \in  [\num]^2} (  U^{t-1}_{i,j} - L^{t-1}_{i,j} ), \textrm{ and} \label{eq.qgreedy1}\\
\cost^t &=  \tfrac{1}{2} (L^{t-1}_{i^t,j^t} + U^{t-1}_{i^t,j^t} ). \label{eq.qgreedy2}
\vspace{6mm}
\end{align}
We refer to the independent learning algorithm employing query policy \qgreedy as \naive.

%%%%%%%%%%%%%%%%%%%%%%%%%%%%%%%%%%%%%%%%%%%%%%%%%%%%%%%%%
%%%%%%%%%%%%%%%%%%%%%%%%%%%%%%%%%%%%%%%%%%%%%%%%%%%%%%%%%
%\vspace{-4mm}
\subsection{Exploiting Structural Constraints: \learnhm} \label{sec.algoPart1-Ideas.our}
%\vspace{-2mm}
%%%%%%
We now present our algorithm $\learnhm$ in Algorithm~\ref{alg.learnhm}\footnote{Algorithm~\ref{alg.learnhm} reduces to \naive if $\qclique$ is replaced by $\qgreedy$ and $\luproj(\widetilde{L}^{t}, \widetilde{U}^{t})$ simply returns $\widetilde{L}^{t}, \widetilde{U}^{t}$.}. Our algorithm depends on three functions $\luproj$, $\qclique$ and $\getuserresponse$. A high-level description of these functions is given as follows:
\\
{\bf $\luproj$} shrinks the search space of hemimetrics by updating the lower and upper bounds from $\widetilde{L}^t, \widetilde{U}^t$ to $L^t, U^t$. Details are provided in Section~\ref{sec.algoPart2-LUBounds}. 
\\
{\bf $\qclique$} is the query policy that determines the next query $\querypoint^t = (i^t, j^t, \cost^t)$ at iteration $t$ given the current state of the learning process as determined by $L^{t-1}$ and $U^{t-1}$. Details are provided in Section~\ref{sec.algoPart3-Query}.
\\
{\bf $\getuserresponse$} returns the label $\labelpoint^t$ for query $\querypoint^t$. In the deterministic noise-free setting, this label is determined by Equation~\ref{eq.querylabel.D}. We also develop a robust noise-tolerant variant of $\getuserresponse$ for the stochastic setting, discussed in Appendix~\ref{app.algoPart4-Extensions-Noise}. % the extended version of this paper~\cite{singla16hemimetric-extended}.

%%%%%%%%%%%%%%%%%%%%%%%%%%%%%%%%%%%%%%%%%%%%%%%%%%%%%%%%%
\begin{algorithm}[!t]
	\caption{Our Algorithm: \learnhm}\label{alg.learnhm}
\begin{algorithmic}[1]
	\STATE{{\bfseries Input:} \looseness -1 set $\itemset$ of $\num$ items, range $\range$, error parameters $(\epsilon, \delta)$}
	\STATE{{\bfseries Output:} hemimetric $\Dhat$}
	\STATE{{\bfseries Initialize:} \\
		iteration $t = 0$; labeled data $\dataset^t = \emptyset$ \\
    	 	lower bounds: $L^t_{i, j} = 0 \ \forall \ i, j \in [\num]$ \\
    	   	upper bounds: $U^t_{i, j} = \range \ \forall \ i, j \in [\num]$ \\ 
    	   	\qquad   \qquad    \qquad $U^t_{i, i} = 0 \ \forall \ i \in  [n]$
    	   	}
	\WHILE{$\exists i,j\colon (U^t_{i,j} - L^t_{i,j}) > \epsilon $}
		\STATE{$t = t + 1$}	
   		\STATE{$\querypoint^t = (i^t, j^t, \cost^t) \gets \qclique(L^{t-1}, U^{t-1})$} %\querypoint^t
   		\STATE{$\datapoint^t = ((i^t, j^t, \overline{\cost}^t), \labelpoint^t)  \gets  \getuserresponse(\querypoint^t)$ \mbox{\; \; \; \; \; \; \;}\emph{// $\overline{\cost}^t=\cost^t$ in the noise-free setting}}
   		\STATE{$\widetilde{U}^{t} = U^{t-1}, \widetilde{L}^{t} = L^{t-1}$}
    		\IF {$\labelpoint^t = 1$}
	   		\STATE update $\widetilde{U}^{t}_{i^t, j^t} = \overline{\cost}^t$
	   	\ELSE
	  		\STATE update $\widetilde{L}^{t}_{i^t,j^t} = \overline{\cost}^t$
		\ENDIF   		
   		\STATE{$L^{t}, U^{t} \gets  \luproj(\widetilde{L}^{t}, \widetilde{U}^{t})$}
   		\STATE{$\dataset^{t}  = \dataset^{t-1} \cup \{\datapoint^t\}$}
   	\ENDWHILE
   	\STATE{$\Dhat \gets U^t$}
%   	\STATE{$\hypstar \gets \hproj(L^t, U^t)$}
 	\STATE{{\bfseries Return:} hemimetric $\Dhat$}
\end{algorithmic}
\end{algorithm}
\vspace{1mm}
\section{\luproj: Updating Bounds}\label{sec.algoPart2-LUBounds}
%\vspace{-1mm}
We now present the details of our function \luproj. The proof of Theorem~\ref{thm.projection} is given in Appendix~\ref{app.thm.projection} and the proof of Theorem~\ref{thm.luprojalgo} is given in Appendix \ref{app.thm.luprojalgo}. % extended version of this paper~\cite{singla16hemimetric-extended}.
% --- an instance of \updatebounds.
%%%
%%%%%%%%%%%%%%%%%%%%%%%%%%%%%%%%%%%%%%%%%%%%%%%%%%%%%%%%%
%\vspace{-2mm}
\subsection{Valid Bounds}
%\vspace{-2mm}
We begin by defining minimal conditions for the lower and upper bounds returned by \luproj to be \emph{valid} in terms of the version space. Let us start by formally defining the version space for our setting.
In Algorithm~\ref{alg.learnhm}, the labeled data at iteration $t$ is given by $\dataset^t = \{\datapoint^1, \ldots, \datapoint^t\}$ where $\datapoint^l = (\querypoint^l, \labelpoint^l)$ for $l \in [t]$. Then, the version space at iteration $t$ is defined as 
\vspace{-1mm}
\begin{align}
\hypclass^t   \coloneqq \{\hyp \in \hypclass \mid \forall \ l \in [t]\colon \hyp(\querypoint^l) = \labelpoint^l \},
\vspace{-1mm}
\end{align}
where $\hyp(x^l) = \indfunc( c^l \geq \hyp_{i^l,j^l} )$; here $\indfunc(\cdot)$ denotes the indicator function. That is, $\hypclass^t \subseteq \hypclass$ is the set of hemimetrics  at iteration $t$ that are consistent with the labeled data $\dataset^t$. \\
%%%
Also, for given lower bounds $L$ and upper bounds $U$, we define the set of hemimetrics satisfying these bounds as 
\vspace{-1mm}
\begin{align}
\hypclass(L, U)  \coloneqq \{\hyp \in \hypclass \ | \  L \leq  \hyp \leq  U\},
\vspace{-1mm}
\end{align}
where the inequalities are understood component-wise, {\em i.e.},  $L_{i, j} \leq  \hyp_{i, j} \leq  U_{i, j} \; \forall \ i, j \in [\num]$.
The bounds $L^t,U^t$ at iteration $t$ are \emph{valid}, iff $\hypclass(L^t, U^t) \supseteq  \hypclass^t$. Validity of the bounds ensures that $\hypstar$ is always contained in $\hypclass(L^t, U^t)$.

%%%%%%%%%%%%%%%%%%%%%%%%%%%%%%%%%%%%%%%%%%%%%%%%%%%%%%%%%
%\vspace{-2mm}
\subsection{Updating Bounds via Projection}
%\vspace{-2mm}
We formalize the problem of obtaining the bounds $L^t,U^t$ as the solution of the following optimization problem:
%\vspace{-1mm}
\begin{align}
  \min_{U, L} \quad &\| U - L \|_{\normsize} \tag{P1} \label{eq.jointprojection}\\
   \textnormal{s.t.} \quad & \hypclass(L, U) \supseteq \hypclass^t, \nonumber 
%\vspace{-1mm}
\end{align}
where the entry-wise $\ell_{\normsize}$-norm of a matrix is defined as $\|M\|_1 = \sum_{i,j} |M_{i,j}|$.
The intuitive idea behind this problem is to decrease the gap between upper and lower bounds as much as possible while ensuring that the resulting bounds are valid.\\
%%%
It turns out, \emph{cf.} Theorem~\ref{thm.projection}, that Problem~\ref{eq.jointprojection} can be solved in a two step process by solving the following two problems:
\begin{align}
&U^{*t} = \argmin_{U \in \ \uclass \text{ s.t. } U \leq  \widetilde{U}^{t}} \norm{U - \widetilde{U}^{t}}_{\normsize}  \tag{P2} \label{sec.algoPart2-LUBounds.proj.5} \\
&L^{*t} = \argmin_{L \in \ \lclass(U^{*t}) \text{ s.t. } L \geq  \widetilde{L}^{t}} \norm{L - \widetilde{L}^{t}}_{\normsize} \tag{P3} \label{sec.algoPart2-LUBounds.proj.6},
\end{align}
where $\widetilde{L}^t, \widetilde{U}^t$ are obtained by Algorithm~\ref{alg.learnhm} in lines 8--12.
% is defined as \ak{$\uclass$ is same as $\hypclass$, right? this is confusing. why not just drop $\uclass$?}
%\begin{align}
%\uclass := \{U \mid \; &\forall \ i, j, k \in [n] \colon U_{i, i} = 0, \; 0 \leq U_{i,j} \leq \range, \textnormal{ and}  \notag \\
%					  &U_{i,j} \leq U_{i,k} + U_{k,j} \}. \label{sec.algoPart2-LUBounds.proj.1}
%\end{align}
Here, the set $\lclass$ parameterized by the upper bound matrix $U \in \uclass$ is defined as\vspace{2mm}
%\vspace{-2mm}
\begin{align}
&\!\!\lclass(U) := \{L \in \R^{\num \times \num} \mid L_{i, i} = 0, \; 0 \leq L_{i,j} \leq U_{i,j}, \;\;\; \label{sec.algoPart2-LUBounds.proj.2} \\					   	
						& \; L_{i, j} \geq \max \left( L_{i, k} - U_{j, k}, L_{k, j} - U_{k, i} \right) \;\; \forall \ i, j, k \in [\num]  \}. \notag
%\vspace{5mm}
\end{align}\\[-2mm]
Additional details and a formal development of these sets are given in Appendix~\ref{app.classLU}. %the extended version of this paper~\cite{singla16hemimetric-extended}. 
While the set of upper bound matrices corresponds to the set $\hypclass$ of bounded hemimetrics, the set $\lclass(U)$ represents more complex dependencies. It turns out that the set $\lclass$ cannot be constrained to contain only hemimetrics. In fact, we provide a counter-example in Appendix~\ref{app.classLU}. % the extended version of this paper~\cite{singla16hemimetric-extended}.
We can now state one of our main theoretical results:
%\vspace{-1mm}
\begin{theorem} \label{thm.projection}
The optimal solution of Problem~\ref{eq.jointprojection} is unique and is given by $L^{*t}, U^{*t}$ (defined in Problems~\ref{sec.algoPart2-LUBounds.proj.6} and ~\ref{sec.algoPart2-LUBounds.proj.5}).
% Moreover, it holds that $\hypclass(L^{*t}, U^{*t}) =  \hypclass^t$.
\end{theorem}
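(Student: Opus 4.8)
\emph{Proof proposal.}
The plan is to show that both the feasibility constraint and the objective of~\ref{eq.jointprojection} \emph{decouple} into an ``upper'' half and a ``lower'' half, and then to identify the optimal half-solutions with exactly what~\ref{sec.algoPart2-LUBounds.proj.5} and~\ref{sec.algoPart2-LUBounds.proj.6} compute. Write $U^\circ_{i,j} \coloneqq \sup_{\hyp \in \hypclass^t}\hyp_{i,j}$ and $L^\bullet_{i,j} \coloneqq \inf_{\hyp \in \hypclass^t}\hyp_{i,j}$; these are attained and unchanged if $\hypclass^t$ is replaced by its closure, and one first records the ``interface'' identity $\closure(\hypclass^t) = \hypclass \cap \{\hyp : \widetilde{L}^t \le \hyp \le \widetilde{U}^t\}$. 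The inclusion ``$\subseteq$'' holds because each observed label is a box constraint on a single entry, reflected at least as tightly in the bounds $\widetilde{L}^t,\widetilde{U}^t$ maintained by Algorithm~\ref{alg.learnhm}; the reverse holds because $\hypstar$ satisfies every $\negr$-response constraint \emph{strictly} (Eq.~\ref{eq.querylabel.D}), so any hemimetric in the closed box can be slid along the segment toward $\hypstar$ to enter $\hypclass^t$. For the decoupling itself: since $\hypstar \in \hypclass^t \ne \emptyset$, every feasible $(L,U)$ has $L \le U$, hence $\|U-L\|_{\normsize} = \sum_{i,j}(U_{i,j}-L_{i,j})$; and $\hypclass(L,U)\supseteq\hypclass^t$ says exactly that $L_{i,j}\le\hyp_{i,j}\le U_{i,j}$ for all $\hyp\in\hypclass^t$, i.e.\ $L\le L^\bullet$ and $U\ge U^\circ$ componentwise. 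So the feasible region of~\ref{eq.jointprojection} is $\{L\le L^\bullet\}\times\{U\ge U^\circ\}$, on which the objective is \emph{uniquely} minimized at $(L^\bullet,U^\circ)$ (raising any coordinate of $U$, or lowering any coordinate of $L$, strictly increases it). It therefore suffices to prove $U^{*t}=U^\circ$ and $L^{*t}=L^\bullet$.

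\emph{Upper bounds.} Let $\overline U$ be the all-pairs-shortest-path ``hemimetric closure'' of $\widetilde{U}^t$. Then $\overline U\in\uclass$ (non-negative, triangle-closed, zero diagonal, bounded by $\widetilde{U}^t\le\range$), $\overline U\le\widetilde{U}^t$, and every hemimetric $U\le\widetilde{U}^t$ has each entry dominated by every walk weight, hence $U\le\overline U$; so $\overline U$ is the componentwise-largest feasible point of~\ref{sec.algoPart2-LUBounds.proj.5} and therefore its unique $\ell_{\normsize}$-minimizer, giving $U^{*t}=\overline U$. Moreover every $\hyp\in\hypclass^t$ is a hemimetric $\le\widetilde{U}^t$, hence $\le\overline U$, so $\overline U\ge U^\circ$; and $\hypstar\le\overline U$ yields $\widetilde{L}^t\le\hypstar\le\overline U$, so $\overline U\in\hypclass\cap\{\widetilde{L}^t\le\cdot\le\widetilde{U}^t\}=\closure(\hypclass^t)$ and thus $\overline U\le U^\circ$. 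Hence $U^{*t}=\overline U=U^\circ$.

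\emph{Lower bounds (the crux).} From the defining inequalities of $\lclass(U)$ (Eq.~\ref{sec.algoPart2-LUBounds.proj.2}) one checks that $\lclass(U^{*t})$ is closed under componentwise minimum; since $\{L\in\lclass(U^{*t}) : L\ge\widetilde{L}^t\}$ is closed, bounded below, and nonempty (it contains $U^{*t}$, which lies in $\lclass(U^{*t})$ by the triangle inequality for $U^{*t}$ and satisfies $U^{*t}=U^\circ\ge\hypstar\ge\widetilde{L}^t$), it has a componentwise-least element $L^{\min}$, which is the unique solution of~\ref{sec.algoPart2-LUBounds.proj.6}: $L^{*t}=L^{\min}$. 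For ``$L^{\min}\le L^\bullet$'': each $\hyp\in\hypclass^t$ satisfies $\hyp\in\lclass(U^{*t})$ (combine $\hyp_{i,j}\ge\hyp_{i,k}-\hyp_{j,k}$ with $\hyp_{j,k}\le U^{*t}_{j,k}$, symmetrically the other term) and $\hyp\ge\widetilde{L}^t$, so $L^{\min}\le\hyp$; take the infimum. The new direction ``$L^{\min}\ge L^\bullet$'' is obtained by a \emph{witness construction}: for each $(a,b)$ let $\hyp^{(a,b)}$ be the hemimetric closure of the weight matrix equal to $U^{*t}$ except with entry $(a,b)$ lowered to $L^{\min}_{a,b}$. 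No $a\rightsquigarrow b$ walk avoiding the lowered edge beats $U^{*t}_{a,b}\ge L^{\min}_{a,b}$, so $\hyp^{(a,b)}_{a,b}=L^{\min}_{a,b}$; and $\hyp^{(a,b)}\ge\widetilde{L}^t$ since a walk $x\rightsquigarrow a\to b\rightsquigarrow y$ through the lowered edge has weight at least $U^{*t}_{x,a}+L^{\min}_{a,b}+U^{*t}_{b,y}\ge L^{\min}_{x,b}+U^{*t}_{b,y}\ge L^{\min}_{x,y}\ge\widetilde{L}^t_{x,y}$ by applying the $\lclass(U^{*t})$-inequalities to $L^{\min}$ twice (walks avoiding it are bounded below by $U^{*t}_{x,y}\ge\widetilde{L}^t_{x,y}$ as in the upper-bound step). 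Also $\hyp^{(a,b)}\le\widetilde{U}^t$, so $\hyp^{(a,b)}\in\hypclass\cap\{\widetilde{L}^t\le\cdot\le\widetilde{U}^t\}=\closure(\hypclass^t)$, whence $L^\bullet_{a,b}\le\hyp^{(a,b)}_{a,b}=L^{\min}_{a,b}$. Therefore $L^{*t}=L^{\min}=L^\bullet$, and by the decoupling step $(L^{*t},U^{*t})$ is the unique optimum of~\ref{eq.jointprojection}.

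\emph{Where the difficulty lies.} Everything but the lower-bound paragraph is bookkeeping; the real obstacle is the witness construction and the verification --- via the $\lclass(U^{*t})$-inequalities --- that lowering a single entry of $U^{*t}$ and re-closing never drops below $\widetilde{L}^t$, i.e.\ that the lower-bound propagation encoded by $\lclass$ is not merely sound but \emph{complete}. Two further points require (routine but non-trivial) care: the $\hypclass^t$-versus-$\closure(\hypclass^t)$ interface identity, which relies on $\negr$-responses certifying strict inequalities, and the existence of a componentwise-least element of $\lclass(U^{*t})\cap\{L\ge\widetilde{L}^t\}$, which uses closure of $\lclass(U^{*t})$ under componentwise minimum.
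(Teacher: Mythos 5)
Your proposal is correct, and it reaches the result by a genuinely different route on the one step that actually matters. The paper proceeds through three lemmas (Lemmas~\ref{lemma.projection.valid1}--\ref{lemma.projection.valid3}): it shows that the elementwise maximum $\widetilde{\hyp}^t_{\max}$ and minimum $\widetilde{\hyp}^t_{\min}$ of the version space are feasible for Problems~\ref{sec.algoPart2-LUBounds.proj.5} and~\ref{sec.algoPart2-LUBounds.proj.6}, and then identifies them with $U^{*t}$ and $L^{*t}$ via max-/min-combination contradiction arguments, before assembling the optimum of Problem~\ref{eq.jointprojection} by a final case analysis. Your decoupling of~\ref{eq.jointprojection} into $\{L \le L^\bullet\}\times\{U \ge U^\circ\}$ replaces that final case analysis with a one-line observation, and your shortest-path characterization of $U^{*t}$ imports up front what the paper only establishes later (Lemma~\ref{lemma.luprojalgo.convergeU} in the proof of Theorem~\ref{thm.luprojalgo}). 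The substantive divergence is in the lower-bound step. The paper's Lemma~\ref{lemma.projection.valid3} establishes \emph{soundness} (the elementwise minimum of the version space lies in $\lclass(U^{*t})$) and then argues $L^{*t}=\widetilde{\hyp}^t_{\min}$ by taking componentwise minima; as written, that argument does not by itself exclude a feasible point of~\ref{sec.algoPart2-LUBounds.proj.6} lying strictly below $\widetilde{\hyp}^t_{\min}$ in some coordinate, which is precisely the \emph{completeness} question you isolate. Your witness construction --- lower a single entry of $U^{*t}$ to $L^{\min}_{a,b}$, take the shortest-path closure, and verify via the two $\lclass(U^{*t})$-inequalities $L_{x,b}\le L_{a,b}+U^{*t}_{x,a}$ and $L_{x,y}\le L_{x,b}+U^{*t}_{b,y}$ that the result never drops below $\widetilde{L}^t$ --- resolves it constructively, and I have checked that the inequality chain matches the constraints in Equation~\ref{sec.algoPart2-LUBounds.proj.2} (after restricting to simple paths, which use the lowered arc at most once). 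This buys a self-contained certificate that every entry of $L^{*t}$ is attained by an actual hemimetric in $\closure(\hypclass^t)$, something the paper obtains only implicitly. One point you should make explicit: the interface identity $\closure(\hypclass^t)=\hypclass(\widetilde{L}^t,\widetilde{U}^t)$ is not a consequence of the recorded labels alone, because $\widetilde{L}^t,\widetilde{U}^t$ are built on top of the \emph{projected} bounds from iteration $t-1$; the inclusion $\closure(\hypclass^t)\subseteq\hypclass(\widetilde{L}^t,\widetilde{U}^t)$ is exactly the validity of those earlier projections and must be carried as an inductive invariant across iterations (this is the role of the hypothesis in the paper's Lemma~\ref{lemma.projection.valid1}, discharged by the equalities $\hypclass(L^{*t},U^{*t})=\hypclass(\widetilde{L}^{t},\widetilde{U}^{t})$ at each step). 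With that induction stated, your argument is complete.
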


% Note that this theorem conveys two further interesting facts: \emph{(i)} there exist $\hat{L},\hat{U}$ such that $\hypclass(\hat{L},\hat{U}) \equiv \hypclass^t$; and \emph{(ii)} these $\hat{L},\hat{U}$ correspond to the optimal solution of Problem~\ref{eq.jointprojection}.
%; and \emph{(iii)} the solution of Problem~\ref{eq.jointprojection} is unique \ak{already stated in theorem. This paragraph is really repetitive. Drop?}.
% The proof of the Theorem is given in Appendix~F.

%%%%%%%%%%%%%%%%%%%%%%%%%%%%%%%%%%%%%%%%%%%%%%%%%%%%%%%%%%%%%%%%%%%%%%%%%%
\begin{figure}[t!]
\centering
\includegraphics[width=\linewidth]{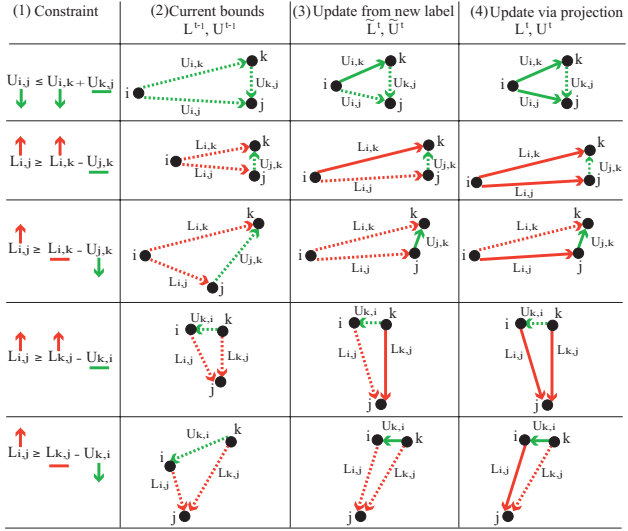}
\vspace{-7mm}
\caption{Geometric interpretation of the effect of exploiting the constraints in \lproj and \uproj ---
each row illustrates this for a particular constraint from the definitions of the sets $\uclass$ (Equation~\ref{eq.triangle.inequality}) and $\lclass$ (Equation~\ref{sec.algoPart2-LUBounds.proj.2}). Column $(1)$ shows the constraint, column $(2)$ shows the current lower and upper bounds ($L^{t-1}, U^{t-1}$ in Algorithm~\ref{alg.learnhm}), column $(3)$ shows an update from the new labeled datapoint ($\widetilde{L}^{t}, \widetilde{U}^{t}$ in Algorithm~\ref{alg.learnhm}), and column $(4)$ shows the effect of the corresponding constraint ($L^{t}, U^{t}$ in Algorithm~\ref{alg.learnhm}). For instance, in row 3, we consider the constraint $L_{i,j} \geq L_{i,k} - U_{j,k}$; after receiving a new label, $U_{j,k}$ decreases; this in turn can lead to an increase of $L_{i,j}$.} 
%\vspace{-3mm}
%the definitions of the sets $\uclass$ (Equation~\ref{eq.triangle.inequality}) and $\lclass$ (Equation~\ref{sec.algoPart2-LUBounds.proj.2}). 
%%%%%%%%%
\label{fig.fig-lubounds}
\end{figure}
%%%%%%%%%%%%%%%%%%%%%%%%%%%%%%%%%%%%%%%%%%%%%%%%%%%%%%%%%%%%%%%%%%%%%%%%%%

%%%%%%%%%%%%%%%%%%%%%%%%%%%%%%%%%%%%%%%%%%%%%%%%%%%%%%%%%
%\vspace{-1mm}
\subsection{Function \luproj: Tightening Bounds}

We now present an efficient solver for the optimization Problem~\ref{eq.jointprojection} given by the function \luproj in Algorithm~\ref{alg.luproj}. The algorithm is invoked with inputs $\widetilde{L}^{t}$, $\widetilde{U}^{t}$ --- its optimality is ensured by the following theorem. 

\begin{theorem} \label{thm.luprojalgo}
The lower and upper bounds $L^t, U^t$ returned by \luproj is the unique optimal solution of Problem~\ref{eq.jointprojection}. 
\end{theorem}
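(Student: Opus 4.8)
The plan is to lean on Theorem~\ref{thm.projection}, which already pins down the unique optimum of Problem~\ref{eq.jointprojection} as the pair $(L^{*t},U^{*t})$ obtained by solving Problem~\ref{sec.algoPart2-LUBounds.proj.5} and then Problem~\ref{sec.algoPart2-LUBounds.proj.6}. Hence it suffices to show that \luproj returns exactly $U^t=U^{*t}$ and $L^t=L^{*t}$, and I would do this phase by phase. For the upper-bound phase, the key structural fact is that the feasible set $\{U\in\uclass : U\le\widetilde U^t\}$ is closed under entry-wise maximum: the constraints $U_{i,i}=0$, $0\le U_{i,j}$ and $U_{i,j}\le\range$ are obviously preserved, and if $U=\max(U^{(1)},U^{(2)})$ and, say, $U_{i,j}=U^{(1)}_{i,j}$, then $U_{i,j}\le U^{(1)}_{i,k}+U^{(1)}_{k,j}\le U_{i,k}+U_{k,j}$, so the triangle inequalities survive. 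Being also compact, this set has a greatest element, and since every feasible $U$ lies below $\widetilde U^t$ we have $\|U-\widetilde U^t\|_1=\sum_{i,j}(\widetilde U^t_{i,j}-U_{i,j})$, so minimizing it is the same as picking that greatest element; thus $U^{*t}$ is the largest bounded hemimetric dominated by $\widetilde U^t$. One then checks this greatest element is precisely the all-pairs ``shortest-path'' closure of $\widetilde U^t$ with the diagonal forced to $0$ (it is $\le\widetilde U^t$, satisfies the triangle inequalities by path concatenation, automatically respects the cap $\range$ since $\widetilde U^t\le\range$ entry-wise, and dominates every hemimetric $\le\widetilde U^t$ by iterating Equation~\ref{eq.triangle.inequality} along a path), and that the \uproj step inside \luproj computes exactly this closure: each relaxation $U_{i,j}\leftarrow\min(U_{i,j},U_{i,k}+U_{k,j})$ keeps the iterate $\ge U^{*t}$ (because $U^{*t}_{i,j}\le U^{*t}_{i,k}+U^{*t}_{k,j}\le U_{i,k}+U_{k,j}$), the iterate is non-increasing and bounded, it terminates after a Floyd--Warshall-style sweep, and at termination no relaxation applies, so the triangle inequalities hold and the limit is a feasible hemimetric that both lies below $\widetilde U^t$ and dominates $U^{*t}$, hence equals $U^{*t}$.

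The lower-bound phase is the mirror image, now with $U=U^{*t}$ fixed. Here $\{L\in\lclass(U^{*t}) : L\ge\widetilde L^t\}$ is closed under entry-wise minimum: $L_{i,i}=0$ and $0\le L_{i,j}\le U^{*t}_{i,j}$ are preserved, and if $L=\min(L^{(1)},L^{(2)})$ with $L_{i,j}=L^{(1)}_{i,j}$ then $L_{i,j}\ge L^{(1)}_{i,k}-U^{*t}_{j,k}\ge L_{i,k}-U^{*t}_{j,k}$, and symmetrically for the other term inside the $\max$ in Equation~\ref{sec.algoPart2-LUBounds.proj.2}. This set is nonempty (Problem~\ref{sec.algoPart2-LUBounds.proj.6} is feasible by Theorem~\ref{thm.projection}) and compact, so it has a least element, and since every feasible $L$ lies above $\widetilde L^t$, minimizing $\|L-\widetilde L^t\|_1$ again selects that least element; thus $L^{*t}$ is the smallest member of $\lclass(U^{*t})$ dominating $\widetilde L^t$. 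As before, the \lproj step performs updates $L_{i,j}\leftarrow\max(L_{i,j},L_{i,k}-U^{*t}_{j,k},L_{k,j}-U^{*t}_{k,i})$ starting from $\widetilde L^t$; this is non-decreasing, never exceeds $L^{*t}$ entry-wise by a dual monotonicity check (which in particular keeps the iterate $\le U^{*t}$, so the box constraint is never violated), terminates, and at its fixed point satisfies every defining inequality of $\lclass(U^{*t})$; being sandwiched between $\widetilde L^t$ and $L^{*t}$ it must equal $L^{*t}$. Combining the two phases yields $U^t=U^{*t}$ and $L^t=L^{*t}$, which is the unique optimum of Problem~\ref{eq.jointprojection} by Theorem~\ref{thm.projection}.

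The main obstacle is not the lattice structure or the reduction of the $\ell_1$ projection to an extremal feasible point --- these are routine --- but the careful bookkeeping around the propagation loops: making precise that a single Floyd--Warshall-style pass (respectively a bounded number of Bellman--Ford-style passes) already reaches the fixed point, that the iterates stay ``sandwiched'' so the diagonal and box constraints hold throughout, and that the lower-bound phase starts from a feasible point, i.e.\ $\widetilde L^t\le U^{*t}$ --- which holds because $\hypstar$ lies in the version space $\hypclass^t$, so $\widetilde L^t\le\hypstar\le U^{*t}$.
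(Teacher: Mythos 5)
Your overall architecture is sound and close to the paper's: reduce via Theorem~\ref{thm.projection} to showing that \uproj solves Problem~\ref{sec.algoPart2-LUBounds.proj.5} and \lproj solves Problem~\ref{sec.algoPart2-LUBounds.proj.6}. Your lattice argument identifying $U^{*t}$ as the greatest element of $\{U\in\uclass : U\le\widetilde U^t\}$ and $L^{*t}$ as the least element of $\{L\in\lclass(U^{*t}) : L\ge\widetilde L^t\}$ is correct and matches what the paper establishes in Lemmas~\ref{lemma.projection.valid2} and~\ref{lemma.projection.valid3} (there phrased as the element-wise max/min over $\hypclass(\widetilde L^t,\widetilde U^t)$). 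Your sandwiching invariant ($U^{*t}\le\text{iterate}\le\widetilde U^t$, resp.\ $\widetilde L^t\le\text{iterate}\le L^{*t}$) is a clean alternative to the paper's sort-the-indices contradiction argument and would indeed finish the proof \emph{once you know the returned matrices are feasible}, i.e.\ lie in $\uclass$ and $\lclass(U^t)$.

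That feasibility claim is exactly the gap, and it is the step you defer to ``careful bookkeeping.'' The algorithms are not iterated to a fixed point: each performs one triple-loop sweep over pivots $k=1,\dots,\num$, and an update with pivot $k$ can in principle re-violate a constraint involving an earlier pivot $k'<k$, so ``at termination no relaxation applies'' requires proof. For \uproj this is classical Floyd--Warshall correctness and could be cited or reproved by the standard induction --- this is the paper's Lemma~\ref{lemma.luprojalgo.convergeU}, whose invariant states that after processing pivot $k$ all triangle inequalities through pivots $1,\dots,k$ hold (and, additionally, that every entry is either unchanged or tight against some constraint). For \lproj, however, there is no shortest-path interpretation to fall back on: the update mixes a growing $L$ with a fixed $U^t$ via the two asymmetric terms $L_{i,k}-U^t_{j,k}$ and $L_{k,j}-U^t_{k,i}$, and establishing the analogous invariant is the bulk of the paper's argument (Lemma~\ref{lemma.luprojalgo.convergeL}, a lengthy multi-case induction; the paper explicitly notes the absence of a shortest-path analogue). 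Until that invariant is proved, ``at its fixed point [\lproj] satisfies every defining inequality of $\lclass(U^{*t})$'' is an assertion rather than a proof, and it is the crux of the theorem, not routine bookkeeping. The remaining pieces of your write-up (the starting-point feasibility $\widetilde L^t\le U^{*t}$ via $\hypstar$, and the monotone sandwiching) are fine.
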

%The proof of the Theorem is given in Appendix~G.

%
%%%%%%%%%%%%%%%%%%%%%%%%%%%%%%%%%%%%%%%%%%%%%%%%%%%%%%%%%
\begin{algorithm}[!t]
	\caption{Updating Lower \& Upper Bounds: \luproj}\label{alg.luproj}
\begin{algorithmic}[1]
	\STATE{{\bfseries Input:} \looseness -1 $\widetilde{L}^{t}, \widetilde{U}^{t}$; {\bfseries Output:} $L^{t}, U^{t}$}
   	\STATE{$U^{t} \gets  \texttt{\uproj}(\widetilde{U}^{t})$}
   	\STATE{$L^{t} \gets  \texttt{\lproj}(\widetilde{L}^{t}, U^{t})$}
   	\STATE{{\bfseries Return:} $L^{t}, U^{t}$}
\end{algorithmic}
\end{algorithm}
%%%%%%%%%%%%%%%%%%%%%%%%%%%%%%%%%%%%%%%%%%%%%%%%%%%%%%%%%
%
%%%%%%%%%%%%%%%%%%%%%%%%%%%%%%%%%%%%%%%%%%%%%%%%%%%%%%%%%
\begin{algorithm}[!t]
	\caption{Updating Upper Bounds: \uproj}\label{alg.uproj}
\begin{algorithmic}[1]
	\STATE{{\bfseries Input:} \looseness -1 $\widetilde{U}^{t}$; {\bfseries Output:} $U^{t}$}
  	%\STATE{}
  	 \STATE{{\bfseries Initialize:} $U^{t} =  \widetilde{U}^{t}$}
	\FOR{$k=1$ {\bfseries to}  $\num$}
		\FOR{$i=1$ {\bfseries to} $\num$}
				\FOR{$j=1$ {\bfseries to} $\num$}
					\STATE{ $U^{t}_{i, j} = \min {\left(U^{t}_{i, j}, U^{t}_{i, k} + U^{t}_{k, j}\right)}$ }
					%\IF {$U^{t+1}_{(i, j)} > U^{t+1}_{(i, k)} + U^{t+1}_{(j, k)}$}
  					%	\STATE {$U^{t+1}_{(i, j)} = U^{t+1}_{(i, k)} + U^{t+1}_{(j, k)}$}
  					%\ENDIF					
				\ENDFOR 
		\ENDFOR
	\ENDFOR
   	\STATE{{\bfseries Return:} $U^{t}$}
\end{algorithmic}
\end{algorithm}
%%%%%%%%%%%%%%%%%%%%%%%%%%%%%%%%%%%%%%%%%%%%%%%%%%%%%%%%%
%
%%%%%%%%%%%%%%%%%%%%%%%%%%%%%%%%%%%%%%%%%%%%%%%%%%%%%%%%%
\begin{algorithm}[!t]
	\caption{Updating Lower Bounds: \lproj}\label{alg.lproj}
\begin{algorithmic}[1]
	\STATE{{\bfseries Input:} \looseness -1 $\widetilde{L}^{t}, U^{t}$; {\bfseries Output:} $L^{t}$}
  	 \STATE{{\bfseries Initialize:} $L^{t} =  \widetilde{L}^{t}$}
	\FOR{$k=1$ {\bfseries to}  $\num$}
		\FOR{$i=1$ {\bfseries to} $\num$}
				\FOR{$j=1$ {\bfseries to} $\num$}
					\STATE{ $L^{t}_{i, j} = \max {\left(L^{t}_{i, j}, L^{t}_{i, k} - U^{t}_{j, k}, L^{t}_{k, j} - U^{t}_{k, i}\right)}$ }			
				\ENDFOR 
		\ENDFOR
	\ENDFOR
   	\STATE{{\bfseries Return:} $L^{t}$}
\end{algorithmic}
\end{algorithm}
%%%%%%%%%%%%%%%%%%%%%%%%%%%%%%%%%%%%%%%%%%%%%%%%%%%%%%%%%

We now briefly describe the function \luproj. It first invokes the function \uproj shown in Algorithm~\ref{alg.uproj} to compute $U^t$ --- which in fact equals the optimal solution of Problem~\ref{sec.algoPart2-LUBounds.proj.5} (refer to the proof of the theorem). Then, it invokes the function \lproj  shown in Algorithm~\ref{alg.lproj} to compute $L^t$ --- which in fact equals the optimal solution of Problem~\ref{sec.algoPart2-LUBounds.proj.6} (again, refer to the proof of the theorem).  Both $\uproj$ and $\lproj$ can be seen as iterating over the constraints of the sets $\uclass$ and $\lclass$, updating variables that violate constraints.  $\uproj$ is in fact equivalent to the Floyd-Warshall algorithm for solving the all-pair shortest paths problem in a graph~\cite{floyd1962algorithm}. 
%Since the class $\uclass$ is a class of hemimetrics, the decrease-only projection is equivalent to running this algorithm. 
Similar equivalence has been shown by~\citet{brickell2008metric} while studying the problem of projecting a non-metric matrix to a metric via decrease-only projections. 
The function $\lproj$ operates in similar fashion as $\uproj$. However, additional challenges in the interpretation and analysis of the solution of $\lproj$ arise from the fact that the class $\lclass$ is not a set of hemimetrics and has more complex dependencies.\\
Figure~\ref{fig.fig-lubounds} provides a geometric interpretation of the constraints imposed by the sets $\uclass$ (Equation~\ref{eq.triangle.inequality}) and $\lclass$ (Equation~\ref{sec.algoPart2-LUBounds.proj.2}) exploited by \uproj and \lproj in line~6.

%%%%%%%%%%%%%%%%%%%%%%%%%%%%%%%%%%%%%%%%%%%%%%%%%%%%%%%%%
%\vspace{-3mm}
\subsection{Geometric Interpretation of $L^{*t}$ and $U^{*t}$}
%\vspace{-3mm}

% A geometric interpretation of the constraints used to define the sets $\lclass$ and $\uclass$ is provided in Figure~\ref{fig.fig-lubounds}.

%{\bf Interpretation of the bounds $L^t$ and $U^t$.}  

In this section we provide a geometric interpretation of the optimal solution to Problem~\ref{eq.jointprojection}.
Consider the space $\R^{\num^2}$. Let us define $\pi^0$ to be the set of inequalities 
% $\{ \hyp_{(i,i)} = 0 \ \forall \ i \in [n]  \}$ $\cup$ $\{ \hyp_{(i,j)} \geq 0 \ \forall \ i, j \in [n] \}$ $\cup$ $\{\hyp_{(i,j)} \leq \hyp_{(i,k)} + \hyp_{(k,j)} \ \forall \ i, j, k \in [n]\}$ $\cup$ $\{\hyp_{(i,j)} \leq \range \ \forall \ i, j \in [n]\}$.
\begin{align*}
 \{ &\hyp_{i,i} = 0, 0 \leq \hyp_{i,j} \leq \range, \\
&\hyp_{i,j} \leq \hyp_{i,k} + \hyp_{k,j} \quad \forall \ i, j, k \in [n]\}.
\end{align*}
Thus, the subset of $\R^{\num^2}$ described by $\pi^0$ corresponds to the set of bounded hemimetrics.
% Before any query, at initialization $t=0$, let us denote the above polytope as $\Delta^0$, and the set of inequalities as $\pi^0$. 
At iteration $t$, we get a new labeled datapoint $\datapoint^t = ((i^t, j^t, \cost^t), \labelpoint^t )$ and update the set of inequalities as follows:
\begin{align*}
  \pi^{t} = \begin{cases}
       \pi^{t-1} \cup \{ \cost^t \geq \hyp_{i^t,j^t} \} & \text{ if } \labelpoint^t = 1, \\
       \pi^{t-1} \cup \{ \cost^t < \hyp_{i^t,j^t}\} & \text{ if } \labelpoint^t = 0.
   \end{cases}
\end{align*}
Now, consider the polytope $\Lambda^t$ defined by $\pi^t$ in $\R^{\num^2}$. Furthermore, consider the hypercube in $\R^{\num^2}$ described by any lower and upper bounds $L,U$ --- the extent of the hypercube in dimension $(i,j)$ is given by $[L_{i,j}, U_{i,j}]$.
Then, the optimal solution to Problem~\ref{eq.jointprojection} describes the {\em unique} tightest hypercube containing the polytope $\Lambda^t$.
\section{\qclique: Proposing Queries}\label{sec.algoPart3-Query}
%\vspace{-1mm}

We first show the limitations of a greedy myopic policy \qgreedy for proposing queries, and then design our query policy \qclique that overcomes these limitations. 

\subsection{Myopic Policy \qgreedy}

The policy \qgreedy is inspired by the idea of shrinking the  gap $(U^{t-1}_{i,j} - L^{t-1}_{i,j})$ of pair $(i^t, j^t)$ with maximum uncertainty. As already mentioned, this policy can be seen as myopic (greedy) in terms of minimizing the objective $\| \hyphat - \hypstar \|_\infty$. However, this policy may turn out to be suboptimal in terms of exploiting the structural constraints of hemimetrics.\\
%, even for the following {\it simple} problem instance. 
%\begin{example*}
In particular, consider a simple instance with $n$ items belonging to a single tight cluster, \emph{i.e.}, $\forall i, j \in [\num]\colon \hypstar_{i,j} = 0$. Clearly, given the $2n$ distances $\hypstar_{1,j}, \hypstar_{j,1}$ for $j \in [\num]$, one can infer all other distances because of the tightness of the triangle inequalities.
%\end{example*}
For this example, \naive will, in every iteration $t$, half the largest upper bound $\max_{i,j} U^t_{i,j}$. Thus, in iteration $t$, all upper bounds $U^t_{i,j}$ are in the range $[\sfrac{\alpha}{2}, \alpha]$ where $\alpha = \range \cdot 2^{-\lfloor \sfrac{t}{(\num^2 - \num)}\rfloor }$ and all lower bounds $L^t_{i,j}=0$. Here, invoking \luproj %at any iteration $t$, it could not 
cannot exploit the structural constraints, \emph{i.e.}, it would simply return $(\widetilde{L}^t, \widetilde{U}^t)$.

%complexity, even for simple problem instances. This is substantiated in the following lemma: 
%
%\begin{lemma}
%Assume a problem instance in which all $n$ items correspond to a single point, \emph{i.e.}, $\forall i, j \in [n]\colon D_{i,j} = 0$. For this instance, any algorithm using only triplet queries the policy 
%\end{lemma}
%\st{Introduce name for triplet queries.}

%%%%%%%%%%%%%%%%%%%%%%%%%%%%%%%%%%%%%%%%%%%%%%%%%%%%%%%%%

\subsection{Non-myopic Policy \qclique}

We now present an alternative policy \qclique that proposes pairs $(i^t,j^t)$ in a non-myopic way in Algorithm~\ref{alg.qclique}. Instead of greedily minimizing $\| \hyphat - \hypstar \|_\infty$, we aim to learn distances in a systematic way such that we can exploit the structural constraints of the hemimetrics more effectively. Note that in this section, we assume a fixed ordering of the items indexed by $1, \ldots, \num$.\\
The high level idea behind \qclique is to maintain a clique of items for which all pairwise distances are already learnt. It proposes queries in a systematic way to grow the clique item by item according to the assumed ordering. 
In more detail, the policy works as follows:
\begin{enumerate}
  \item At iteration $t$, the policy maintains a clique $\mathcal{C} =\{ 1, \ldots, |\mathcal{C}| \} \subseteq \itemset$ of items --- see line 2 of Algorithm~\ref{alg.qclique}. For any pair of items $(i,j) \in \mathcal{C}$ it holds that $ U^{t-1}_{i,j} - L^{t-1}_{i,j} \leq \epsilon$. % \ak{condition stated slightly differently: what if j=i?}
  \item It then identifies the next item to be added to the clique, denoted as $a$ in line 3. 
  \item Now, it picks an item $b \in \mathcal{C}$ for which the distance to $a$ is not learnt up to precision $\epsilon$ in line 4, and returns the next query. 
\end{enumerate}

{\bf Online model.}
%\label{sec.algoPart3-Query.Online}
%
In many practical scenarios, the $\num$ items are not known beforehand, and rather appear over time. Let us denote by $\itemset^\Time = \{ 1, \ldots, \Time\} \subseteq \itemset$ the set of items present at time $\Time$. When a new item arrives at time $\Time + 1$ we could learn a hemimetric solution from scratch. However, it would be desirable to make use of the hemimetric solution for $\itemset^\Time$ and extend it.
In fact, \learnhm together with \qclique can be readily applied to this scenario. We can identify the clique $\mathcal{C}$ with the itemset $\itemset^{\Time}$, where $\Time = |\mathcal{C}|$, for which the hemimetric solution is known. The idea of adding $a$ to $\mathcal{C}$ in Algorithm~\ref{alg.qclique}, is then equivalent to extending the hemimetric solution for $\itemset^\Time$  to $\itemset^{\Time+1}$. By this equivalence, the sample complexity of growing the hemimetric solution item by item up to size $\num$ is the same as that of computing the hemimetric solution for all $\num$ items at once.

%For instance in a marketplace such as \emph{Yelp}, new restaurants open up over time.
%
%We also consider the online setting where the $\num$ items are not known beforehand, and rather appear over time. We can easily implement our policy \qclique for this setting, and can show that it achieves the same desired bounds in this online setting, thus allowing one to grow the hemimetric solution over time and incorporating the unseen items that may appear in the future. 
%
%The results of Theorem~\ref{theorem.complexity1} and Theorem~\ref{theorem.complexity2} also hold when \luproj is employed with \qclique and items arrive over time.

%%% QCLIQUE
%%%%%%%%%%%%%%%%%%%%%%%%%%%%%%%%%%%%%%%%%%%%%%%%%%%%%%%%%
\begin{algorithm}[!t]
	\caption{Query Policy: \qclique}\label{alg.qclique}
\begin{algorithmic}[1]
	\STATE{{\bfseries Input:} \looseness -1 $L^{t-1}, U^{t-1}$; {\bfseries Output:} query $(i^t, j^t, c^t)$}
	% \STATE{{\bfseries Assert:} $\exists i,j \colon U^t_{i,j} - L^t_{i,j} > \epsilon$}
	% \STATE{$\mathcal{C} \leftarrow \{ i \mid \exists j < i: U^{t}_{ij} - L^t_{ij} > \epsilon \;\; \textrm{or} \;\; U^{t}_{ji} - L^t_{ji} > \epsilon\}$}
	\STATE{$\mathcal{C} = \{ i | \forall j < i\colon U^{t-1}_{i,j} - L^{t-1}_{i,j} \leq \epsilon \ \wedge \ U^{t-1}_{j,i} - L^{t-1}_{j,i} \leq \epsilon \}$}
    \STATE{$a = \max \mathcal{C} + 1$}
    \STATE{$b = \text{minimal } i \in \mathcal{C} \text{ s.t. } (U^{t-1}_{a,i} - L^{t-1}_{a,i} > \epsilon \ \vee \ U^{t-1}_{i,a} - L^{t-1}_{i,a} > \epsilon )$}
	\IF{$U^{t-1}_{a,b} - L^{t-1}_{a,b} > \epsilon$}
	  \STATE{{\bf Return:} $(a, b, \tfrac{1}{2} ( L_{a,b} + U_{a,b}))$}
	\ELSE
	  \STATE{{\bf Return:} $(b, a, \tfrac{1}{2} ( L_{b,a} + U_{b,a}))$}
	\ENDIF
\end{algorithmic}
\end{algorithm}
\vspace{3mm}
%\section{Practical Extensions}\label{sec.algoPart4-Extensions}
\section{Performance Analysis}\label{sec.algoPart4-Performance}

In this section we analyze the sample complexity and runtime of our proposed algorithm \learnhm. 
%We first present the sample complexity results for the noise-free case --- 
All proofs are provided in Appendix~\ref{app.algoPart3-Query}. % the extended version of this paper~\cite{singla16hemimetric-extended}.

\subsection{Sample Complexity}
%<<<<<<< .mine
Motivated by our preference elicitation application (see Section~\ref{sec.experiments}), we analyze sample complexity under a clustered-ness assumption. In particular, we say the hypothesis $\hypstar$ is $(\rangeintra,\clusters)$-clustered, if the following condition holds:
%In a natural setting inspired by the marketplace \emph{Yelp}, \emph{cf.}, real-world experiments in Section~\ref{sec.experiments}, the items belong to $\clusters$ clusters in the following sense: 
The items are partitioned into $\clusters$ clusters, such that for any pair of items $(i,j)$ their distance is $\hypstar_{i,j} \in [0, \rangeintra]$ if $i$ and $j$ are from the same cluster and $\hypstar_{i,j} \in [\rangeintra, \range]$ otherwise. 
%=======
%
%In a natural setting inspired by the marketplace \emph{Yelp}, \emph{cf.} real-world experiments in Section~\ref{sec.experiments}, the items belong to $\clusters$ clusters in the following sense: for a given pair of items $(i,j)$ their distance is $\hypstar_{i,j} \in [0, \rangeintra]$ if $i$ and $j$ are from the same cluster and $\hypstar_{i,j} \in [\rangeintra, \range]$ otherwise. 
%>>>>>>> .r11700
Note that $\clusters$ and $\rangeintra$ are \emph{unknown} to the algorithm.
For this setting, the sample complexity of \learnhm is bounded by the following theorem.

\begin{theorem} \label{theorem.complexity1}
  If $\hypstar$ is $(\rangeintra,\clusters)$-clustered, the sample complexity of \learnhm is upper bounded by 
\begin{align*}
  2 \num  \clusters \Big\lceil \log\Big(\frac{\range}{\epsilon}\Big) \Big\rceil + \num^2 \Big\lceil \log\Big(\frac{2\rangeintra + 3\epsilon}{\epsilon}\Big) \Big\rceil.
\end{align*}
\end{theorem}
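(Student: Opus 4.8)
The plan is to bound separately the number of queries spent on ``intra-cluster'' pairs and on ``inter-cluster'' pairs, matching the two terms in the claimed bound. The key structural fact is that \qclique grows a clique $\mathcal{C}$ one item at a time, and when an item $a$ is added, the algorithm issues queries only for pairs $(a,b)$ and $(b,a)$ with $b\in\mathcal{C}$; each such pair is driven by binary search until its gap drops below $\epsilon$. So the total sample complexity is $\sum_{a=2}^{\num}\sum_{b<a} \big(q(a,b)+q(b,a)\big)$, where $q(\cdot,\cdot)$ counts the binary-search queries actually issued on that ordered pair before its gap falls to $\epsilon$ (possibly $0$ if \luproj closed the gap first).

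First I would handle the inter-cluster pairs. For a pair $(i,j)$ in different clusters, naive binary search on $[0,\range]$ would cost $\lceil\log(\range/\epsilon)\rceil$ queries, and there are at most $2\num\clusters$ ordered inter-cluster pairs actually queried — intuitively because, by the $(\rangeintra,\clusters)$-clustered assumption, once \qclique has learned the distances between $a$ and \emph{one} representative of each already-present cluster, the triangle inequalities propagated by \luproj (together with the intra-cluster tightness, $\hypstar_{i,i'}\le\rangeintra$) should pin the remaining inter-cluster distances involving $a$ to within the required precision once the intra-cluster work is also done; here I would need the lemma-style observation that \luproj's \uproj/\lproj steps transfer a learned bound on $\hyp_{a,b}$ to $\hyp_{a,b'}$ when $b,b'$ are in a common tight cluster, incurring only an additive $O(\rangeintra)$ slack. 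Counting: each of the $\num$ items contributes at most $2\clusters$ such ``first contact with a new cluster'' ordered pairs, giving the $2\num\clusters\lceil\log(\range/\epsilon)\rceil$ term. For the intra-cluster pairs, and for the residual binary-search refinement of the inter-cluster pairs \emph{after} their range has been squeezed to an interval of width $O(\rangeintra+\epsilon)$ by \luproj, each of the $\le \num^2$ ordered pairs costs at most $\lceil\log\big((2\rangeintra+3\epsilon)/\epsilon\big)\rceil$ queries — the $2\rangeintra+3\epsilon$ being the worst-case width of the bracketing interval once the cluster structure has been exploited (roughly: a factor $\rangeintra$ from each of the two ``detours'' through in-cluster representatives, plus $O(\epsilon)$ slack from the precision already achieved on those representative distances). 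Summing the two contributions yields exactly the stated bound.

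The order of steps: (1) formalize the ``clique growth'' accounting so that the total is a sum of per-ordered-pair binary-search costs; (2) prove the propagation lemma for \luproj under the clustered assumption — that learning $\hyp_{a,b}$ for one $b$ per cluster forces, via triangle inequalities, $U_{a,b'}-L_{a,b'}\le 2\rangeintra+3\epsilon$ for every other $b'$ in that cluster (and symmetrically for $\hyp_{b',a}$); (3) bound the number of ordered pairs that ever get queried on the full range $[0,\range]$ by $2\num\clusters$; (4) bound the residual binary-search cost on every ordered pair by $\lceil\log((2\rangeintra+3\epsilon)/\epsilon)\rceil$ using step (2); (5) add up, and check the edge cases (pairs within the same cluster never needing the full-range search, the first cluster, items $a$ with $a\le\clusters$).

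The main obstacle I expect is step (2) — pinning down the exact constant $2\rangeintra+3\epsilon$ coming out of the interaction between \luproj and the clustered structure. One has to trace precisely how a bound on $\hyp_{a,b}$ with residual width $\epsilon$, combined with bounds $L_{b,b'}=0$, $U_{b,b'}\le\rangeintra$ (and likewise for $b',b$), propagates through the \lproj update $L_{a,b'}\ge L_{a,b}-U_{b',b}$ and the \uproj update $U_{a,b'}\le U_{a,b}+U_{b,b'}$, being careful that the intra-cluster distances $\hyp_{b,b'}$ are themselves only known to precision $\epsilon$ at the moment $a$ is processed (this is where the ``$3\epsilon$'' rather than ``$\epsilon$'' appears), and that \qclique's item ordering does not interleave things so as to inflate the count. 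A secondary subtlety is making the $2\num\clusters$ inter-cluster count rigorous: one must argue that \qclique, when extending the clique by $a$, genuinely only needs to pay full-range search once per previously-seen cluster, which again leans on the propagation lemma of step (2) together with the fact that \luproj is applied after every single query so the savings accrue immediately.
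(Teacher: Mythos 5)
Your proposal is correct and follows essentially the same route as the paper's proof: the paper also accounts query costs pair-by-pair as the clique grows, charges a full-range binary search of $\lceil\log(\range/\epsilon)\rceil$ only to the at most $2\clusters$ ordered ``first contact with a new cluster'' pairs per added item (giving the $2\num\clusters$ term), and derives the residual interval width for all remaining ordered pairs via exactly the propagation you describe, namely $U_{a,b}-L_{a,b}\leq (U_{a,e}-L_{a,e})+U_{e,b}+U_{b,e}\leq \epsilon+2(\rangeintra+\epsilon)=2\rangeintra+3\epsilon$ where $e$ is an already-learned representative of $b$'s cluster. Your anticipated obstacle in step (2), including the source of the $3\epsilon$ slack, is resolved in the paper precisely as you sketch it.
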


In real-world applications, the distances $\hypstar_{i,j}$ might correspond to monetary incentives and are, therefore, naturally quantized to some precision $\Delta$ (monetary incentives are multiples of the smallest currency unit, \emph{e.g.}, one cent). In this setting, the learning algorithms can learn $\hypstar$ exactly, \emph{i.e.}, $\hyphat = \hypstar$, with a bounded number of queries. The idea is that both, \naive and \learnhm, can collapse the gap $U^t_{i,j} - L^t_{i,j}$ to zero whenever $U^t_{i,j} - L^t_{i,j} < \Delta$. Hence, by invoking these algorithms with any $\epsilon < \Delta$, we learn $\hypstar$ exactly. We then obtain the following corollary for this interesting special case.

\begin{corollary} \label{theorem.complexity2}
If $\hypstar$ is $(0,\clusters)$-clustered, and assuming all distances $\hypstar_{i,j}$ are quantized to precision $\Delta > 0$, the sample complexity of \learnhm to exactly learn $\hypstar$ is upper bounded by $2 \num  \clusters  \big\lceil \log\big(\frac{\range}{\Delta}\big) \big\rceil$.
 % invoked with $\epsilon' = \min\{\sfrac{\Delta}{2}, \epsilon\}$ 
%\begin{align*}
%  2 \num  \clusters  \Big\lceil \log\Big(\frac{\range}{\Delta}\Big) \Big\rceil.
%\end{align*}
%
This matches the lower bound of \ $\Omega(\num \clusters)$.
\end{corollary}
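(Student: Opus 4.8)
## Proof Proposal for Corollary~\ref{theorem.complexity2}

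The plan is to derive the upper bound directly from Theorem~\ref{theorem.complexity1} and then establish the matching lower bound by an adversary/information-theoretic argument. For the upper bound, first observe that if $\hypstar$ is $(0,\clusters)$-clustered and all distances are quantized to precision $\Delta > 0$, we run \learnhm with any $\epsilon < \Delta$. As noted in the paragraph preceding the corollary, whenever the gap $U^t_{i,j} - L^t_{i,j}$ drops below $\Delta$, both endpoints must be consistent with a single quantized value, so the gap can be collapsed to zero; hence with $\epsilon < \Delta$ the algorithm returns $\hyphat = \hypstar$ exactly. Now I would plug $\rangeintra = 0$ into the bound of Theorem~\ref{theorem.complexity1}: the second term becomes $\num^2 \lceil \log((2\cdot 0 + 3\epsilon)/\epsilon)\rceil = \num^2 \lceil \log 3 \rceil = 2\num^2$, and the first term is $2\num\clusters\lceil\log(\range/\epsilon)\rceil$. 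The subtlety is that we want the clean bound $2\num\clusters\lceil\log(\range/\Delta)\rceil$ rather than one with an extra $2\num^2$ additive term and with $\epsilon$ in place of $\Delta$. I would argue this by re-examining where the two terms in Theorem~\ref{theorem.complexity1} come from: the $\num^2$ term counts queries spent closing intra-cluster gaps from $r$ down to $O(\rangeintra + \epsilon)$, but when $\rangeintra = 0$ and distances are quantized, once \qclique is querying a pair $(i,j)$ within a cluster, the true distance is $0$, so a single \negr-response-free binary search collapses quickly; more carefully, with the $\epsilon < \Delta$ quantization trick the intra-cluster pairs get their gap collapsed to zero as soon as the upper bound reaches $\Delta$, i.e. after $\lceil \log(\range/\Delta)\rceil$ queries per such pair are not even needed because \luproj propagates the zero distances through tight triangle inequalities. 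I expect the cleanest route is: \qclique grows a clique, and for each newly added item $a$ it queries distances to/from at most one representative per already-discovered cluster (the triangle inequalities with the zero intra-cluster distances let \luproj fill in the rest), giving $2\clusters$ queries per item times $\lceil\log(\range/\Delta)\rceil$ rounds each, hence $2\num\clusters\lceil\log(\range/\Delta)\rceil$ total.

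For the lower bound $\Omega(\num\clusters)$, I would construct a family of hard instances and apply an adversary argument against the query responses. Fix $\clusters$ clusters; consider an adversary that, for each item $i$, can place $i$ in any of the $\clusters$ clusters, and within-cluster all distances are $0$ while between-cluster distances are some fixed value $> 0$ (say $\range$). Learning $\hypstar$ exactly requires, in particular, determining the cluster membership of every item, which is $\log_2 \clusters$ bits per item, i.e.\ $\Omega(\num\log\clusters)$ bits total; since each binary query reveals at most one bit, this already gives $\Omega(\num\log\clusters)$. To push to $\Omega(\num\clusters)$ I would instead argue that the algorithm must, for each item $i$ and each cluster $C$, effectively certify whether $i \in C$ or not, and a single query $(i,j,c)$ only tests membership relative to one other item $j$; an adversary can answer consistently so that after $o(\num\clusters)$ queries there remain two distinct quantized hemimetrics in the version space — e.g.\ by keeping some item whose membership among a constant fraction of clusters is still undetermined — contradicting exact recovery. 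The key combinatorial step is bounding the information a query reveals: querying pair $(i,j)$ at threshold $c$ distinguishes "$i,j$ same cluster" from "different" but says nothing directly about other pairs, so a counting/potential argument over the $\num\clusters$ (item, cluster) incidences yields the bound.

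The main obstacle I anticipate is making the lower-bound adversary argument tight at $\Omega(\num\clusters)$ rather than merely $\Omega(\num\log\clusters)$: one must carefully choose the hard instance family (perhaps forcing the algorithm to rule out membership of each item in each of $\Theta(\clusters)$ candidate clusters one at a time, e.g.\ by making inter-cluster distances all distinct so that a single query against a cluster representative only resolves that one cluster) and show the adversary can maintain ambiguity until $\Omega(\num\clusters)$ queries have been asked. A secondary obstacle is the bookkeeping in the upper bound: verifying rigorously that \qclique together with \luproj never needs more than $2\clusters$ distinct pairs per newly added item — this requires the observation that, once a representative of each already-seen cluster is at distance $0$ (within cluster) or known (across clusters), every triangle inequality through that representative is tight and so \luproj closes the remaining gaps for the new item without further queries.
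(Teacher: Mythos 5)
Your proposal is correct and follows essentially the same route as the paper: for the upper bound you re-examine the case analysis behind Theorem~\ref{theorem.complexity1} with $\rangeintra=0$, observe that quantization lets the clique's intra- and inter-cluster gaps collapse exactly to zero so that the triangle-inequality propagation in \luproj gives $U_{a,b}\le U_{a,e}+U_{e,b}=0$ for every non-representative pair, leaving only the $2\clusters$ first-representative pairs per new item at cost $\lceil\log(\range/\Delta)\rceil$ each; and your lower bound is the same cluster-assignment argument the paper sketches (each query against a cluster representative resolves membership in only that one cluster, so $\Omega(\clusters)$ queries per item are unavoidable even for an algorithm that already knows the clusters and one representative of each). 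The obstacle you flag about the lower bound being only $\Omega(\num\log\clusters)$ information-theoretically is resolved exactly as you anticipate, since the query modality restricts each response to a single (item, cluster) incidence.
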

%
% \ak{weird. Why does it make sense to pick $\epsilon<\Delta$ then?}
%
% hard instances
\vspace{-0.1cm}
Note that our algorithm \learnhm does not perform more queries than \naive for any instance. In fact, the hardest instance for our algorithm is given by $\hypstar$ where all distances $\hypstar_{i,j} = \sfrac{r}{2}$. In this case, \luproj cannot exploit any structural constraints --- the number of queries performed by \learnhm exactly equals that of \naive (equal to $\num^2 \lceil \log( \tfrac{\range}{\epsilon} )\rceil $).

%%%

{\bf Stochastic responses.} We can also bound the sample complexity for the more realistic case in which query responses are stochastic. 
Here, we briefly introduce our noise model --- a more detailed description is given in Appendix~\ref{app.algoPart4-Extensions-Noise}. % the extended version of this paper~\cite{singla16hemimetric-extended}.
Our noise model is parametrized by the variance matrix $\sigma \in \R_+^{\num,\num}$ \emph{unknown} to the algorithm. The acceptance function $\prob(\labelfunc((i,j,\cost))=1)$ is given by the CDF of a normal distribution $\mathcal{N}(\hypstar_{i,j}, \sigma_{i,j})$ truncated to $[\hypstar_{i,j} - \beta_{i,j}, \hypstar_{i,j} +  \beta_{i,j}]$ where $\beta_{i,j} = \min \{ \hypstar_{i,j}, \range - \hypstar_{i,j} \}$.
Note that in our model the noise is unbounded, \emph{i.e.}, at $c=\hypstar_{i,j}$ the acceptance function $\prob(\labelfunc((i,j,\cost))=1)=0.5$. In order to deal with this, we develop a robust noise-tolerant
variant of \getuserresponse which ensures that the maximum noise experienced by the algorithm is bounded by
\begin{align*}
  \eta_{max} = \frac{1}{2} - \frac{\int_{0}^{\sfrac{\epsilon}{3}} e^{-\frac{w^2}{2\max_{i,j} \sigma_{i,j}^2}} \text{d}w }{  \int_{-\sfrac{\range}{2}}^{\sfrac{\range}{2}} e^{-\frac{w^2}{2\max_{i,j} \sigma_{i,j}^2}} \text{d}w }.
\end{align*}\\[1mm]
The sample complexity bounds are characterized by the quantity $\gamma = \big( \frac{3 \ln(\sfrac{3\num^2}{\delta})}{(0.5 - \eta_{\max} )^2} \big)$ --- the theoretical results corresponding to the settings in Theorem~\ref{theorem.complexity1} and Corollary~\ref{theorem.complexity2} are given as follows.
\begin{theorem} \label{theorem.complexity3}
  With probability $1-\delta$, \learnhm learns $\hypstar$ with precision $\epsilon$ and the sample complexity is upper bounded by\footnote{The $\widetilde{\bigObound}(\cdot)$ notation is used to omit factors logarithmic in the factors present explicitly.} 
\begin{align*}
   \widetilde{\bigObound} \Big( \gamma \Big( 2\num  \clusters \Big\lceil \log\Big(\frac{3 \range}{\epsilon}\Big) \Big\rceil + \num^2 \Big\lceil \log\Big(\frac{6\rangeintra + 9\epsilon}{\epsilon}\Big) \Big\rceil \Big) \Big).
\end{align*}
\end{theorem}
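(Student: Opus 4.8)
The plan is to reduce the stochastic case to the deterministic analysis of Theorem~\ref{theorem.complexity1} run at a scaled precision, paying an extra multiplicative $\widetilde{\bigObound}(\gamma)$ factor for ``robustifying'' each query. Concretely, I would run \learnhm with the internal target precision set to $\epsilon/3$ and with \getuserresponse replaced by the robust routine (\adapqueryubn/\adapquerybn): on a proposed triple $(i,j,\cost)$ it draws $\Time_0 = \Thetabound(\gamma)$ i.i.d.\ responses from the stochastic user at $\cost$ and returns the majority label (this is the bounded-noise routine \adapquerybn; the wrapper \adapqueryubn is what ties the query value to the $\epsilon/3$ offset that makes the experienced noise bounded). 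Since the policy \qclique and the projection \luproj are oblivious to how labels are produced, the number of \emph{logical} queries issued equals the number issued by deterministic \learnhm at precision $\epsilon/3$, which by Theorem~\ref{theorem.complexity1} (with $\epsilon$ replaced by $\epsilon/3$, and an extra additive $\bigObound(\epsilon)$ absorbed inside the second logarithm to account for the slack below) is at most $2\num\clusters\lceil\log(3\range/\epsilon)\rceil + \num^2\lceil\log((6\rangeintra+9\epsilon)/\epsilon)\rceil$; multiplying by $\Time_0$ and hiding the sub-logarithmic union-bound overhead in $\widetilde{\bigObound}(\cdot)$ yields the claimed bound.

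Two things then need verification. First, \textbf{reliability of non-boundary queries.} Whenever the proposed value satisfies $|\cost - \hypstar_{i,j}| \ge \epsilon/3$, the monotonicity of the acceptance function together with the explicit truncated-Gaussian form gives $|\prob(\labelfunc((i,j,\cost))=1) - \tfrac12| \ge \tfrac12-\eta_{\max}$ with the sign matching the noise-free label $\indfunc(\cost\ge\hypstar_{i,j})$: by symmetry of the Gaussian density the truncated CDF at $\hypstar_{i,j}\pm\epsilon/3$ equals $\tfrac12 \pm \big(\int_{0}^{\epsilon/3} e^{-w^2/(2\sigma_{i,j}^2)}\mathrm{d}w \,/\, \int_{-\beta_{i,j}}^{\beta_{i,j}} e^{-w^2/(2\sigma_{i,j}^2)}\mathrm{d}w\big)$, and $\beta_{i,j}\le\range/2$ lower-bounds this gap by exactly $\tfrac12-\eta_{\max}$. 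A Chernoff/Hoeffding bound then shows that $\Time_0 = \Thetabound\big(\ln(3\num^2/\delta)/(\tfrac12-\eta_{\max})^2\big) = \Thetabound(\gamma)$ samples make each majority vote equal the deterministic label except with probability $\delta/(3\num^2)$ per pair; a union bound over the $\num^2$ pairs (and, absorbed into $\widetilde{\bigObound}$, over the $\bigObound(\log(\range/\epsilon))$ bisection levels of each pair) makes \emph{all} such votes correct with probability $\ge 1-\delta$. On this event every non-boundary query behaves exactly as in the noise-free run, so the clique-growing binary search driven by \qclique ($2\num\clusters\lceil\log(3\range/\epsilon)\rceil$ logical queries) and the remaining within-cluster pairs ($\num^2\lceil\log((6\rangeintra+9\epsilon)/\epsilon)\rceil$ logical queries) proceed just as in Theorem~\ref{theorem.complexity1}.

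Second, \textbf{the near-boundary regime and validity of the bounds.} When $|\cost-\hypstar_{i,j}|<\epsilon/3$ the acceptance probability lies within $\tfrac12-\eta_{\max}$ of $\tfrac12$ and no fixed sample size can certify the deterministic label --- this is exactly where the $\epsilon/3$ slack is spent. The point is that \emph{any} label returned in this regime still leaves the updated bounds valid up to an additive $\epsilon/3$: setting $\widetilde{U}^t_{i,j}=\cost$ only weakens ``$\hypstar_{i,j}\le\widetilde{U}^t_{i,j}$'' to ``$\hypstar_{i,j}\le\widetilde{U}^t_{i,j}+\epsilon/3$'', and symmetrically for $\widetilde{L}^t_{i,j}$; moreover, once $\widetilde{U}^t_{i,j}$ has dropped to within $\epsilon/3$ of $\hypstar_{i,j}$ all subsequent midpoints for that pair fall on the reliably-labelled side, so the interval cannot slide away from $\hypstar_{i,j}$ by more than $\epsilon/3$. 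Running \uproj/\lproj (i.e., shortest-path relaxation on the corrupted edge weights) then propagates these per-edge $\epsilon/3$ errors only along the short relevance paths the clustered structure produces --- a couple of hops, through the clique members of \qclique and the cluster representatives --- so the cumulative error in the returned $\Dhat = U^t$ is at most $\epsilon$, and the effective range binary-searched for a within-cluster pair grows from the deterministic $2\rangeintra+3(\epsilon/3)$ by only an $\bigObound(\epsilon)$ additive term, giving the $6\rangeintra+9\epsilon$ inside the logarithm. Combining this correctness statement with the logical-query count of the previous paragraph (and specializing via Corollary~\ref{theorem.complexity2} when $\rangeintra=0$) yields the theorem.

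The step I expect to be the real obstacle is this last one: rigorously bounding how the unavoidable near-boundary mislabels, fed through the Floyd--Warshall-style relaxations in \uproj and \lproj, affect \emph{every} entry of $\Dhat$ --- that is, proving the per-edge $\bigObound(\epsilon)$ corruption does not amplify along long triangle-inequality chains, and that the bookkeeping of the additive terms closes exactly at the stated $6\rangeintra+9\epsilon$ and $3\range$. By contrast, the concentration argument and the logical-query count are essentially a black-box reuse of Theorem~\ref{theorem.complexity1} together with a standard Chernoff and union bound.
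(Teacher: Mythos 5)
There is a genuine gap, and it is exactly the step you flag as ``the real obstacle'': the error-propagation of near-boundary mislabels through \uproj and \lproj. The reason you are stuck with that problem is that your reduction deviates from the algorithm's actual design at one crucial point. In the paper, \adapqueryubn does not return a (possibly unreliable) label \emph{for the midpoint $\cost$}; it runs three bounded-noise invocations in parallel at $\cost$, $\cost-\epsilon/3$ and $\cost+\epsilon/3$, and returns the datapoint $((i,j,\overline{\cost}^t),\labelpoint^t)$ for whichever invocation terminates first, where $\overline{\cost}^t$ is that invocation's \emph{own} query value. At least one of the three offsets is at distance $\geq \epsilon/3$ from $\hypstar_{i,j}$, so its noise rate is at most $\eta_{\max}<0.5$ and it terminates with the correct noise-free label after $\widetilde{\bigObound}(\gamma)$ user queries. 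Line 7 of \learnhm then updates $\widetilde{L}^t$ or $\widetilde{U}^t$ at $\overline{\cost}^t$, \emph{not} at $\cost^t$. Consequently, with probability $1-\delta$ every recorded constraint is exactly consistent with $\hypstar$, the bounds $L^t,U^t$ remain valid in the sense of Theorem~\ref{thm.projection}, and there is \emph{no} $\epsilon/3$-corruption to push through the Floyd--Warshall relaxations. The entire near-boundary analysis you attempt (per-edge $\epsilon/3$ slack, ``short relevance paths'', closing the bookkeeping at $6\rangeintra+9\epsilon$) is unnecessary in the paper's scheme --- and as you present it, it is not a proof: errors in invalid bounds can in principle accumulate additively along relaxation chains, and nothing in your sketch rules out the gap $U^t_{i,j}-L^t_{i,j}$ collapsing below $\epsilon$ around a wrong value, which would make \learnhm terminate with $\|\Dhat-\hypstar\|_\infty>\epsilon$.

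Once the bounds are kept exactly valid, the paper's remaining argument is a one-liner that replaces your ``run at precision $\epsilon/3$'' reparametrization: the only price of the offset is that one robust invocation at the midpoint may update the bound at $\cost\pm\epsilon/3$, so the gap recursion becomes $g^{l+1}\leq g^l/2+\epsilon/3$ instead of $g^{l+1}= g^l/2$, and shrinking $g^0$ below $\epsilon$ takes at most $\lceil\log\frac{g^0}{\epsilon(1-2\alpha)}\rceil=\lceil\log\frac{3g^0}{\epsilon}\rceil$ invocations with $\alpha=1/3$. Substituting $g^0=\range$ and $g^0=2\rangeintra+3\epsilon$ into the case analysis of Theorem~\ref{theorem.complexity1} gives $\lceil\log(3\range/\epsilon)\rceil$ and $\lceil\log((6\rangeintra+9\epsilon)/\epsilon)\rceil$ directly --- note that your substitution $\epsilon\mapsto\epsilon/3$ in Theorem~\ref{theorem.complexity1} yields $(6\rangeintra+3\epsilon)/\epsilon$ in the second logarithm, and the extra additive term you ``absorb'' is precisely the discrepancy between your route and the correct recursion. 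Your first verification step (the concentration bound, the $\Theta(\gamma)$ sample count, and the union bound over pairs and bisection levels) is fine and matches the paper.
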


%\begin{theorem} \label{theorem.complexity4}
%  For the case of tight clusters, \emph{i.e.}, $\range_1 = 0$, and assuming all distances $\hypstar_{i,j}$ are quantized to precision $\Delta > \epsilon$, the sample complexity of \learnhm is upper bounded by
%\begin{align*}
%  \widetilde{\bigObound} \Big( \gamma \num  \clusters  \Big\lceil \log\Big(\frac{3\range}{\epsilon}\Big) \Big\rceil \Big).
%\end{align*}
%\end{theorem}
%
\begin{corollary} \label{theorem.complexity4}
Consider the case $\hypstar$ is $(0,\clusters)$-clustered, and assume all distances $\hypstar_{i,j}$ are quantized to precision $\Delta > 0$.  With probability $1-\delta$, \learnhm learns $\hypstar$ exactly and the sample complexity is upper bounded by % invoked with $\epsilon' = \min\{\sfrac{\Delta}{2}, \epsilon\}$
\begin{align*}
  \widetilde{\bigObound} \Big( \gamma 2\num  \clusters  \Big\lceil \log\Big(\frac{3\range}{\Delta}\Big) \Big\rceil \Big).
\end{align*}
\end{corollary}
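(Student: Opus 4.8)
The plan is to run the counting argument behind Corollary~\ref{theorem.complexity2} on top of the noisy machinery developed for Theorem~\ref{theorem.complexity3}, rather than on the noise-free analysis of Theorem~\ref{theorem.complexity1}. First I would reduce exact recovery to $\epsilon$-learning: since every $\hypstar_{i,j}$ is a multiple of $\Delta$, any valid interval $[L^t_{i,j},U^t_{i,j}]$ of width strictly less than $\Delta$ contains a unique multiple of $\Delta$, namely $\hypstar_{i,j}$; hence, invoking $\learnhm$ with any $\epsilon<\Delta$ and using the gap-collapse step noted just before Corollary~\ref{theorem.complexity2}, the algorithm recovers $\hypstar$ exactly and, moreover, each threshold (binary-search) subproblem on a pair $(i,j)$ terminates after at most $\lceil\log(\sfrac{3\range}{\Delta})\rceil$ confident search steps. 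This is exactly the $\epsilon\to\Delta$ replacement used in the passage from Theorem~\ref{theorem.complexity1} to Corollary~\ref{theorem.complexity2}, now performed inside the analysis of Theorem~\ref{theorem.complexity3}.

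Second --- the heart of the argument --- I would show that at most $2\num\clusters$ threshold subproblems ever incur a query. Fix the clique-growth order of \qclique. When item $a$ is appended to the current clique $\mathcal{C}=\{1,\dots,a-1\}$, \qclique scans $b\in\mathcal{C}$ in increasing index order and issues a (noisy) binary search on $(a,b)$ or $(b,a)$ only while that gap still exceeds $\epsilon$. I claim that in each cluster only the index-minimal member $r$ of $\mathcal{C}$ belonging to it triggers such a search: for any later same-cluster $b$, the pairs $(b,r),(r,b)$ are intra-cluster clique pairs, hence already collapsed to width $0$ (true value $0$, exact learning), so as soon as $(a,r)$ and $(r,a)$ have been learned the inequalities defining $\uclass$ and $\lclass$ (Equations~\ref{eq.triangle.inequality} and~\ref{sec.algoPart2-LUBounds.proj.2}) --- namely $U_{a,b}\le U_{a,r}+U_{r,b}$, $L_{a,b}\ge L_{a,r}-U_{b,r}$ and symmetrically for $(b,a)$ --- force $U^t_{a,b}-L^t_{a,b}\le U^t_{a,r}-L^t_{a,r}=0$ after the next \luproj call, so \qclique never queries $(a,b)$. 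With at most $\clusters$ clusters, item $a$ thus triggers at most $\clusters$ representatives, each in two directions, i.e. at most $2\clusters$ subproblems; summing over the $\num$ items gives at most $2\num\clusters$.

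Finally I would combine the two. Each triggered subproblem costs $\widetilde{\bigObound}\!\big(\gamma\lceil\log(\sfrac{3\range}{\Delta})\rceil\big)$ raw queries with the robust \getuserresponse, where the $\ln(\sfrac{3\num^2}{\delta})$ inside $\gamma$ delivers, through a union bound over all $\le\num^2$ potential subproblems, a global $1-\delta$ guarantee that every returned label is correct --- so that the bounds stay valid (Theorem~\ref{thm.luprojalgo}) and exact recovery goes through; multiplying by the $\le 2\num\clusters$ triggered subproblems yields the claimed $\widetilde{\bigObound}\!\big(\gamma\,2\num\clusters\lceil\log(\sfrac{3\range}{\Delta})\rceil\big)$.

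The hard part will be the middle step: making rigorous that \qclique, interleaved with \luproj, skips every non-representative intra-cluster pair. One must check that each cluster's representative $r$ is processed in both directions strictly before any later same-cluster $b$ in \qclique's scan, that \luproj runs between those steps, and that the width-$0$ propagation is valid even though $\lclass$ is not a set of hemimetrics (one uses the explicit inequalities of Equation~\ref{sec.algoPart2-LUBounds.proj.2}, not triangle inequalities, to bound $L_{a,b}$). A secondary point is checking that the confidence accounting matches Theorem~\ref{theorem.complexity3} so that $\widetilde{\bigObound}$ legitimately absorbs the logarithmic overhead of the repeated robust queries.
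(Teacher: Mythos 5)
Your proposal is correct and follows essentially the same route as the paper, whose proof of this corollary is literally ``apply the arguments of Theorem~\ref{theorem.complexity3} to Corollary~\ref{theorem.complexity2}'': the $2\num\clusters$ count comes from the noise-free clustered analysis (your ``representative'' argument is the C1/C2.1/C2.2 case split of Corollary~\ref{theorem.complexity2} in disguise, with the zero-width intra-cluster bounds killing all non-representative pairs), and the per-subproblem cost $\widetilde{\bigObound}\big(\gamma\lceil\log(\sfrac{3\range}{\Delta})\rceil\big)$ is exactly the $\epsilon\to\epsilon(1-2\alpha)$, $\alpha=\tfrac{1}{3}$ correction plus the $\gamma$ repetition factor from Theorem~\ref{theorem.complexity3}. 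Your elaboration of the propagation through $\lclass$ and the confidence accounting fills in details the paper leaves implicit, but introduces no new idea or gap.
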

%
% The proofs are given in Appendix~D.

%%%%%%%%%%%%%%%%%

\subsection{Runtime Analysis and Speeding Up \learnhm} \label{sec.algoPart4-Performance.speedup}

%%%%%%%%%%%%%%%%%

We now begin by analyzing the runtime of \learnhm. The algorithm $\learnhm$ invokes $\luproj$ after every query --- there are  $\Thetabound(\num^2 \log(\tfrac{\range}{\epsilon}))$ calls to $\luproj$ in the worst case. 
The runtime of $\luproj$ is $\Thetabound(\num^3)$, resulting in a total runtime $\Thetabound (\num^5 \log(\tfrac{\range}{\epsilon} ))$ --- this is prohibitively expensive for most realistic problem instances.\\
The key idea for speeding up \learnhm is that we can choose the constraints that should be exploited instead of exploiting all the constraints after every query (line 6 in \uproj \& \lproj). If the constraints to be exploited are selected carefully, we can still get reasonable benefits from tightening lower and upper bounds. For \learnhm, this can be achieved as follows:
\vspace{-1mm}
\begin{itemize}
\item First, we do not need to invoke \luproj after every query (this is equivalent to not exploiting any violated constraint). At any iteration $t$, \learnhm invokes $\luproj$ only if $\widetilde{U}^t_{i^t,j^t} - \widetilde{L}^t_{i^t,j^t} \leq \epsilon$. Otherwise, it simply sets $(L^t, U^t) \gets (\widetilde{L}^t, \widetilde{U}^t)$.
\item Second, when \learnhm invokes \luproj at iteration $t$ we only consider the $2\num$ constraints which involve $i^t,j^t$ in lines 4 and 5 of Algorithms~\ref{alg.uproj} and~\ref{alg.lproj}.
\end{itemize}
\vspace{-1mm}
Details of this idea are presented in Appendix~\ref{app.algoPart4-Extensions-Speedup}. %in the extended version of this paper~\cite{singla16hemimetric-extended}. 
\learnhm implementing this idea invokes $\luproj$ at most $\num^2$ times each with a runtime of $4 \num$. Hence, the total runtime of the speeded up \learnhm is $\Thetabound(\num^3)$. Most importantly, the sample complexity bounds from the previous section still apply.

\section{Experimental Evaluation}\label{sec.experiments}
%\vspace{-1mm}

%We now report on the results of our experiments. 

\subsection{Benchmarks}

We compare the performance of the speeded up \learnhm against the baseline \naive. We also compare against a second baseline \naiveSIT (\naive with side information of triplet comparisons). %, which is explained in the following. \naiveSIT 
This baseline utilizes a low-dimensional embedding of the items as a preprocessing step. Following the work of \citet{jamieson2011low}, using $\num^2 \log \num $ triplet queries, \emph{i.e.}, for a triplet $(i, j, k)$ such a query returns $\indfunc(\hypstar_{i,j} \leq\hypstar_{i,k})$, one can compute a low-dimensional embedding of the items. Using this embedding, one can infer the response to all possible $\num^3$ triplet queries --- this is the side information that we supply to \naiveSIT. \\
% \ak{how does this deal with asymmetry?}
%%
This side information can be exploited by \naiveSIT as follows. For a given triplet, $\indfunc(\hypstar_{i,j} \leq \hypstar_{i,k})=1$ implies two constraints on the lower and the upper bounds --- $U_{i,j} \leq U_{i,k}$ and $L_{i,k} \geq L_{i,j}$. After every query, \naiveSIT first updates an upper or lower bound according to the response. Then it exploits these two constraints for $\num^3$ triplets to tighten the bounds on every pair of items. We will report the sample complexity of \naiveSIT as the total number of queries for computing the low-dimensional embedding and for learning all distances up to precision $\epsilon$.

%%%%%%%%%%

\subsection{Experimental Setup}
 
\begin{figure*}[!t]
  \begin{subfigure}[b]{0.33\textwidth}
    \centering
    \includegraphics[width=0.92\textwidth]{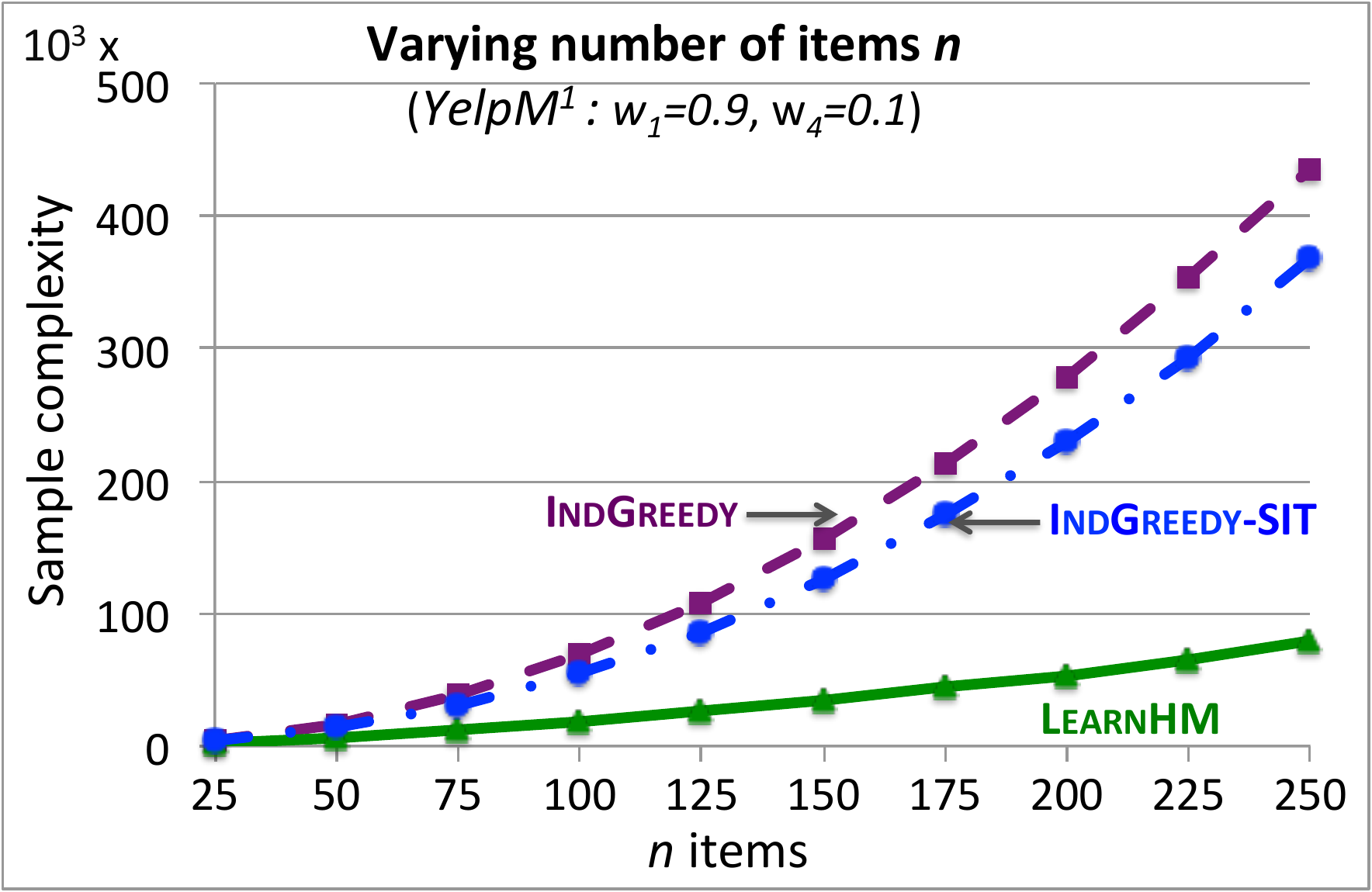}
      \vspace{-1mm}
    \caption{Varying $\num$ (\emph{YelpM}$^1$)}
    \label{fig.model1.vary.n}
  \end{subfigure}
  \begin{subfigure}[b]{0.33\textwidth}
    \centering
    \includegraphics[width=0.92\textwidth]{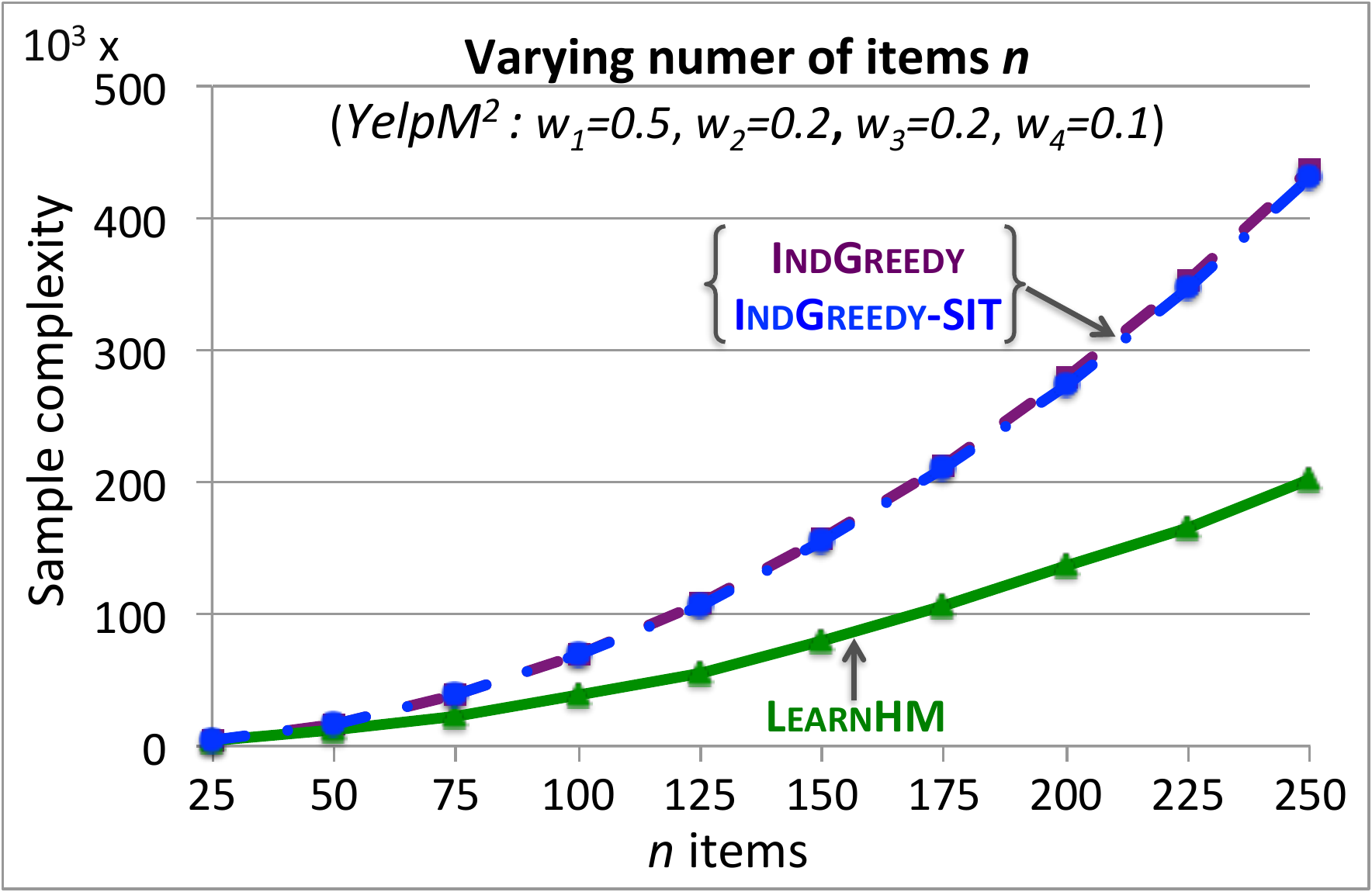}
      \vspace{-1mm}    
    \caption{Varying $\num$ (\emph{YelpM}$^2$)}
    \label{fig.model2.vary.n}
  \end{subfigure}
  \begin{subfigure}[b]{0.33\textwidth}
    \centering
    \includegraphics[width=0.92\textwidth]{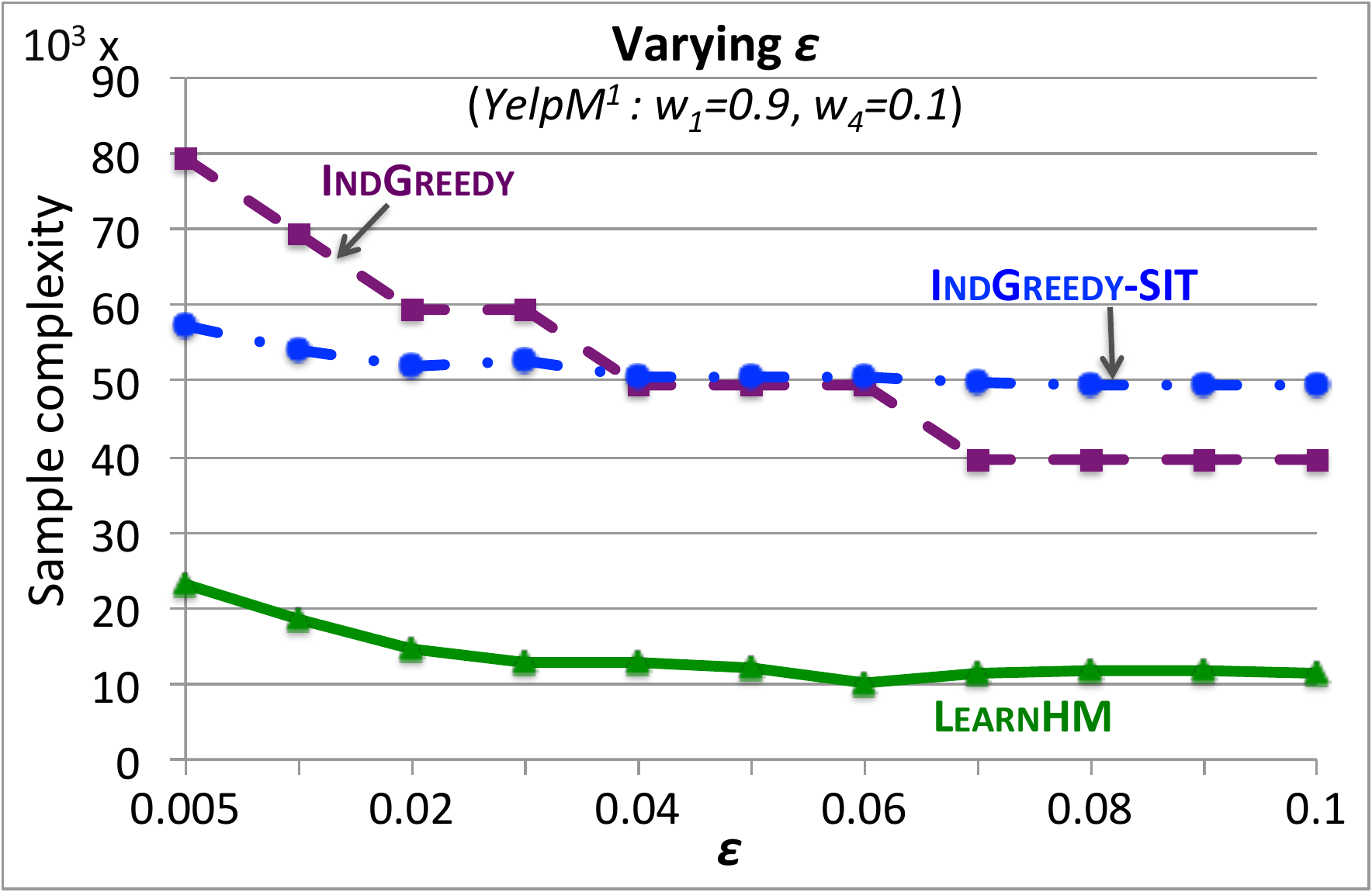}
      \vspace{-1mm}    
    \caption{Varying $\epsilon$ (\emph{YelpM}$^1$)}
    \label{fig.model1.vary.eps}
  \end{subfigure}
  \vspace{-9mm}
  \caption{Sample complexity results for two different user preference models defined via the \emph{Yelp} dataset. For the first model corresponding to the $\clusters=5$ cluster setting, \emph{cf.}\ Theorem~\ref{theorem.complexity1}, the sample complexity of \learnhm is about an order of magnitude smaller than that of \naive. Even for the generic model, \learnhm substantially outperforms the two baselines.}
  \label{fig.results}
%  \vspace{-0.2mm}
\end{figure*}

{\bf Yelp dataset. }We use the recently proposed \emph{Yelp Dataset Challenge (round 7)} data for our experiments.\footnote{\texttt{https://www.yelp.com/dataset\_challenge/}} This data contains information about $77$K businesses  located across 10 cities around the world. We looked into businesses belonging to the category \emph{Restaurants} and being located in the city of \emph{Pittsburgh, PA}. In particular, we extracted information for all 290 restaurants offering food from the cuisines Mexican (50), Thai (26), Chinese (53), Mediterranean (75), Italian (86). For each of these restaurants we also collected the review count and coordinates (longitude and latitude). We discretized the review count into {\em High} (popular, 166 restaurants) when there were more than 25 reviews and into {\em Low} (unpopular, 124 restaurants) otherwise. The collected data is visualized in Figure~\ref{fig.dataset}. \\
{\bf User preference models.} We simulate user preference models from this data by creating the underlying hemimetric $\hypstar$ as follows. For notational ease, we use the shorthands $\cuisine_i$, $\review_i$, $\latitude_i$, $\longitude_i$ to refer to the above mentioned attributes for item $i$. We quantify the distance between item $i$ and $j$ by
\begin{align*}
  W_{i,j} = w_1 W^{\cuisine}_{i,j} + w_2 W^{\review}_{i,j} + w_3 W^{\geo}_{i,j} + w_4 W^{\rand}_{i,j},
\end{align*}
where
\begin{align*}
  W^{\cuisine}_{i,j} &= \range \indfunc(\cuisine_i \neq \cuisine_j), \\
  W^{\review}_{i,j} &=  \range \indfunc(\review_i > \review_j), 
\end{align*}
$W^{\geo}_{i,j}$ is the great-circle distance based on the latitude and longitude coordinates normalized to lie in $[0,\range]$, and $W^{\rand}_{i,j}$ is drawn uniformly at random from $[0,\range]$. Recall, $\range$ is the upper bound on the distance. The weights $w_1, \ldots, w_4 \in \R_+$ sum up to 1. We compute $\hypstar$ as the closest metric to $W$ according to~\citet{brickell2008metric}.\\
For different weights $w_1, \ldots, w_4$, we can instantiate different user preference models $\hypstar$. In particular, we instantiate the following two models. In the first model (\emph{YelpM}$^1$), we use $w_1 = 0.9, w_4=0.1$ and $w_2 = w_3 = 0$ --- this corresponds to the setting we considered in Theorem~\ref{theorem.complexity1} with $\clusters=5$ and intra cluster distance $\rangeintra \leq w_4 \cdot \range$. The second model (\emph{YelpM}$^2$) is more generic, with weights given by $w_1=0.5,w_2=w_3=0.2,w_4=0.1$.

\begin{figure}[h]
  \centering
  \includegraphics[width=0.9\columnwidth]{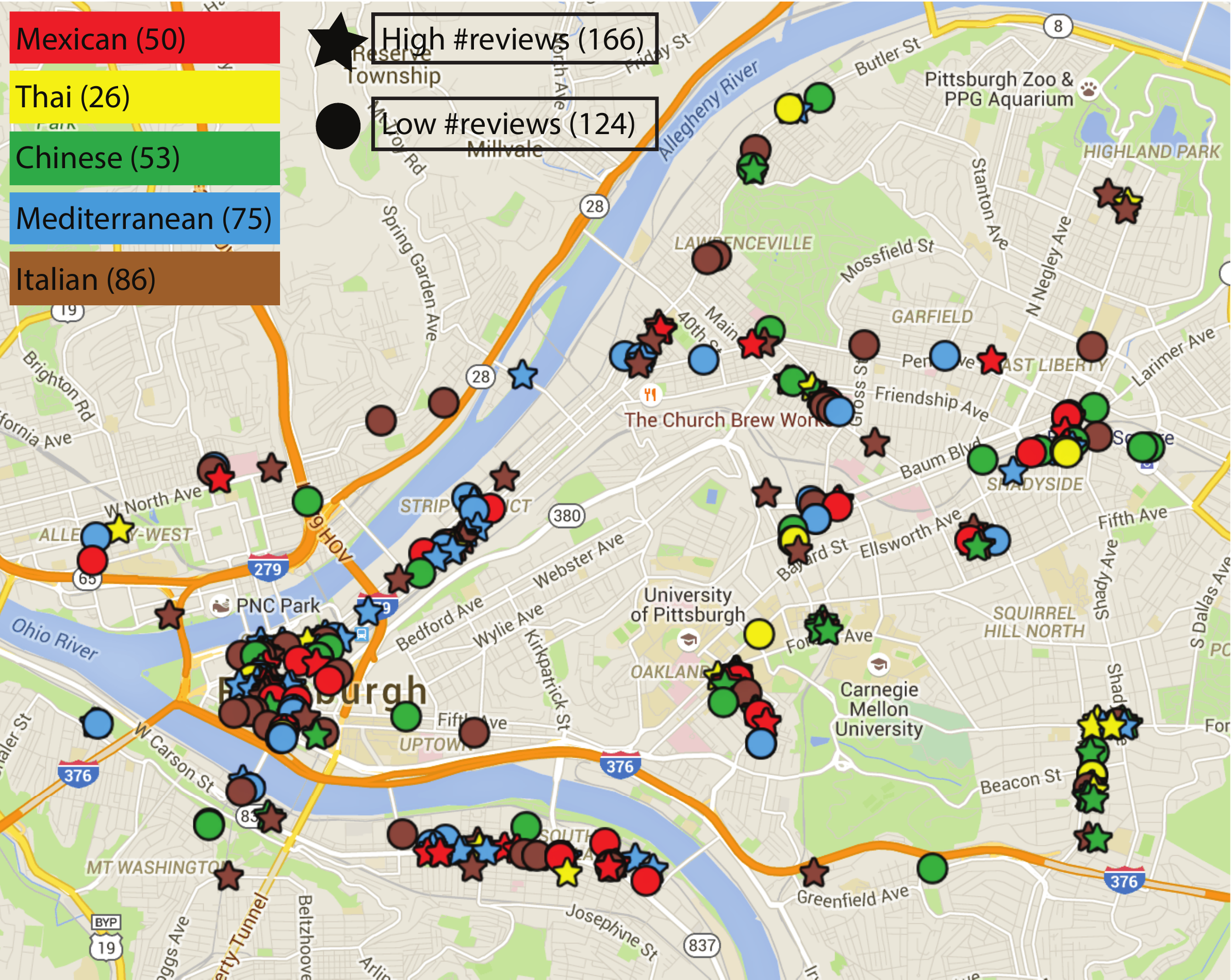}
  \vspace{-3mm}
  \caption{Visualization of the 290 restaurants from the \emph{Yelp} data in Pittsburgh. The restaurants are distinguished by cuisine (color), popularity (star) and location (coordinates).}
  \label{fig.dataset}
\end{figure}

%For each of these restaurants we collected the following attributes: cuisine, review count, latitude and longitude coordinates. Furthermore, we did a binary categorization of the restaurants with high-review-count (popular), and low-review count (unpopular). At threshold of 25, we got 166 in high category, and 124 in low.
% 50: Mexican, 26 Thai, 53 Chinese, Mediterranean 75, Italian 86

\subsection{Results}

We  now present our results. We focus on the noise-free setting here, results for the stochastic setting are provided in Appendix~\ref{app.experiments}. %in the extended version of this paper~\cite{singla16hemimetric-extended}.
As the metric for comparing \learnhm, \naive and \naiveSIT, we use the sample complexity of learning the unknown hemimetric $\hypstar$ up to precision $\epsilon$. In the experiments we used $\range=1$, and varied $\epsilon$ and $\num$. When varying $\num$ (by sub-sampling), we used $\epsilon=0.01$. Results are shown in Figures~\ref{fig.results} (a-c). \\
In Figure~\ref{fig.model1.vary.n}, for the first model \emph{YelpM}$^1$, we can observe that the sample complexity of \learnhm, \emph{cf.}\ Theorem~\ref{theorem.complexity1}, is almost an order of magnitude smaller than that of \naive. Note that \naiveSIT also has lower sample complexity than \naive. However, because of the large cost for computing an embedding, \naiveSIT performs worse than \learnhm.\\
The results for the more generic model \emph{YelpM}$^2$ without specific clusters are shown in Figure~\ref{fig.model2.vary.n}. We observe that \learnhm and \naiveSIT have higher sample complexity compared to the first model \emph{YelpM}$^1$. The sample complexity of \learnhm is still better by a factor of ${\sim}2$ than that of the baseline algorithms --- although the assumptions in Theorem~\ref{theorem.complexity1} do not hold.\\
Finally, Figure~\ref{fig.model1.vary.eps} shows results for varying $\epsilon$ for $\num=100$ items. As expected, with increasing $\epsilon$ the sample complexity of all algorithms increases. For larger values of $\epsilon$, \naiveSIT performs worse than \naive as the sample complexity for computing the embedding dominates the learning. Most importantly, \learnhm has the lowest sample complexity of all three algorithms.

%\begin{figure}
%  \includegraphics[width=.3\textwidth]{fig/yelp_model1_vary-n.pdf}
%  \caption{Model 1, vary n}
%\end{figure}
%
%\begin{figure}
%  \includegraphics[width=.3\textwidth]{fig/yelp_model2_vary-n.pdf}
%  \caption{Model 2, vary n}
%\end{figure}

%%%%%%%%%%%%%%%%%%%%%%%%%%%%%%%%%%%%%%%%%%%%%%%%%%%%%%%%%
%%%%%%%%%%%%%%%%%%%%%%%%%%%%%%%%%%%%%%%%%%%%%%%%%%%%%%%%% EXTRA

%%%%%%%%%%%%%%%%%%%%%%%%%%%%%%%%
%%%%%%%%%%%%%%%%%%%%%%%%%%%%%%%%

%%%%%%%%%%%%%%%%%%%%%%%%%%%%%%%%%%%%%%%%%%%%%%%%%%%%%%%%%
%%%%%%%%%%%%%%%%%%%%%%%%%%%%%%%%%%%%%%%%%%%%%%%%%%%%%%%%% RELATED WORK
%!TEX root = arxiv-hemimetrics.tex

%%%%%%%%%%%%%%%%%%%%%%%%%%%%%%%%%%%%%%%%%%%%%%%%%%%%%%%%%%%%%%%%%%%%%%%%%%
%%%%%%%%%%%%%%%%%%%%%%%%%%%%%%%%%%%%%%%%%%%%%%%%%%%%%%%%%%%%%%%%%%%%%%%%%%
%\vspace{-2mm}
\section{Related Work}\label{sec.related}
%\vspace{-2mm}
%\ak{need to discuss active learning? Also consider moving to end of paper}
% We now give an overview of related research.

%%%%%%%%%%%%%%%%%%%%%%%%%%%%%%%%%%%%%%%%%%%%%%%%%%%%%%%%%%%%%%%%%%%%%%%%%%
%\vspace{-1mm}
\subsection{Metric Learning \& Low-dimensional Embeddings}
%\vspace{-2mm}
Learning distances to capture notions of similarity or dissimilarity plays a central role in many machine learning applications.
We refer the reader to the detailed surveys by \citet{yang2006distance,van2009dimensionality,bellet2013survey}. Some of the key distinctions of our formulation from the existing  research are described in the following.\\
{\bf Supervised metric learning.}
In their seminal work, \citet{xing2002distance} introduced the supervised metric learning framework for learning Mahalanobis distance functions via a convex formulation. In contrast to our setting, they assume that the algorithm has access to the feature space of the input data. Furthermore, this framework and its variants are restricted to recover symmetric distance functions.
Another line of research considers learning asymmetric distances, for instance by learning local invariant Mahalanobis distances~\cite{fetaya2015learning} or by learning general bilinear similarity matrices~\cite{liu2015similarity}. However, this line of work is not directly applicable to our setting because it also requires access to the feature space of the input data.\\
{\bf Learning embeddings.} 
Another line of research is that of learning low-dimensional embeddings for a set of items respecting the observed geometric relations between these items~\cite{amid2015multiview,tamuz2011adaptively,cox2000multidimensional,jamieson2011low}. 
For applications involving human subjects, a triplet-based queries framework --- for a triplet $(i, j, k)$ it queries $\indfunc(\D_{i,j} \leq \D_{i,k})$ ---  has been employed.
However the distances recovered from these approaches are merely optimized to respect the observed relations seen in the data from query responses --- they are symmetric and importantly \emph{do not} have an actual quantitative (economic) interpretation as we seek in our formulation. 

%\vspace{-3mm}
\subsection{Exploiting Structural Constraints}
%\vspace{-2mm}
Our approach of exploiting the structural constraints of the hemimetrics polytope is in part inspired by \citet{elkan2003using}, who accelerates $\clusters$-means by exploiting the triangle inequality. By maintaining bounds on the distances, he efficiently reduces the number of distance computations. 
\citet{brickell2008metric} study the problem of projecting a non-metric matrix to a metric matrix, and 
consider a specific class of decrease-only projections.
Our approach towards updating upper bounds via decrease-only projections is similar in spirit. However, the main technical difficulties arise in maintaining and updating the lower bounds, for which we develop novel techniques.
The active-learning approach proposed by~\citet{jamieson2011low} exploits the geometry of the embedding space to minimize the sample complexity using triplet-based queries. While similar in spirit, their approach is based on triplet-based queries, and differs from our methodology of exploiting the structural constraints.

%\vspace{-3mm}
\subsection{Learning User Preferences}
%\vspace{-2mm}

Another relevant line of research is concerned with eliciting user preferences. 
Specifically, we seek to learn private costs of a user for switching from her default choice of item $i$ to instead choose item $j$. This type of preferences can be used in marketing applications, \emph{e.g.}, for persuading users to change their decisions~\cite{kamenica2009bayesian}. \citet{singla2015incentivizing} considered similar preferences in the context of balancing a bike-sharing system by incentivizing users to go to alternate stations for pickup or dropoff of bikes.  
\citet{abernethy2015low} considered the application of purchasing data from users, and quantified the prices that should be offered to them. One key difference of our approach is that we are interested in jointly learning the preferences between $\num$ items, \emph{i.e.}, tackling $\num^2$ learning problems jointly.

%\vspace{-3mm}
\subsection{Active Learning}
%\vspace{-2mm}

Our problem formulation shares the goal of reducing sample complexity with other instances of active learning~\cite{settles2012active}. Our goal to specifically exploit the structural constraints of the hemimetric polytope is along the lines of research in structured active learning where the hypotheses or the output label space have some inherent structure that can be utilized, \emph{e.g.}, the structure of part-of-speech tagging of the sentence~\cite{roth2006margin}.

\vspace{0.6mm}
\section{Conclusions}\label{sec.conclusions}
%\vspace{-1.5mm}

%%%%%%%%%%%%%%%%%%%%%%%%%%%%%%%%%%%%%%%%%%%%%%%%%%%%%%%%%
%%%%%%%%%%%%%%%%%%%%%%%%%%%%%%%%%%%%%%%%%%%%%%%%%%%%%%%%%

We investigated the novel problem of actively learning hemimetrics. %Our proposed algorithm \learnhm can effectively learn hemimetrics by actively  querying users. 
The two key techniques used in the construction of our algorithm \learnhm are novel projection techniques for tightening the lower and upper bounds on the solution space and a non-myopic (non-greedy) query policy. Our algorithm can be readily applied to the online setting allowing one to extend the hemimetric solution over time. We provided a thorough analysis of the sample complexity and runtime of our algorithm. Our experiments on \emph{Yelp} data showed substantial improvements over baseline algorithms in line with our theoretical findings.

{\bf Acknowledgements.} This research is supported in part by SNSF grant 200020\_159557 and the Nano-Tera.ch program as part of the Opensense II project.

%%, {\em i.e.}, pairwise distances among a set of $\num$ items that satisfy triangle inequalities and non-negativity constraints. 
%
%% In our application, the (asymmetric) distances quantify private costs a user incurs when substituting one item by another. 
%We aim to learn these distances (costs) by asking the users whether they are willing to switch from one item to another for a given incentive offer.
%%For learning, user queries take the form of binary labelings, \emph{i.e.}, given two items $i$ and $j$, and a proposed value $\cost$, the user provides a $\posr$ response iff $\cost$ is at least the underlying distance. 
%Without exploiting structural constraints of the hemimetric polytope, %and for randomly chosen queries, 
%learning the distances between each pair of items requires $\Thetabound(\num^2)$ queries. We propose an active learning algorithm that substantially reduces this sample complexity by exploiting the structural constraints on the version space of hemimetrics. Our proposed algorithm achieves provably-optimal sample complexity for various instances of the task. For example, when the items are embedded into $\clusters$ tight clusters, the sample complexity of our algorithm reduces to $\bigObound(\num\clusters)$.  Extensive experiments on a restaurant recommendation data set 
%%We carry out extensive experiments that 
%support the conclusions of our theoretical analysis.

%%%%%%%%%%%%%%%%%%%%%%%%%%%%%%%%%%%%%%%%%%%%%%%%%%%%%%%%%
%%%%%%%%%%%%%%%%%%%%%%%%%%%%%%%%%%%%%%%%%%%%%%%%%%%%%%%%% BIB
%\clearpage
% \setlength{\bibsep}{6.5pt}  % TODO: REMOVE
\bibliographystyle{icml2016}
\bibliography{arxiv-hemimetrics}  % sigproc.bib is the name of the Bibliography in this case

%\addtocounter{section}{4}

%%%%%%%%%%%%%%%%%%%%%%%%%%%%%%%%%%%%%%%%%%%%%%%%%%%%%%%%%
%%%%%%%%%%%%%%%%%%%%%%%%%%%%%%%%%%%%%%%%%%%%%%%%%%%%%%%%% APPENDIX
\clearpage
%!TEX root = arxiv-hemimetrics.tex

%%%%%%%%%%%%%%%%%%%%%%%%%%%%%%%%%%%%%%%%%%%%%%%%%%%%%%%%%
\onecolumn
\appendix
% page breaks allowed
{\allowdisplaybreaks
%%%%%%%%%%%%%%%%%%%%%%%%%%%%%%%%%%%%%%%%%%%%%%%%%%%%%%%%%
%!TEX root = arxiv-hemimetrics.tex

%%%%%%%%%%%%%%%%%%%%%%%%%%%%%%%%%%%%%%%%%%%%%%%%%%%%%%%%%
%%%%%%%%%%%%%%%%%%%%%%%%%%%%%%%%%%%%%%%%%%%%%%%%%%%%%%%%%
%\vspace{-3mm}
%\section{Practical Considerations}\label{sec.algoPart4-Practical}

%http://www.math.uiuc.edu/~hildebr/370/408normal.pdf
%https://en.wikipedia.org/wiki/Logit
%http://www.mathmistakes.info/facts/CalculusFacts/learn/doi/doi.html
%http://www.okstate.edu/sas/v8/saspdf/insight/chap38.pdf
%https://en.wikipedia.org/wiki/Truncated_normal_distribution
%https://www.wyzant.com/resources/lessons/math/calculus/limits/lhopital
%%%%%%%%%%%%%%%%%%%%%%%%%%%%%%%%%%%%%%%%%%%%%%%%%%%%%%%%%
%%%%%%%%%%%%%%%%%%%%%%%%%%%%%%%%%%%%%%%%%%%%%%%%%%%%%%%%%
\section{Robust Noise-tolerant Algorithms}\label{app.algoPart4-Extensions-Noise}

In this section, we introduce our noise model and the corresponding stochastic acceptance function. Furthermore, we present a robust noise-tolerant variant of \getuserresponse for this stochastic setting.
%\st{Update}
%In this section we present the noise models for the \emph{bounded noise} and for the (more realistic) \emph{unbounded noise} setting. Furthermore, we present noise tolerant algorithms \getuserresponse for both settings.

%%%%

\subsection{Noise Model}

Our noise model is parametrized by the variance matrix $\sigma \in \R_+^{\num,\num}$ which is \emph{unknown} to the algorithm. The acceptance function $\prob(\labelfunc((i,j,\cost))=1)$ is given by the CDF of a normal distribution $\mathcal{N}(\hypstar_{i,j}, \sigma_{i,j})$ truncated to $[a_{i,j}, b_{i,j}] = [\hypstar_{i,j} - \beta_{i,j}, \hypstar_{i,j} +  \beta_{i,j}]$ where $\beta_{i,j} = \min \{ \hypstar_{i,j}, \range - \hypstar_{i,j} \}$, \emph{i.e.},
\begin{align*}
  \prob(\labelfunc((i,j,\cost))=1) = \begin{cases}
     0 & 0 \leq c < a_{i,j}, \\
     1 & b_{i,j} < c \leq \range, \\
     \frac{\Phi\big(\tfrac{c-\hypstar_{i,j}}{\sigma_{i,j}}\big) - \Phi\big(\tfrac{a_{i,j}-\hypstar_{i,j}}{\sigma_{i,j}}\big)}{\Phi\big(\tfrac{b_{i,j}-\hypstar_{i,j}}{\sigma_{i,j}}\big) - \Phi\big(\tfrac{a_{i,j}-\hypstar_{i,j}}{\sigma_{i,j}}\big)} & \text{ otherwise}.
  \end{cases}
\end{align*}
Here, $\Phi(\cdot)$ denotes the CDF of the standard normal distribution. Equivalently, the acceptance function could have been defined as
\begin{align*}
\prob(\labelfunc((i,j,\cost)) = \labelpoint) =  
\begin{cases}
& 1 - \eta((i,j,\cost)) \qquad \text{ if } y = \indfunc{(\cost \geq \hypstar_{(i, j)})} \\
& \eta((i,j,\cost)) \qquad \qquad \text{ otherwise }, 
\end{cases}
\end{align*}
where $\eta((i,j,\cost))$ corresponds to the \emph{noise rate} and is given as
\begin{align*}
  \eta((i,j,\cost)) = \begin{cases}
    0 & 0 \leq c < a_{i,j}, \\
    0 & b_{i,j} < c \leq \range, \\\vspace{2mm}
    \frac{\Phi\big(\tfrac{c-\hypstar_{i,j}}{\sigma_{i,j}}\big) - \Phi\big(\tfrac{a_{i,j}-\hypstar_{i,j}}{\sigma_{i,j}}\big)}{\Phi\big(\tfrac{b_{i,j}-\hypstar_{i,j}}{\sigma_{i,j}}\big) - \Phi\big(\tfrac{a_{i,j}-\hypstar_{i,j}}{\sigma_{i,j}}\big)}  & a_{i,j} \leq \cost < \hypstar_{i,j}, \\
    1 - \frac{\Phi\big(\tfrac{c-\hypstar_{i,j}}{\sigma_{i,j}}\big) - \Phi\big(\tfrac{a_{i,j}-\hypstar_{i,j}}{\sigma_{i,j}}\big)}{\Phi\big(\tfrac{b_{i,j}-\hypstar_{i,j}}{\sigma_{i,j}}\big) - \Phi\big(\tfrac{a_{i,j}-\hypstar_{i,j}}{\sigma_{i,j}}\big)}  & \hypstar_{i,j} \leq \cost \leq b_{i,j}  . 
  \end{cases}
\end{align*}
Note that in our model, for the case $\hypstar_{i,j}=0$ the noise rate is zero and every offer is accepted; and for the case $\hypstar_{i,j}=\range$ the noise rate is also zero and all offers less than $\range$ are rejected.

The noise rate in our model is \emph{unbounded}, \emph{i.e.}, at $c=\hypstar_{i,j}$ the acceptance function $\prob(\labelfunc((i,j,\cost))=1)=0.5$.

\subsection{Robust \getuserresponse}

We now develop a robust variant of the function \getuserresponse for our noise model. The key idea is to ensure that the maximum noise $\eta_{\max}$ \emph{experienced} by the function satisfies $\eta_{\max} < 0.5$. In this way, the noise actually experienced is  \emph{bounded} and we can easily extend the function \getuserresponse from the noise-free case by issuing repeated queries and deciding on the final label via majority voting --- this approach has been extensively studied in the active learning literature~\cite{kaariainen2006active,jamieson2011active,castro2006upper}.

% {\bf Noise-tolerant Algorithms for Bounded Noise model}
{\bf \getuserresponse for bounded noise.}
Our function \getuserresponse for bounded noise is shown in Algorithm~\ref{alg.adapquerybn} --- adopted from~\cite{kaariainen2006active}. For a query $\querypoint=(i,j,\cost)$ with underlying \emph{unknown} noise rate $\eta(\querypoint)$, the number of queries to the user is
\begin{align*}
 \widetilde{\bigObound} \left( \frac{\ln(\tfrac{\num^2}{\delta})}{(0.5 - \eta(\querypoint))^2} \right).
\end{align*}

%%%%%%%%%%%%%%%%%%%%%%%%%%%%%%%%%%%%%%%%%%%%%%%%%%%%%%%%%
\begin{algorithm}[!t]
	\caption{\getuserresponse for bounded noise}\label{alg.adapquerybn}
\begin{algorithmic}[1]
	\STATE{{\bfseries Input:} \looseness -1 $\querypoint^t =  (i^t, j^t, \cost^t)$, \learnhm parameters $(\num, \range)$, error parameters $(\epsilon, \delta)$}
   	\STATE{{\bfseries Output:} label $\labelpoint^t$}
   	\STATE{{\bfseries Initialize:} \\
    	   iteration $l = 0$; probabilities $p_1^l = p_0^l = \tfrac{1}{2}$; counts $n_1^l = n_0^l = 0$\\
    	   $\delta' = \sfrac{\delta}{\big(n^2 \log{\frac{\range}{\epsilon}}\big)}$ \mbox{\; \; \; \;} \emph{ // needed for the PAC guarantees}
    	   }
   	\WHILE{\texttt{TRUE}}
		\STATE{$l = l + 1$}
		\STATE{$\labelpoint \gets \text{get response from user for query } \querypoint^t$}
		\STATE{update $n_1^l = n_1^{l-1} + \indfunc{(y=1)}$;  $n_0^l = n_0^{l-1} + \indfunc{(y=0)}$}
%    	\IF {$\labelpoint = 1$}
%	   		\STATE update $n_1^l = n_1^{l-1} + 1$
%	   	\ELSE
%	  		\STATE update $n_0^l = n_0^{l-1} + 1$
%		\ENDIF
		\STATE update $p_1^l = \frac{n_1^l}{l}$; $p_0^l = \frac{n_0^l}{l}$
		\STATE $\beta^l = \sqrt{\frac{1}{2 \dot l} \ln \left(\frac{\pi^2 l^2}{3 \delta'} \right)}$
    		\IF {$p_1^l - \beta^l \geq 0.5$}
    			\STATE{{\bfseries Return:} $\labelpoint^t = 1$}
		\ENDIF    			
    		\IF {$p_1^l + \beta^l < 0.5$}
    			\STATE{{\bfseries Return:} $\labelpoint^t = 0$}
		\ENDIF    
   	\ENDWHILE	
\end{algorithmic}
\end{algorithm}

{\bf \getuserresponse for unbounded noise.} The sample complexity of the repeated query approach depends on the noise-rate $\eta(x)$. In particular, the complexity is a function of $\gamma'=\tfrac{1}{(0.5 - \eta(x))^2}$. Thus, as long as the noise-rate is bounded away from $0.5$, the repeated query approach can be used. However, in our noise model, the noise-rate is $0.5$ at $\cost = \hypstar_{i,j}$. The key idea to deal with this challenge is, given a query $\querypoint = (i, j, \cost)$, to actually issue three queries in ``parallel":  $\querypoint^{-} = (i, j, \max(0, \cost - \alpha  \epsilon)), \querypoint = (i, j, \cost), \querypoint^{+} = (i, j, \min(\range, \cost + \alpha \epsilon))$, where $\alpha \in (0, 0.5)$. One of these queries has noise rate strictly less than $0.5$. The repeated query algorithm \getuserresponse for unbounded noise is shown in Algorithm~\ref{alg.adapqueryubn}.

For a query $\querypoint=(i,j,\cost)$, the number of queries to the user is
\begin{align*}
 \widetilde{\bigObound} \left( \frac{3 \ln(\tfrac{3\num^2}{\delta})}{(0.5 - \min \{ \eta(\querypoint^-), \eta(\querypoint), \eta(\querypoint^+)  \} )^2} \right).
\end{align*}
The largest noise rate $\eta_{\max}(i,j)$ experienced by \getuserresponse for pair $(i,j)$ is bounded by
\begin{align*}
  \eta_{\max}(i,j) = \min \{ \eta(i, j, \hypstar_{i,j} ), \eta(i, j, \max(0, \hypstar_{i,j} - \alpha\epsilon )), \eta(i, j, \min(\range, \hypstar_{i,j} + \alpha\epsilon)) \}.
\end{align*}
We can now define the largest noise rate $\eta_{\max}$ experienced by \getuserresponse as $\eta_{\max}=\max_{i,j \in [\num]} \eta_{\max}(i,j)$.
Using the parameters of the noise model we can bound $\eta_{\max}$ in terms of the largest variance $\sigma_{\max} = \max_{i,j \in [\num]} \sigma_{i,j}$ by 
\begin{align*}
  \eta_{\max} &\leq \frac{\Phi\big(\tfrac{-\alpha\epsilon}{\sigma_{\max}}\big) - \Phi\big(\tfrac{-\sfrac{\range}{2}}{\sigma_{\max}}\big) }{\Phi\big(\tfrac{\sfrac{\range}{2}}{\sigma_{\max}}\big) - \Phi\big(\tfrac{-\sfrac{\range}{2}}{\sigma_{\max}}\big)  } \\
    &= \frac{\int_{-\sfrac{\range}{2}}^{-\alpha\epsilon} e^{-\frac{w^2}{2\sigma_{\max}^2}} \text{d}w }{  \int_{-\sfrac{\range}{2}}^{\sfrac{\range}{2}} e^{-\frac{w^2}{2\sigma_{\max}^2}} \text{d}w } \\
    &= \frac{1}{2} - \frac{\int_{0}^{\alpha\epsilon} e^{-\frac{w^2}{2\sigma_{\max}^2}} \text{d}w }{  \int_{-\sfrac{\range}{2}}^{\sfrac{\range}{2}} e^{-\frac{w^2}{2\sigma_{\max}^2}} \text{d}w }.  % TODO: FIX
\end{align*}
The worst case number of queries to the user is characterized by the quantity 
\begin{align*}
   \gamma = \frac{3 \ln(\tfrac{3\num^2}{\delta})}{(0.5 - \eta_{\max} )^2}.
\end{align*}

\begin{algorithm}[!t]
	\caption{\getuserresponse for unbounded noise}\label{alg.adapqueryubn}
\begin{algorithmic}[1]
	\STATE{{\bfseries Input:} \looseness -1 $\querypoint^t =  (i^t, j^t, \cost^t)$, \learnhm parameters $(\num, \range)$, error parameters $(\epsilon, \delta)$}
   	\STATE{{\bfseries Output:} data point $\datapoint^t = ((i^t, j^t, \overline{\cost}^t), \labelpoint^t)$}
   	\STATE{{\bfseries Initialize:} \\
	   $\delta'' = \tfrac{\delta}{3} \tfrac{\log(\tfrac{\range}{\epsilon})}{\log( \tfrac{\range}{\epsilon(1-2\alpha)} )}$  \mbox{\; \; \; \;} \emph{ // needed for the PAC guarantees}
 \\
	   $\alpha = \tfrac{1}{3}$ \\
 	   $\querypoint^t_1 =  (i^t, j^t, \cost^t)$ \\
 	   $\querypoint^t_2 =  (i^t, j^t, \max(0, \cost^t - \alpha \epsilon))$ \\
 	   $\querypoint^t_3 =  (i^t, j^t, \min(\range, \cost^t + \alpha \epsilon))$ \\ 	   
    	   iteration $l = 0$; 
    	   %probabilities $p_1^l(1) = p_0^l(1) = 0$; counts $n_1^l(1) = n_0^l(1) = 0$\\
    	   %probabilities $p_1^l(2) = p_0^l(2) = 0$; counts $n_1^l(2) = n_0^l(2) = 0$\\    	   
    	   %probabilities $p_1^l(3) = p_0^l(3) = 0$; counts $n_1^l(3) = n_0^l(3) = 0$\\  
    	   %$\delta' = \sfrac{\delta}{\big(3 \cdot n^2 \cdot \log{\frac{\range}{\epsilon}}\big)}$
    	   }
   	\WHILE{\texttt{TRUE}}
		\STATE{$l = l + 1$}\\
		\STATE{Invoke \getuserresponse in Algorithm~\ref{alg.adapquerybn} for $\querypoint^t_1$, $\querypoint^t_2$ and $\querypoint^t_3$ with noise parameters $(\epsilon, \delta'')$ in parallel; terminate execution once any of these invocations terminated --- let $m$ be the \emph{index} of the invocation that terminated first and let $y^t$ be its returned label}
%		\STATE{$x = $ $x^t(m)$}
		\STATE{{\bfseries Return:} data point $\datapoint^t = (x^t_m, y^t)$}
   	\ENDWHILE	
\end{algorithmic}
\end{algorithm}
%%%%%%%%%%%%%%%%%%%%%%%%%%%%%%%%%%%%%%%%%%%%%%%%%%%%%%%%%

%\begin{itemize}
%\item We will consider discretized space of hemimetric solutions space $\hypclass$.
%\item We will do adaptive search near the threshold-boundary.
%\end{itemize}

%More details and sample complexity of our approach are provided in Section~\ref{sec.algoPart4-Noise}.
%e extend our algorithms to two practical settings. We introduce a realistic noise-model where the query response is corrupted by stochastic noise, and design the robust error-tolerant variants of our algorithm. We also consider the online setting where the $n$ items are not known beforehand, and rather appear over time. We prove that our algorithms can achieve the same desired bounds in this online setting, thus allowing one to grow the hemimetric solution over time and incorporating the unseen items that may appear in the future. 
%adding generalization to future tasks. 

%%%%%%%%%%%%%%%%%%%%%%%%%%%%%%%%%%%%%%%%%%%%%%%%%%%%%%%%%
%%%%%%%%%%%%%%%%%%%%%%%%%%%%%%%%%%%%%%%%%%%%%%%%%%%%%%%%%

%%%%%%%%%%%%%%%%%%%%%%%%%%%%%%%%%%%%%%%%%%%%%%%%%%%%%%%%%
%%%%%%%%%%%%%%%%%%%%%%%%%%%%%%%%%%%%%%%%%%%%%%%%%%%%%%%%% EXTRA

%%%%%%%%%%%%%%%%%%%%%%%%%%%%%%%%
%%%%%%%%%%%%%%%%%%%%%%%%%%%%%%%%

%%%%%%%%%%%%%%%%%%%%%%%%%%%%%%%%%%%%%%%%%%%%%%%%%%%%%%%%%
%!TEX root = arxiv-hemimetrics.tex

%%%%%%%%%%%%%%%%%%%%%%%%%%%%%%%%%%%%%%%%%%%%%%%%%%%%%%%%%
%%%%%%%%%%%%%%%%%%%%%%%%%%%%%%%%%%%%%%%%%%%%%%%%%%%%%%%%%
%\vspace{-3mm}
%\section{Practical Considerations}\label{sec.algoPart4-Practical}
%\section{Approximate Projections \& SpeedingUp \luproj} \label{app.algoPart4-Extensions-Speedup}
\section{Speeding up \learnhm} \label{app.algoPart4-Extensions-Speedup}

%%%%%%%%%%%%%%%%%%%%%%%%%%%%%%%%%%%%%%%%%%%%%%%%%%%%%%%%%
%$\texttt{pivots}, \texttt{sources}, \texttt{targets}$
\begin{algorithm}[!t]
	\caption{Faster variant of \luproj}\label{alg.luprojx}
\begin{algorithmic}[1]
	\STATE{{\bfseries Input:} \looseness -1 $\widetilde{L}^{t}, \widetilde{U}^{t}$, $\texttt{Pivots}, \texttt{STpairs}$}
	\STATE{{\bfseries Output:} $L^{t}, U^{t}$}
   	\STATE{$U^{t} \gets  \texttt{\uproj}(\widetilde{U}^{t}, \texttt{Pivots}, \texttt{STpairs})$}
   	\STATE{$L^{t} \gets  \texttt{\lproj}(\widetilde{L}^{t}, U^{t}, \texttt{Pivots}, \texttt{STpairs})$}
   	\STATE{{\bfseries Return:} $L^{t}, U^{t}$}
\end{algorithmic}
\end{algorithm}
%%%%%%%%%%%%%%%%%%%%%%%%%%%%%%%%%%%%%%%%%%%%%%%%%%%%%%%%%

%%%%%%%%%%%%%%%%%%%%%%%%%%%%%%%%%%%%%%%%%%%%%%%%%%%%%%%%%
\begin{algorithm}[!t]
	\caption{Faster variant of \uproj}\label{alg.uprojx}
\begin{algorithmic}[1]
	\STATE{{\bfseries Input:} \looseness -1 $\widetilde{U}^{t}$, $\texttt{Pivots}, \texttt{STpairs}$}
	\STATE{{\bfseries Output:} $U^{t}$}
  	%\STATE{}
  	 \STATE{{\bfseries Initialize:} $U^{t} =  \widetilde{U}^{t}$}
	\FOR{$k \in \texttt{Pivots}$}
		\FOR{$(i,j) \in \texttt{STpairs}$}
			\STATE{ $U^{t}_{i, j} = \min {\left(U^{t}_{i, j}, U^{t}_{i, k} + U^{t}_{k, j}\right)}$ }
		\ENDFOR
	\ENDFOR
   	\STATE{{\bfseries Return:} $U^{t}$}
\end{algorithmic}
\end{algorithm}
%%%%%%%%%%%%%%%%%%%%%%%%%%%%%%%%%%%%%%%%%%%%%%%%%%%%%%%%%

%%%%%%%%%%%%%%%%%%%%%%%%%%%%%%%%%%%%%%%%%%%%%%%%%%%%%%%%%
\begin{algorithm}[!t]
	\caption{Faster variant of \lproj}\label{alg.lprojx}
\begin{algorithmic}[1]
	\STATE{{\bfseries Input:} \looseness -1 $\widetilde{L}^{t}, U^{t}$, $\texttt{Pivots}, \texttt{STpairs}$}
	\STATE{{\bfseries Output:} $L^{t}$}	
  	 \STATE{{\bfseries Initialize:} $L^{t} =  \widetilde{L}^{t}$}
	\FOR{$k \in \texttt{Pivots}$}
		\FOR{$(i,j) \in \texttt{STpairs}$}
				\STATE{ $L^{t}_{i, j} = \max {\left(L^{t}_{i, j}, L^{t}_{i, k} - U^{t}_{j, k}, L^{t}_{k, j} - U^{t}_{k, i}\right)}$ }		
		\ENDFOR
	\ENDFOR
   	\STATE{{\bfseries Return:} $L^{t}$}
\end{algorithmic}
\end{algorithm}
%%%%%%%%%%%%%%%%%%%%%%%%%%%%%%%%%%%%%%%%%%%%%%%%%%%%%%%%%

The key idea for speeding up \learnhm is that we can choose the constraints that should be exploited instead of exploiting all the constraints after every query (line 6 in \uproj \& \lproj). If the constraints to be exploited are selected cleverly, we can still get reasonable benefits from tightening lower and upper bounds. For \learnhm this can be achieved as follows:
\begin{itemize}
\item First, we do not need to invoke \luproj after every query (this is equivalent to not exploiting any violated constraint). At any iteration $t$, \learnhm invokes $\luproj$ only if $\widetilde{U}^t_{i^t,j^t} - \widetilde{L}^t_{i^t,j^t} \leq \epsilon$. Otherwise, it simply sets $(L^t, U^t) \gets (\widetilde{L}^t, \widetilde{U}^t)$.
\item Second, when \learnhm invokes \luproj at iteration $t$ we only consider the $2\num$ constraints which involve $i^t,j^t$ in lines 4 and 5 of Algorithms~\ref{alg.uproj} and~\ref{alg.lproj}.
\end{itemize}

The faster variant of \luproj is given by the function in Algorithm~\ref{alg.luprojx}, which in turn calls the faster variant of \uproj (Algorithm~\ref{alg.uprojx}) and the faster variant of \lproj (Algorithm~\ref{alg.lprojx}). The function \luproj in Algorithm~\ref{alg.luprojx} has the additional inputs $\texttt{Pivots}, \texttt{STpairs}$.  By calling \luproj with $\texttt{Pivots}=\emptyset, \texttt{STpairs}=\emptyset$, this is equivalent to not exploiting any violated constraint.  
At iteration $t$, after \learnhm invoked the query $(i^t, j^t, \cost^t)$ proposed by \qclique and updated the local bounds, \learnhm invokes \luproj (when $\widetilde{U}^t_{i^t,j^t} - \widetilde{L}^t_{i^t,j^t} \leq \epsilon$) with $\texttt{STpairs} = (\{ \max(i^t,j^t )\} \times \clique) \cup ( \clique \times \{ \max(i^t,j^t \} ) $ and sets $\texttt{Pivots} = \{ \min(i^t,j^t) \}$.
%=  we only consider the $2\num$ constraints which involve $i^t,j^t$

\learnhm implementing this idea invokes $\luproj$ at most $\num^2$ times with a runtime of $4 \num$. Hence, the total runtime of \learnhm is $\Thetabound(\num^3)$. Most importantly, the sample complexity bounds from the previous section still apply. In fact, the proofs for the sample complexity bounds analyze the speeded up variant of \learnhm because of ease of analysis.

Furthermore, we can make the following observations:
\begin{itemize}
\item The function \luproj in Algorithm~\ref{alg.luprojx} is a strict generalization of  Algorithm~\ref{alg.luproj} --- by passing $\texttt{Pivots}=[\num], \texttt{STpairs}=[\num]^2$, it reduces to Algorithm~\ref{alg.luproj}. 
\item  The resulting output $L^t$ and $U^t$ may not be the optimal solutions to the Problems~\ref{sec.algoPart2-LUBounds.proj.6} and~\ref{sec.algoPart2-LUBounds.proj.5}, respectively.
\end{itemize} 

\section{Experimental Results for Stochastic Settings} \label{app.experiments}

%%%%%%%%%%%%%%%%%%%%%%%%%%%%%%%%%%%%%%%%%%%%%%%%%%%%%%%%%
\begin{figure*}[!t]
  \begin{subfigure}[b]{0.33\textwidth}
    \centering
    \includegraphics[width=0.95\textwidth]{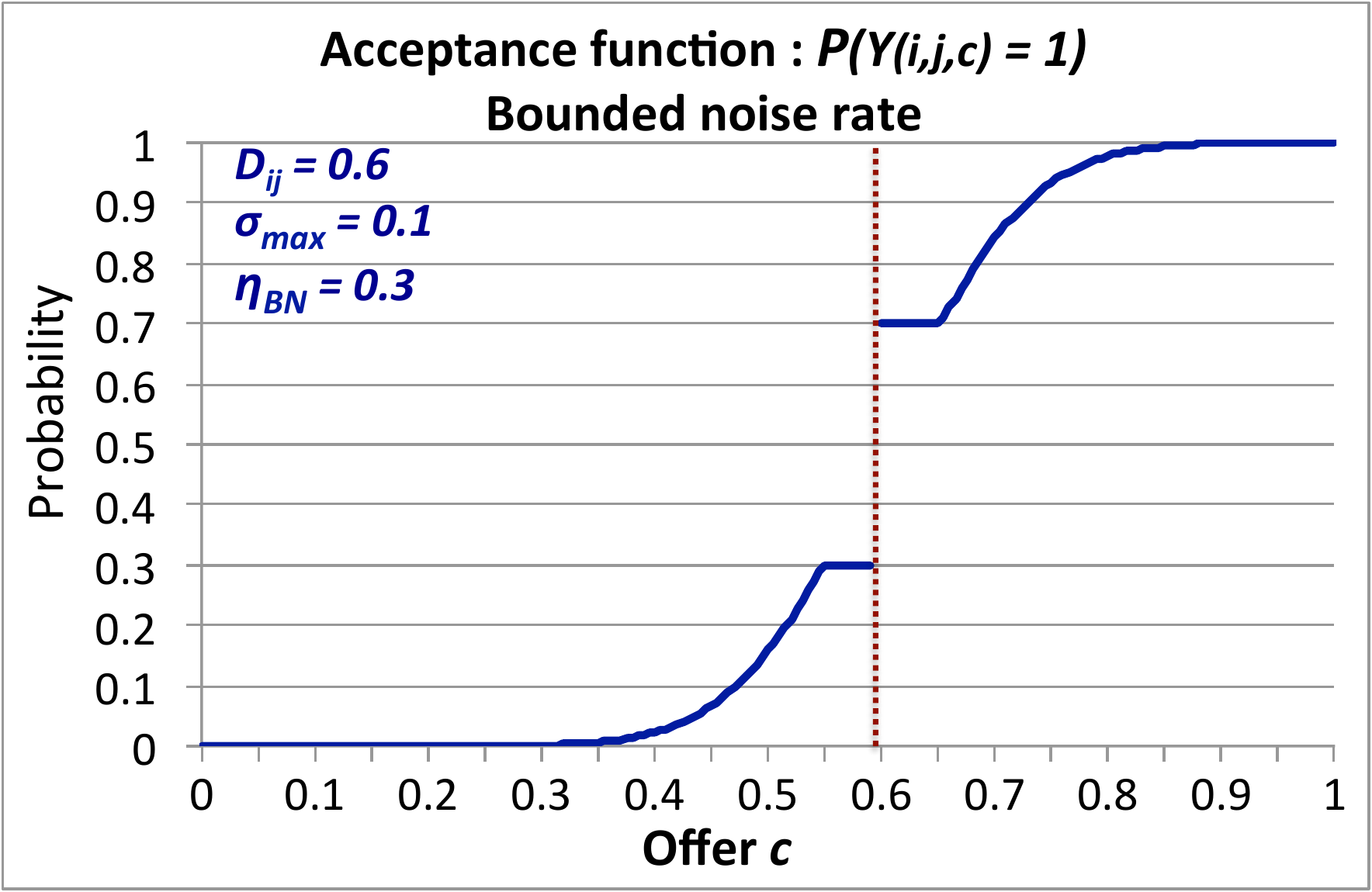}
    \caption{Acceptance function for bounded noise}
    \label{fig.cdf_bounded-noise}
  \end{subfigure}
  \begin{subfigure}[b]{0.33\textwidth}
    \centering
    \includegraphics[width=0.95\textwidth]{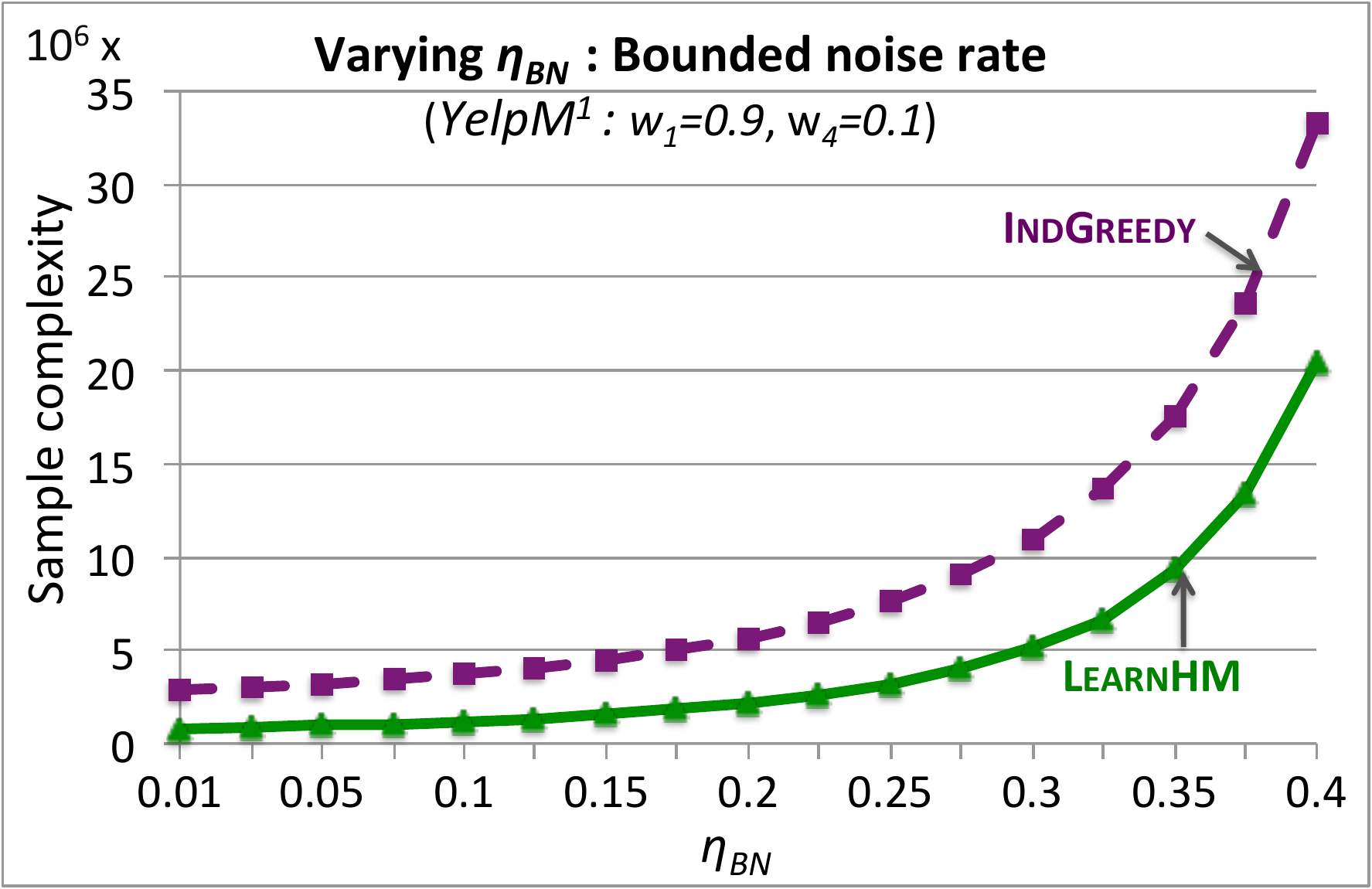}
    \caption{Varying bounded noise $\eta_{\text{BN}}$ (\emph{YelpM}$^1$)}
    \label{fig.yelp_model1_bnoise_vary-bnoisecutoff}
  \end{subfigure}
  \begin{subfigure}[b]{0.33\textwidth}
    \centering
    \includegraphics[width=0.95\textwidth]{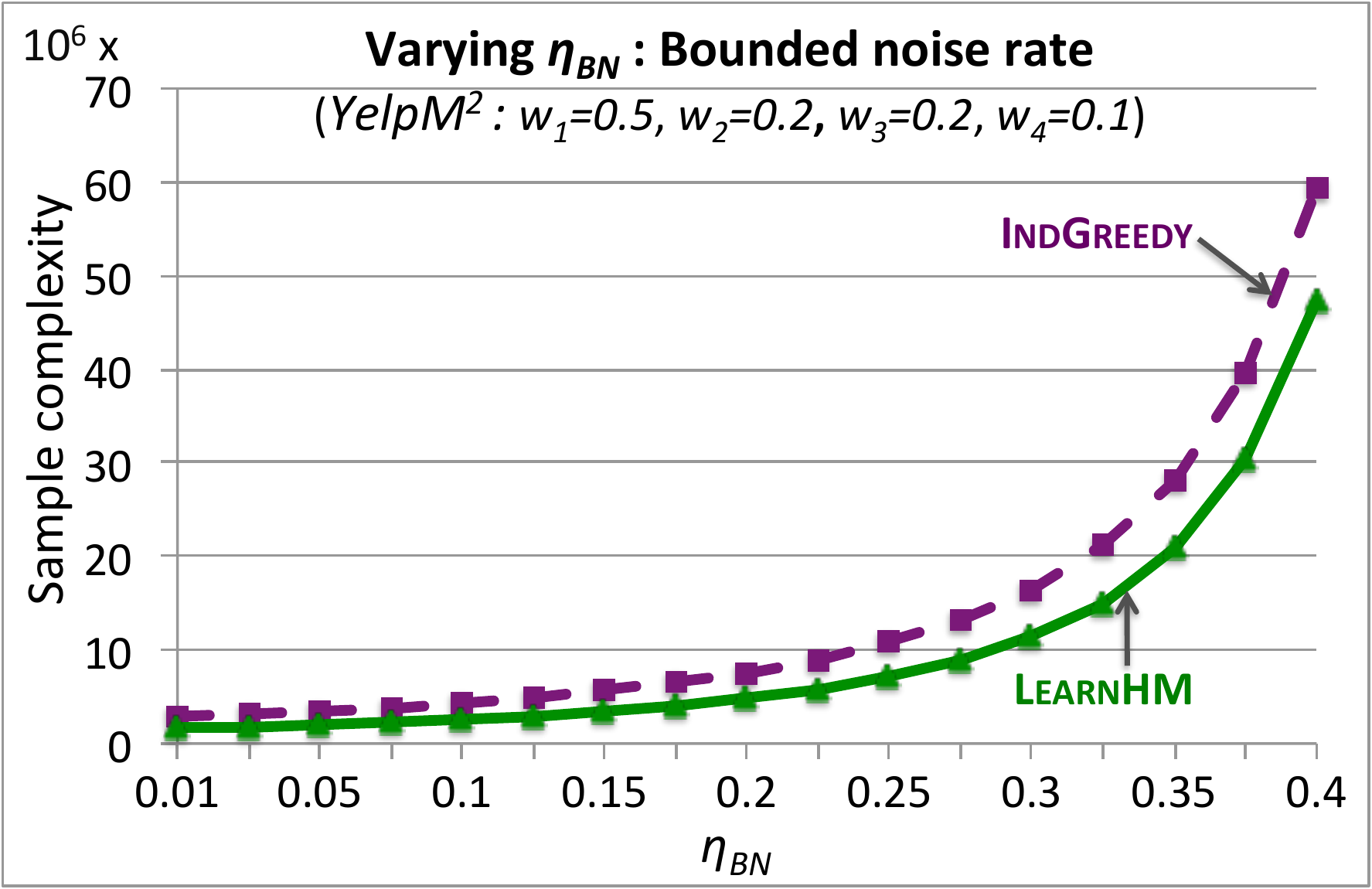}
    \caption{Varying bounded noise $\eta_{\text{BN}}$ (\emph{YelpM}$^2$)}
    \label{fig.yelp_model2_bnoise_vary-bnoisecutoff}
  \end{subfigure}
  \begin{subfigure}[b]{0.33\textwidth}
    \centering
    \includegraphics[width=0.95\textwidth]{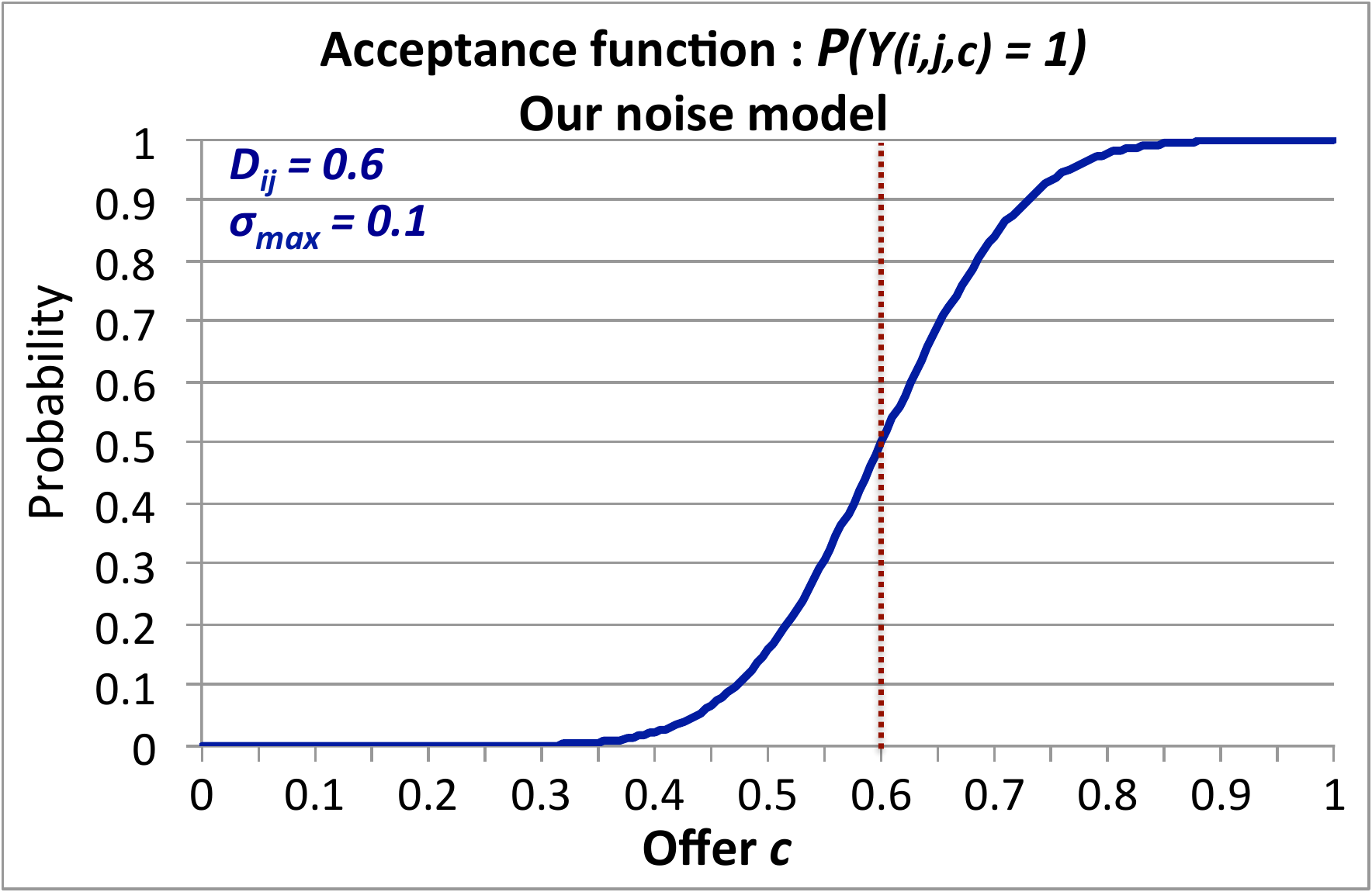}
    \caption{Acceptance function for our noise model}
    \label{fig.cdf_unbounded-noise}
  \end{subfigure}
  \begin{subfigure}[b]{0.33\textwidth}
    \centering
    \includegraphics[width=0.95\textwidth]{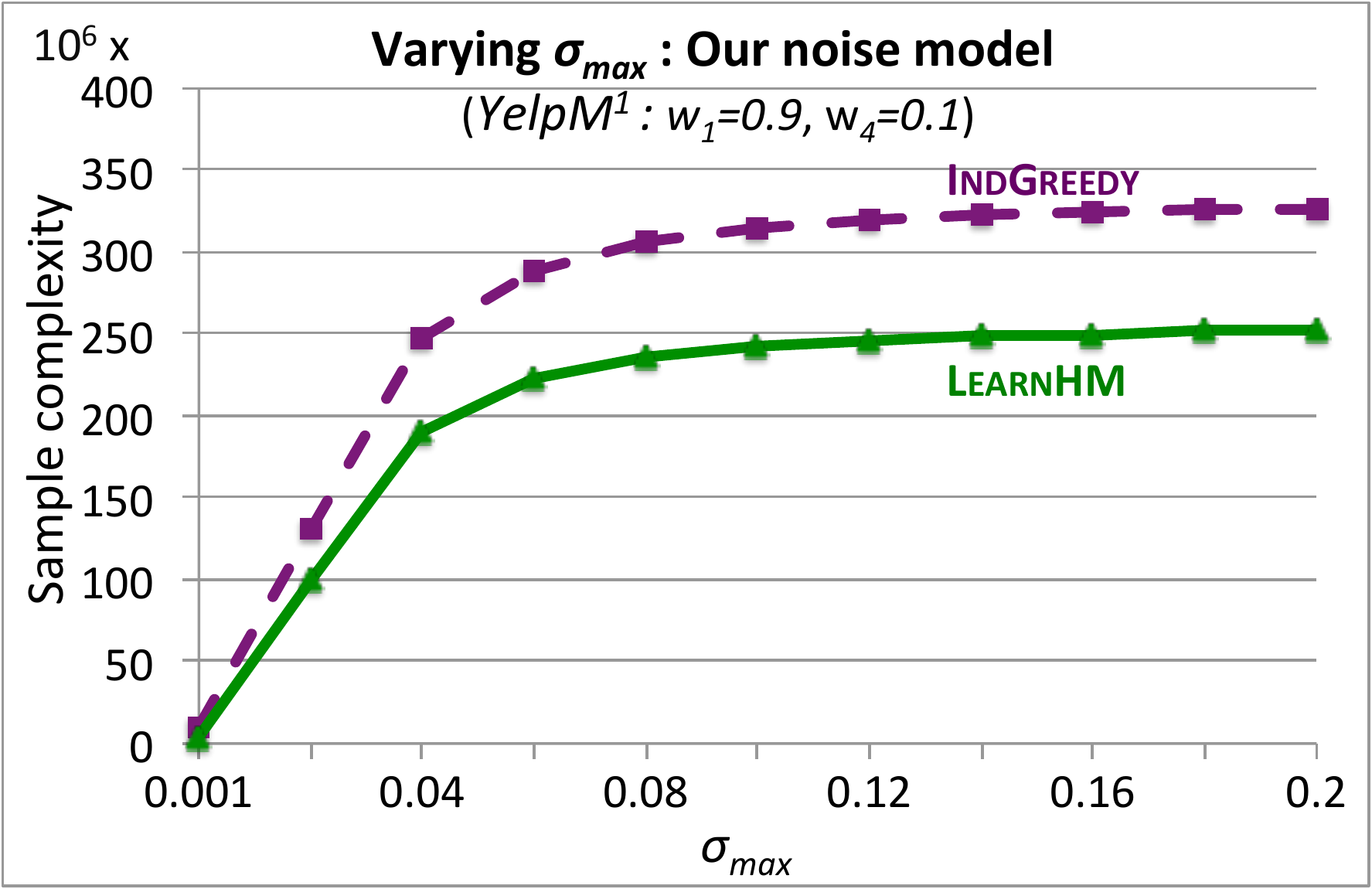}
    \caption{Varying noise variance $\sigma$ (\emph{YelpM}$^1$)}
    \label{fig.yelp_model1_unbnoise_vary-noisesigma}
  \end{subfigure}
  \begin{subfigure}[b]{0.33\textwidth}
    \centering
    \includegraphics[width=0.95\textwidth]{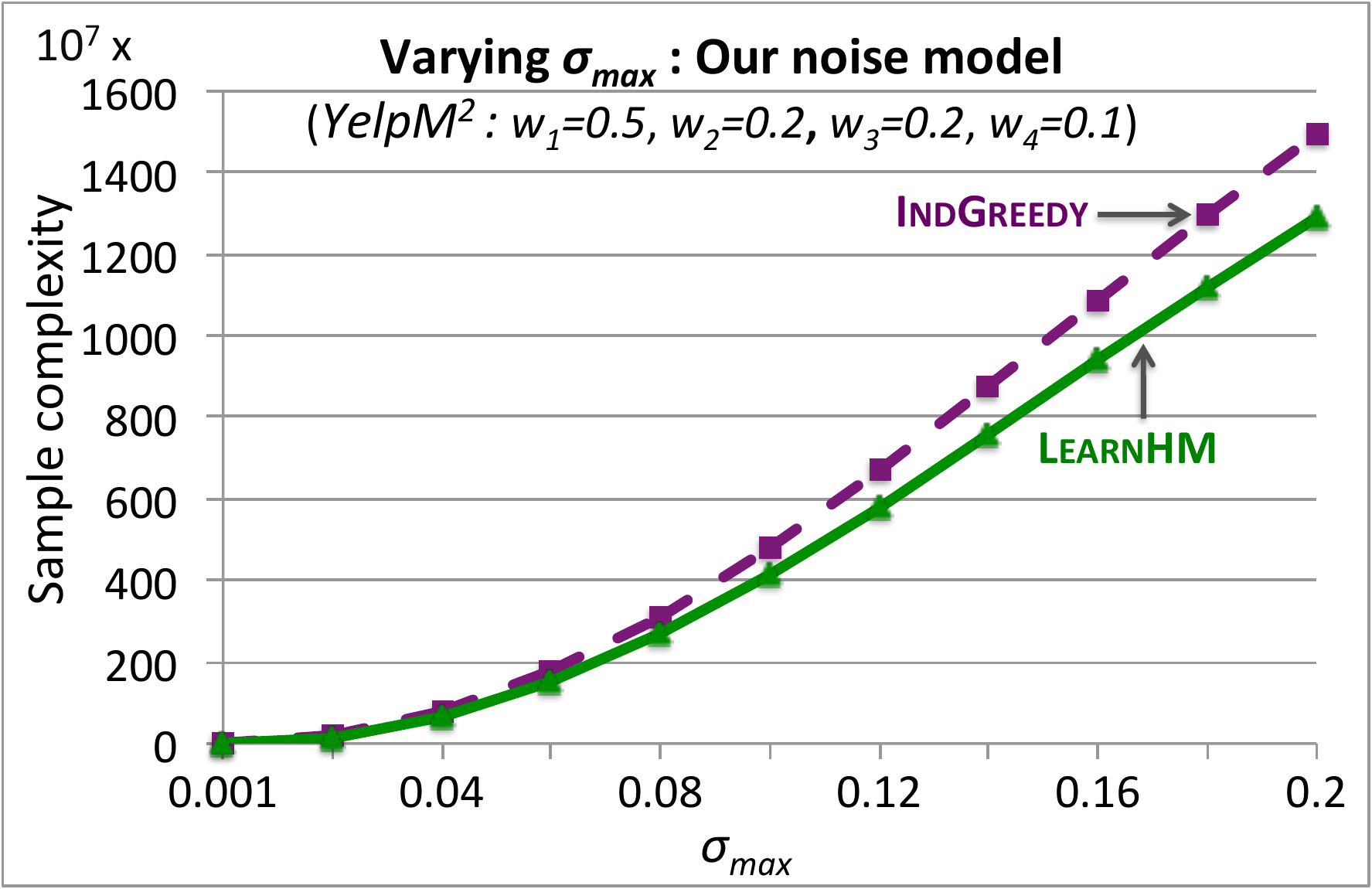}
    \caption{Varying noise variance $\sigma$ (\emph{YelpM}$^2$)}
    \label{fig.yelp_model2_unbnoise_vary-noisesigma}
  \end{subfigure}
  \vspace{-7mm}
  \caption{Sample complexity results in the stochastic settings for different user preference models defined via the \emph{Yelp} dataset. (a)--(c) results are for the stochastic setting with bounded noise rate parameterized  by $(\eta_{max}, \sigma)$ and (d)--(f)  results are for the stochastic setting with our noise model parameterized by $\sigma$.}
  \label{fig.results.noise}
\end{figure*}

We perform two sets of experiments for the stochastic setting. 
% When running the experiments for the stochastic settings, it is important to note that the underlying $\sigma$ or $\eta_{\text{BN}}$ are unknown to the algorithms.
In the stochastic setting with bounded noise rate (\emph{i.e.}, the algorithm knows that the noise rate is bounded away from $0.5$), \learnhm uses the function \getuserresponse for bounded noise in Algorithm~\ref{alg.adapquerybn}. In our noise model (\emph{i.e.}, the algorithm makes no assumption about the bound on the noise rate), \learnhm uses the function \getuserresponse for unbounded noise in Algorithm~\ref{alg.adapqueryubn}.\\
%%%
Figures~\ref{fig.results.noise} (a-c) show results for the stochastic setting with bounded noise rate parameterized by $(\eta_{\text{BN}}, \sigma)$. Then, we present results for our noise model in Figures~\ref{fig.results.noise} (d-f) parameterized by $\sigma$. We used the same value of variance for all the pairs, \emph{i.e.}, $\forall i,j \in [\num]\colon \sigma_{i,j} = \sigma_{max}$. The PAC-parameters are set to $(\epsilon=0.01, \delta=0.05)$. The other parameters are set similar to the noise-free setting, \emph{i.e.},  $\num = 100$, $\range=1$.

%%%
{\bf Stochastic acceptance function.}
Figure~\ref{fig.cdf_unbounded-noise} shows the acceptance function for $\sigma_{max} = 0.1$ when $\hypstar_{i,j} = 0.6$. Figure~\ref{fig.cdf_bounded-noise} shows the corresponding acceptance function for the bounded noise setting with the noise rate $\eta_{\text{BN}} = 0.3$. \\
%%%
{\bf Results for bounded noise rate.}
Figure~\ref{fig.yelp_model1_bnoise_vary-bnoisecutoff} and Figure~\ref{fig.yelp_model2_bnoise_vary-bnoisecutoff} show the results for bounded noise rate on the two preference models \emph{YelpM}$^1$ and \emph{YelpM}$^2$ respectively. We vary the bound on the noise rate $\eta_{\text{BN}}$, and the value of $\sigma_{max}$ is fixed to $0.1$.
We observe that the sample complexity increases drastically for all the algorithms as $\eta_{\text{BN}}$ approaches $0.5$. \learnhm outperforms \naive for both user preference models, though the relative gain of sample complexity improvements of \learnhm in comparison to \naive decreases with increasing $\eta_{\text{BN}}$. Note that as $\eta_{\text{BN}}$ increases, the cost of repeated querying near the threshold boundary dominates the sample complexity and tends to infinity.\\
%%%
{\bf Results for our noise model.}
Figure~\ref{fig.yelp_model1_unbnoise_vary-noisesigma} and Figure~\ref{fig.yelp_model2_unbnoise_vary-noisesigma} show the results for our noise model by varying the variance $\sigma_{max}$, on  \emph{YelpM}$^1$ and \emph{YelpM}$^2$ respectively. 
%The sample complexity in these experiments is higher than Figure~\ref{fig.yelp_model1_bnoise_vary-bnoisecutoff} and Figure~\ref{fig.yelp_model2_bnoise_vary-bnoisecutoff} --- learning in the unbounded noise setting is much  more challenging. 
\getuserresponse for unbounded noise in Algorithm~\ref{alg.adapqueryubn} ensures that the sample complexity stays bounded away from $\infty$. As we increase $\sigma_{max}$, the CDF eventually resembles the uniform noise setting, and the sample complexity curve saturates. 
%In Figure~\ref{fig.yelp_model1_unbnoise_vary-noisesigma}, for the user preference model \emph{YelpM}$^1$, we observe that the sample complexity curve saturates after  $\sigma_{max} = 0.12$. In \emph{YelpM}$^1$, $\hypstar$ is $(0.1,5)$-clustered and the $\hypstar_{i,j}$ values are concentrated close to $\range$ or in the range $[0, 0.1]$.

Most importantly, \learnhm constantly outperforms \naive across different noise settings and different user preference models.

%%%%%%%%%%%%%%%%%%%%%%%%%%%%%%%%%%%%%%%%%%%%%%%%%%%%%%%%%
%%%%%%%%%%%%%%%%%%%%%%%%%%%%%%%%%%%%%%%%%%%%%%%%%%%%%%%%% EXTRA

%%%%%%%%%%%%%%%%%%%%%%%%%%%%%%%%
%%%%%%%%%%%%%%%%%%%%%%%%%%%%%%%%

%%%%%%%%%%%%%%%%%%%%%%%%%%%%%%%%%%%%%%%%%%%%%%%%%%%%%%%%%
%!TEX root = arxiv-hemimetrics.tex

%%%%%%%%%%%%%%%%%%%%%%%%%%%%%%%%%%%%%%%%%%%%%%%%%%%%%%%%%
%%%%%%%%%%%%%%%%%%%%%%%%%%%%%%%%%%%%%%%%%%%%%%%%%%%%%%%%%
\section{Sample Complexity Analysis} 
\label{app.algoPart3-Query}
\begin{proof}[{\bf Proof of Theorem~\ref{theorem.complexity1}}]

Consider the current clique $\clique$, and a new item $a \in \itemset \setminus \clique$ that should be added to $\clique$. In order to do so, the policy needs to learn all distances between $a$ and all items in $\clique$. During this process of learning all distances between $a$ and all items in $\clique$, let us say that \learnhm{} / \qclique{} has already learnt all distances between $a$ and items $\clique' \subset \clique$, and is now learning the distance between $a$ and $b \in \clique \setminus \clique'$. That is, \learnhm{} / \qclique{} is growing $\clique'$ until $\clique' = \clique$, and then item $a$ will be added. We will consider the following cases: 
\begin{enumerate}[label={\bf C{\arabic{enumi}}}]
  \item $b$ is the first item of its cluster that will be added to $\clique'$.
  \item Another item $e$ from the same cluster as $b$ is already part of $\clique'$.
	\begin{enumerate}[label={\bf C{\arabic{enumi}.\arabic*}}]
  	  \item $b$ belongs to the same cluster as $a$
	  \item $b$ belongs to a different cluster as $a$
	\end{enumerate}
\end{enumerate}

{\bf C1: } In order to learn the distances between $(a, b)$ and $(b,a)$, the number of queries is bounded by 
\begin{align*}
  \Big\lceil \log\Big(\frac{\range}{\epsilon}\Big) \Big\rceil.
\end{align*}

{\bf C2.1: } In order to learn the distance between $(a,b)$ or $(b,a)$, the number of queries is bounded by 
\begin{align*}
  \Big\lceil \log\Big(\frac{2\rangeintra + 2\epsilon}{\epsilon}\Big) \Big\rceil.
\end{align*}
To see this, note that
\begin{align*}
  U_{a,b} &\leq U_{a,e} + U_{e,b} \\
          &\leq (\rangeintra + \epsilon) + (\rangeintra + \epsilon) \\
          &= 2 \rangeintra + 2 \epsilon.
\end{align*}
Since $L_{a,b} \geq 0$, and our goal is to shrink the gap $U_{a,b} - L_{a,b}$ to at most $\epsilon$, we need at most the above  numbers of queries. The same argument clearly also holds for shrinking $U_{b,a} - L_{b,a}$ to at most $\epsilon$.

{\bf C2.2: } In order to learn distance between ($a,b)$ or $(b,a)$, the number of queries is bounded by 
\begin{align*}
  \Big\lceil \log\Big(\frac{2\rangeintra + 3\epsilon}{\epsilon}\Big) \Big\rceil.
\end{align*}
To see this, note that we can subtract the inequalities $U_{a,b} \leq U_{a,e} + U_{e,b}$ and $L_{a,b} \geq L_{a,e} - U_{b,e}$ to get
$U_{a,b} - L_{a,b} \leq (U_{a,e} - L_{a,e}) + U_{e,b} + U_{b,e}$. We bound the resulting inequality as follows:
\begin{align*}
  U_{a,b} - L_{a,b} &\leq (U_{a,e} - L_{a,e}) + U_{e,b} + U_{b,e} \\
          &\leq (\epsilon) + (\rangeintra + \epsilon) + (\rangeintra + \epsilon) \\
          &= 2 \rangeintra + 3 \epsilon.
\end{align*}
Since $L_{a,b} \geq 0$, and our goal is to shrink the gap $U_{a,b} - L_{a,b}$ to at most $\epsilon$, we need at most the above  numbers of queries. A similar argument holds for shrinking $U_{b,a} - L_{b,a}$ to at most $\epsilon$.

Case {\bf C1}  can only occur $\min \{ |\clique| - 1, \clusters \}$ times. Case {\bf C2.1}, and {\bf C2.2} can occur up to $|\clique|$ times. Hence, the total cost of adding item $a$ to $\clique$ is bounded by
\begin{align*}
 2 \min \{ |\clique| - 1, \clusters \} \Big\lceil \log\Big(\frac{\range}{\epsilon}\Big) \Big\rceil + 2 |\clique| \Big\lceil \log\Big(\frac{\max \{ 2\rangeintra + 2\epsilon, 2\rangeintra + 3\epsilon \}}{\epsilon}\Big) \Big\rceil.
\end{align*}
Given that we will add $\num$ items to $\clique$, the total sample complexity of \learnhm is 
\begin{align*}
  &\sum_{|\clique|=1}^{\num-1} 2 \min \{ |\clique| - 1, \clusters \} \log\Big(\frac{\range}{\epsilon}\Big) + 2 |\clique| \log\Big(\frac{\max \{ 2\rangeintra + 2\epsilon, 2\rangeintra + 3\epsilon \}}{\epsilon}\Big) \\
  &\leq \sum_{|\clique|=1}^{\num-1} 2 \min \{ |\clique| - 1, \clusters \} \Big\lceil \log\Big(\frac{\range}{\epsilon}\Big) \Big\rceil + 2 |\clique| \Big\lceil \log\Big(\frac{ 2\rangeintra + 3\epsilon }{\epsilon}\Big) \Big\rceil.
\end{align*}
This gives us the bound on the sample complexity from Theorem~\ref{theorem.complexity1}, \emph{i.e.},
\begin{align*}
  2 \num  \clusters \Big\lceil \log\Big(\frac{\range}{\epsilon}\Big) \Big\rceil + \num^2 \Big\lceil \log\Big(\frac{2\rangeintra + 3\epsilon}{\epsilon}\Big) \Big\rceil.
\end{align*}

\end{proof}

%%%%%%%%%%%%%%%%%%%%%%%%%%%%%%%%%%%%%%%%%%%%%%%%%%%%%%%%%
\begin{proof}[{\bf Proof of Corollary~\ref{theorem.complexity2}}]

This proof follows the same lines as that of Theorem~\ref{theorem.complexity1}. Therefore, we only show the differences in the proof. Since we invoked the algorithm with $\epsilon < \Delta$, we can for any $i,j \in [\num]$ collapse the gap $U_{i,j} - L_{i,j}$ to zero whenever $U_{i,j} - L_{i,j} < \Delta$.

{\bf C1: } In order to learn the distances between $(a, b)$ and $(b,a)$, the number of queries is bounded by 
\begin{align*}
  \Big\lceil \log\Big(\frac{\range}{\Delta}\Big) \Big\rceil.
\end{align*}

{\bf C2.1: } In this setting, the number of queries is actually zero. To see this, note  that
\begin{align*}
  U_{a,b} &\leq U_{a,e} + U_{e,b} \\
          &\leq 0 + 0 \\
          &= 0.
\end{align*}

{\bf C2.2: } In this setting, the number of queries is again zero. To see this, note  that
\begin{align*}
  U_{a,b} - L_{a,b} &\leq (U_{a,e} - L_{a,e}) + U_{e,b} + U_{b,e} \\
          &\leq 0 + 0 + 0 \\
          &= 0.
\end{align*}

This leads to the total sample complexity 
\begin{align*}
  2 \num  \clusters \Big\lceil \log\Big(\frac{\range}{\Delta}\Big) \Big\rceil.
\end{align*}

To this end, we prove a lower bound on the sample complexity. Therefore, consider a more powerful algorithm which knows that there are $\clusters$ clusters, as well as one item from every cluster. We assume that the clusters are balanced equally. Now, given an item $i$, learning all the distances to all the other items, is equivalent to assigning the item $i$ to its cluster. On average $\tfrac{\clusters}{2}$ queries are need to perform this for item $i$, giving a lower bound of $\Omega(\num \clusters)$ --- even for this powerful variant of the algorithm with additional information. 
\end{proof}

%%%%%%%%%%%%%%%%%%%%%%%%%%%%%%%%%%%%%%%%%%%%%%%%%%%%%%%%%

%For learning $(i,j)$, the sample complexity for \naive is:
%\begin{align*}
%\widetilde{\bigObound} \left(\log\big(\frac{\range}{\epsilon \cdot (1 - 2 \cdot \alpha)}\big) \cdot \frac{3 \ln(3 \cdot n)}{(0.5 - \eta((i,j)))^2} \right)
%\end{align*}
%
%Note that, higher $\alpha$ reduces the maximum noise rate, however, increases the factor $\log\big(\frac{\range}{\epsilon \cdot (1 - 2 \cdot \alpha)}\big)$. Lets say we use $\alpha = \frac{1}{3}$, we get:
%
%\begin{align*}
%\widetilde{\bigObound} \left(\log\big(\frac{3 \cdot \range}{\epsilon}\big) \cdot \frac{3 \ln(\sqrt{3} \cdot n)}{(0.5 - \eta((i,j)))^2} \right)
%\end{align*}
%
%where
%$$\eta((i,j)) = 0.5 - \frac{\epsilon}{3} \cdot \tau_(i,j)$$

\begin{proof}[{\bf Proof of Theorem~\ref{theorem.complexity3}}]
Note that we introduced $\gamma=\tfrac{3 \ln(\tfrac{3\num^2}{\delta})}{(0.5 - \eta_{\max} )^2}$ in Appendix~\ref{app.algoPart4-Extensions-Noise} which quantifies the maximum number of queries to the user in the unbounded noise case when using the function \getuserresponse in Algorithm~\ref{alg.adapqueryubn}.

The only key insight that is missing to prove this theorem is to bound the number of invocations of \getuserresponse with parameter $\alpha$ needed to shrink a gap $g^0:=U_{i,j} - L_{i,j}$ to be at most $\epsilon$. For this, note that one invocation to \getuserresponse with $\cost^t = \tfrac{1}{2} (L_{i,j} + U_{i,j})$ may not return the label for $\cost$ but for $\cost \pm \alpha \epsilon$. Thus, after updating the bounds, the gap $g^1$ is in the worst-case $\tfrac{g^0}{2} + \alpha\epsilon$. Thus, to ensure that the gap is shrunk below $\epsilon$, at most $\lceil \log \tfrac{g^0}{\epsilon(1-2\alpha)} \rceil$ invocations of \getuserresponse are necessary --- in contrast, using binary search in the noise-free case, at most $\lceil \log \tfrac{g^0}{\epsilon} \rceil$ invocations are neeeded.

To finish the proof, consider the sample complexity from Theorem~\ref{theorem.complexity1} from the noise-free setting and set $\alpha = \tfrac{1}{3}$. We have to adopt it as follows:
\begin{itemize}
  \item All terms quantifying the numbers of invocations of \getuserresponse to shrink a gap $g$ to at most $\epsilon$, \emph{i.e.}, $\lceil \log(\tfrac{g}{\epsilon}) \rceil$, are replaced by $\lceil \log \tfrac{3g}{\epsilon} \rceil$.
  \item Each invocation has a cost of $\widetilde{\bigObound} (\gamma)$.
\end{itemize}
Thus, with probability $1-\delta$, \learnhm learns $\hypstar$ to precision $\epsilon$ and has sample complexity 
\begin{align*}
   \widetilde{\bigObound} \Big( \gamma \Big( 2 \num  \clusters \Big\lceil \log\Big(\frac{3 \range}{\epsilon}\Big) \Big\rceil + \num^2 \Big\lceil \log\Big(\frac{6\rangeintra + 9\epsilon}{\epsilon}\Big) \Big\rceil \Big) \Big).
\end{align*}
\end{proof}

%%%%%%%%%%%%%%%%%%%%%%%%%%%%%%%%%%%%%%%%%%%%%%%%%%%%%%%%%
\begin{proof}[{\bf Proof of Corollary~\ref{theorem.complexity4}}]

Apply the same arguments from Theorem~\ref{theorem.complexity3} to Corollary~\ref{theorem.complexity2}.
\end{proof}

%%%%%%%%%%%%%%%%%%%%%%%%%%%%%%%%%%%%%%%%%%%%%%%%%%%%%%%%%
%%%%%%%%%%%%%%%%%%%%%%%%%%%%%%%%%%%%%%%%%%%%%%%%%%%%%%%%%
%\begin{theorem} \label{theorem.complexity1}
%When underlying data coming from $\clusters$ clusters with maximum intra-distance of $\range_{1}$, and minimum inter-distance of  $\range_{2}$, then sample complexity of \qstar and \qclique whem employed with \luproj is:
%\begin{align*}
%\bigObound\bigg(\num \cdot \clusters \cdot \log\Big(\frac{\range}{\epsilon}\Big) + \num^2 \cdot \log\Big(\frac{\max\big(2(\range_1 + \epsilon), 2(\range - \range_2 + \epsilon)\big)}{\epsilon}\Big) \bigg)
%\end{align*}
%\end{theorem}
%
%\begin{theorem} \label{theorem.complexity2}
%Consider the learning of discretized hemimetric $\D^\Delta$ from class $\hypclass^\Delta$ where discretization factor $\Delta \geq \epsilon$. When underlying data coming from $\clusters$ tight clusters, i.e. with maximum intra-distance of $\range_{1} \mapsto 0$, and minimum inter-distance of  $\range_{2} \mapsto \range$, then sample complexity of \qstar and \qclique whem employed with \luproj is:
%\begin{align*}
%\bigObound\bigg(\num \cdot \clusters \cdot \log\Big(\frac{\range}{\epsilon}\Big)\bigg)
%\end{align*}
%This matches the corresponding lower bound of $\Thetabound\big(\num \cdot \clusters\big)$ required for any algorithm to learn for this problem.
%\end{theorem}
%%%%%%%%%%%%%%%%%%%%%%%%%%%%%%%%%%%%%%%%%%%%%%%%%%%%%%%%%
%!TEX root = arxiv-hemimetrics.tex

%%%%%%%%%%%%%%%%%%%%%%%%%%%%%%%%%%%%%%%%%%%%%%%%%%%%%%%%%
%%%%%%%%%%%%%%%%%%%%%%%%%%%%%%%%%%%%%%%%%%%%%%%%%%%%%%%%%
\section{Set \lclass \ for Lower Bounds and \uclass \ for Upper Bounds} \label{app.classLU}
%Class \lclass \ and \ \uclass

We now provide a formal reasoning for using the sets $\lclass$ and $\uclass$ as the lower bounds $L$ and upper bounds $U$ for projection in Problems~\ref{sec.algoPart2-LUBounds.proj.6} and~\ref{sec.algoPart2-LUBounds.proj.5}, respectively.

%, given by $\lclass$ and $\uclass$ respectively, details are given in Appendix~\ref{app.classLU}.
%will now derive a generic class of learnhm

In order to formally get the set $\uclass$ \ for upper bounds, for any $i, j, k \in [\num]$, we can state:
\begin{align}
&\hypstar_{i, j} \leq \hypstar_{i, k} + \hypstar_{k, j}  \leq  U_{i, k} + U_{k, j}  \notag \\
\implies &\hypstar_{i, j} \leq \min_{k' \in [\num]} \left(U_{i, k'} + U_{k', j} \right) \notag \\
\implies &U_{i, j} = \min_{k' \in [\num]} \left(U_{i, k'} + U_{k', j} \right) \notag \\
\implies & U_{i, j} \leq U_{i, k} + U_{k, j} \quad \forall k \in [\num] \label{app.classLU.classU.1}
\end{align}

Using Equation~\ref{app.classLU.classU.1}, we define the set of matrices for the upper bounds as follows. Note that this corresponds to the set of hemimetrics  $\hypclass$ (Equation~\ref{eq.triangle.inequality}). Remember, 
\begin{align*}
 \hypclass := \{U \in \R^{\num \times \num} \mid  U_{i, i} = 0, \; 0 \leq U_{i,j} \leq \range, U_{i, j} \leq U_{i, k} + U_{j, k} \quad \forall \ i, j, k \in [\num] \}.
\end{align*}

In order to formally get the set $\lclass$  for lower bounds, for any $i, j, k \in [\num]$, we can state:
\begin{align}
&\hypstar_{i, k} \leq \hypstar_{i, j} + \hypstar_{j, k} \notag	 \\
\implies &\hypstar_{i, j} \geq \hypstar_{i, k}  -  \hypstar_{j, k} \geq L_{i, k}  -  U_{j, k}  \notag	 \\
\implies &\hypstar_{i, j} \geq \max_{k' \in [\num]} \left(L_{i, k'} - U_{j, k'} \right) \label{app.classLU.classL.1}
\end{align}
Similarly, we can state:
\begin{align}
&\hypstar_{k, j} \leq \hypstar_{k, i} + \hypstar_{i, j}  \notag	 \\
\implies &\hypstar_{i, j} \geq \hypstar_{k, j}  -  \hypstar_{k, i} \geq L_{k, j}  -  U_{k, i}  \notag	 \\
\implies &\hypstar_{i, j} \geq \max_{k' \in [\num]} \left(L_{k', j} - U_{k', i} \right) \label{app.classLU.classL.2}
\end{align}
Combining Equations~\ref{app.classLU.classL.1}, \ref{app.classLU.classL.2} from above, we get:
\begin{align}
&\hypstar_{i, j} \geq \max_{k' \in [\num]} \big( \max \left( L_{i, k'} - U_{j, k'}, L_{k', j} - U_{k', i} \right) \big) \notag \\
\implies 
&L_{i, j} = \max_{k' \in [\num]} \big( \max \left( L_{i, k'} - U_{j, k'}, L_{k', j} - U_{k', i} \right) \big) \notag \\
\implies
& L_{i, j} \geq \max \left( L_{i, k} - U_{j, k}, L_{k, j} - U_{k, i} \right) \quad \forall k \in [\num]  \label{app.classLU.classL.3}
\end{align}
Using Equation~\ref{app.classLU.classL.3}, we define the set $\lclass$  parameterized by the upper bound matrix $U \in \uclass$ as follows:
\begin{align*}
 \lclass(U) := \{L \in \R^{\num \times \num} \mid &L_{i, i} = 0, \; 0 \leq L_{i,j} \leq U_{i,j}, L_{i, j} \geq \max \left( L_{i, k} - U_{j, k}, L_{k, j} - U_{k, i} \quad \forall \ i, j, k \in [\num] \right) \}
\end{align*}

While the set of upper bounds corresponds to the set $\hypclass$ of bounded hemimetrics, the set $\lclass(U)$ represents more complex dependencies. It turns out that the set $\lclass(U)$ cannot be constrained to contain only hemimetrics. We provide a counter-example below.

{\bf Counter-example.}
Here, we provide an example to demonstrate that if we restrict the set $\lclass(U)$ to hemimetrics only, we may obtain invalid lower bounds.

Consider the following underlying hemimetric $\hypstar$:
\begin{center}
$\hypstar = 
\begin{pmatrix}
  0    & 1 	  & 0.5 \\
  1    & 0 	  & 0.5 \\
  0.5  & 0.5  & 0 
\end{pmatrix}
$
\end{center}
 
%$\hypstar = 
%\begin{pmatrix}
%  0    & 0    & 0.5 & 1 \\
%  0    & 0    & 0.5 & 1 \\
%  0.5 & 0.5 & 0    & 0.5 \\
%  1    & 1    & 0.5 & 0
%\end{pmatrix}
%$
% 
At iteration $t=0$, we have $\dataset^0 = \emptyset$. The lower bounds $L^0$ and upper bounds $U^0$ are defined as follows:

\begin{center}
$L^0 = 
\begin{pmatrix}
  0 & 0 & 0 \\
  0 & 0 & 0 \\
  0 & 0 & 0
\end{pmatrix}
$
\end{center}

\begin{center}
$U^0 = 
\begin{pmatrix}
  0 & 1 & 1 \\
  1 & 0 & 1 \\
  1 & 1 & 0
\end{pmatrix}
$
\end{center}

At iteration $t=1$, let us consider that the algorithm picks the query $\querypoint^1 = (1, 2, 0.5)$ and receives the label $\labelpoint^1 = 0$. Let us see how to update the bounds to incorporate this labeled datapoint $\datapoint^1 = (\querypoint^1, \labelpoint^1)$. 

We have $U^1 = \widetilde{U}^1  = U^0$. We initialize $\widetilde{L}^1 = L^0$, and update  $\widetilde{L}^1_{1,2} = 0.5$. We get:

%we receive labeled data $\datapoint^1 = (\querypoint^1, \labelpoint^1)$ where $\querypoint^1 = ((1, 4), 0.5)$, and $\labelpoint^1 = 0$. 

\begin{center}
$\widetilde{L}^1 = 
\begin{pmatrix}
  0 & 0.5 & 0 \\
  0 & 0   & 0  \\
  0 & 0   & 0
\end{pmatrix}
$
\end{center}

Now if we restrict the class $\lclass$ to be of set of hemimetrics $\hypclass$, then the projection of $\widetilde{L}^1$ to set $\hypclass$ will give us the following ${L}^1$:

\begin{center}
$L^1 = 
\begin{pmatrix}
  0 & 0.5 & a \\
  0 & 0   & 0  \\
  0 & 0.5 - a   & 0
\end{pmatrix}
$
\end{center}

where $a \in [0, 0.5]$.

First of all, it is clear that the updated lower bound $L^1$ is not unique as any $a \in [0, 0.5]$ represents a corresponding solution $L^1$.

Secondly, the bounds $L^1, U^1$ after the update are not valid anymore. Consider the following hemimetric $\hyp'$, where $b \in [0,1]$, given below:
\begin{center}
$\hyp' = 
\begin{pmatrix}
  0 & 1 & b \\
  1 & 0 & 1  \\
  1 & 1-b & 0
\end{pmatrix}
$
\end{center}

$\hyp'$ is in the version space for all $b$, \emph{i.e.}, $\hyp' \in \hypclass^1$. However:
\begin{itemize}
   \item If $a > 0$, then all $\hyp'$ with $b\in[0,a)$ are excluded, \emph{i.e.}, $\hyp' \notin \hypclass(L^1, U^1)$.
   \item Otherwise, if $a=0$, then all $\hyp'$ with $b\in (0.5,1]$ are excluded, \emph{i.e.}, $\hyp' \notin \hypclass(L^1, U^1)$.
\end{itemize}
 
%
%
%possible projethere are many possible upward projects of $\widetilde{L}^1$ to $L^1$, however they are not unique and not valid. Some possible upward projection solutions are given below:
%
%
%$L^1 = 
%\begin{pmatrix}
%  0 & 0.5 & 0.5 & 0.5 \\
%  0 & 0    & 0    & 0 \\
%  0 & 0    & 0    & 0 \\
%  0 & 0    & 0    & 0
%\end{pmatrix}
%$
%$L^1 = 
%\begin{pmatrix}
%  0 & 0    & 0    & 0.5 \\
%  0 & 0    & 0    & 0.5 \\
%  0 & 0    & 0    & 0.5 \\
%  0 & 0    & 0    & 0
%\end{pmatrix}
%$
%$L^1 = 
%\begin{pmatrix}
%  0 & 0.1 & 0.1 & 0.5 \\
%  0 & 0    & 0    & 0.4 \\
%  0 & 0    & 0    & 0.4 \\
%  0 & 0    & 0    & 0
%\end{pmatrix}
%$
%$L^1 = 
%\begin{pmatrix}
%  0 & 0.2 & 0.2 & 0.5 \\
%  0 & 0    & 0    & 0.3 \\
%  0 & 0    & 0    & 0.3 \\
%  0 & 0    & 0    & 0
%\end{pmatrix}
%$
%
%and so on. The solution is not unique, and some of the solutions are not valid bounds in a sense that they eliminate some consistent hypothesis.

%%%%%%%%%%%%%%%%%%%%%%%%%%%%%%%%%%%%%%%%%%%%%%%%%%%%%%%%%
%%%%%%%%%%%%%%%%%%%%%%%%%%%%%%%%%%%%%%%%%%%%%%%%%%%%%%%%%

%%%%%%%%%%%%%%%%%%%%%%%%%%%%%%%%%%%%%%%%%%%%%%%%%%%%%%%%%
%!TEX root = arxiv-hemimetrics.tex

%%%%%%%%%%%%%%%%%%%%%%%%%%%%%%%%%%%%%%%%%%%%%%%%%%%%%%%%%
%%%%%%%%%%%%%%%%%%%%%%%%%%%%%%%%%%%%%%%%%%%%%%%%%%%%%%%%%
\section{Proof of Theorem~\ref{thm.projection}} \label{app.thm.projection}

%Proof of the theorem will build up on following lemmas --- Lemma~\ref{lemma.projection.valid} and Lemma~\ref{lemma.projection.tight}. The proof of Lemma~\ref{lemma.projection.valid} in turn builds up on the proof Lemma~\ref{lemma.projection.valid1}, Lemma~\ref{lemma.projection.valid2} and Lemma~\ref{lemma.projection.valid3}. The proof of Lemma~\ref{lemma.projection.tight} in turn builds up on the proof Lemma~\ref{lemma.projection.tight1} and Lemma~\ref{lemma.projection.tight2}.

In this section we provide the proof of Theorem~\ref{thm.projection}. The proof builds upon three lemmas, which are presented first.

%%%%%%%%%%%%%%%%%%%%%%%%%%%%%%%%%%%%%%%%%%%%%%%%%%%%%%%%%
\begin{lemma} \label{lemma.projection.valid1}
Assume $L^{t-1}$ and $U^{t-1}$ satisfy $\hypclass({L}^{t-1}, {U}^{t-1}) =  \closure( \hypclass^{t-1} )$, where $\closure(\cdot)$ denotes the closure. Then, $\widetilde{L}^{t}, \widetilde{U}^{t}$ obtained by the \emph{local updates} in Algorithm~\ref{alg.learnhm} (lines 9--12) after querying $x^t=(i^t,j^t,\cost^t)$ and observing response $y^t$, satisfy $\hypclass(\widetilde{L}^{t}, \widetilde{U}^{t}) =  \closure(\hypclass^t)$.
\end{lemma}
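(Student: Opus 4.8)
The plan is to argue that the local updates in lines~9--12 of Algorithm~\ref{alg.learnhm} correspond \emph{exactly} to intersecting the current version-space polytope with the new halfspace induced by the query response, and that passing to closures turns the strict inequality (arising when $y^t=0$) into a non-strict one without changing the set of bounds. Concretely, recall from the geometric interpretation that the version space $\hypclass^t$ is the intersection of the hemimetric constraints $\pi^0$ with the halfspace constraints accumulated from $\datapoint^1,\dots,\datapoint^t$; in particular $\hypclass^t = \hypclass^{t-1} \cap \{\hyp : \hyp(x^t)=y^t\}$, where the latter is $\{\hyp_{i^t,j^t} \le \cost^t\}$ if $y^t=1$ and $\{\hyp_{i^t,j^t} > \cost^t\}$ if $y^t=0$. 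Taking closures, $\closure(\hypclass^t) = \closure(\hypclass^{t-1}) \cap \{\hyp_{i^t,j^t} \le \cost^t\}$ or $\closure(\hypclass^{t-1}) \cap \{\hyp_{i^t,j^t} \ge \cost^t\}$ respectively; here I would invoke the fact that $\hypclass^{t-1}$ is a polytope (intersection of halfspaces inside the compact set $\hypclass$) with nonempty interior relative to its affine hull, so that closing commutes with intersecting by one more (closed or open) halfspace — the only subtlety being the $y^t=0$ case where $\closure(\{c^t < \hyp_{i^t,j^t}\}) = \{c^t \le \hyp_{i^t,j^t}\}$, and one must check this does not lose points, which holds because $\hypstar$ and a neighborhood within the version space have $\hyp_{i^t,j^t}$ bounded away from $c^t$ from above.

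Next I would compute both sides explicitly in terms of bounds. By the hypothesis, $\hypclass(L^{t-1},U^{t-1}) = \closure(\hypclass^{t-1})$. The update sets $\widetilde{U}^t = U^{t-1}$ and $\widetilde{L}^t = L^{t-1}$ except: if $y^t=1$ then $\widetilde{U}^t_{i^t,j^t} = \cost^t$, and if $y^t=0$ then $\widetilde{L}^t_{i^t,j^t} = \cost^t$. So $\hypclass(\widetilde{L}^t,\widetilde{U}^t) = \hypclass(L^{t-1},U^{t-1}) \cap \{\hyp : \hyp_{i^t,j^t} \le \cost^t\}$ in the first case and $= \hypclass(L^{t-1},U^{t-1}) \cap \{\hyp : \hyp_{i^t,j^t} \ge \cost^t\}$ in the second — note here I use that the original bounds already satisfy $0 \le L^{t-1}_{i^t,j^t} \le \cost^t \le U^{t-1}_{i^t,j^t} \le \range$, since $\hypstar \in \closure(\hypclass^{t-1})$ forces $L^{t-1}_{i^t,j^t} \le \hypstar_{i^t,j^t}$, and the response $y^t$ together with Equation~\ref{eq.querylabel.D} pins down on which side $\cost^t$ lies; this is what guarantees the intersection is just the bound-tightening and not something that truncates other coordinates. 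Combining with the closure identity from the previous paragraph gives $\hypclass(\widetilde{L}^t,\widetilde{U}^t) = \closure(\hypclass^t)$.

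The main obstacle I anticipate is the careful handling of the closure in the $y^t=0$ branch: I need to be sure that replacing the open halfspace $\{c^t < \hyp_{i^t,j^t}\}$ by its closure, \emph{and} intersecting with the already-closed polytope $\hypclass(L^{t-1},U^{t-1})$, yields precisely $\closure(\hypclass^t)$ rather than something strictly larger on the boundary slice $\{\hyp_{i^t,j^t} = c^t\}$. The resolution is that $\hypclass^t$, being a polytope defined by finitely many linear (weak and strict) inequalities with a feasible point ($\hypstar$) in its relative interior, satisfies $\closure(\hypclass^t) = \{\hyp : \text{all defining inequalities hold weakly}\}$; this is a standard fact about polyhedra and avoids pathological boundary behavior. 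A secondary point to verify is that all of this stays inside the bounded box $[0,\range]^{\num\times\num}$ so compactness is available, which is immediate from the constraints defining $\hypclass$. With these observations in place the lemma follows by the chain of set identities above.
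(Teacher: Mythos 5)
Your proposal follows essentially the same route as the paper's proof: write $\hypclass^t$ as the hemimetric polytope intersected with the accumulated (open or closed) halfspaces, observe that taking the closure replaces the strict inequality from a $\labelpoint^t=0$ response by a weak one, and note that the local update of a single entry of $\widetilde{L}^t$ or $\widetilde{U}^t$ realizes exactly that one additional weak halfspace on top of $\hypclass(L^{t-1},U^{t-1})=\closure(\hypclass^{t-1})$. If anything you are more explicit than the paper (which calls the closure step ``an easy geometric argument''); the only quibble is that the segment argument needs only a feasible point satisfying the strict constraints strictly (which $\hypstar$ always is), not a point in the relative interior, and that $\cost^t\in[L^{t-1}_{i^t,j^t},U^{t-1}_{i^t,j^t}]$ is guaranteed by the midpoint choice in \qclique rather than by the response itself.
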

\begin{proof}
Note that $\hypclass^t$ can be written as $\hypclass^t = \hypclass \cap \Pi^t$, where $\Pi^t \subseteq \R^{\num \times \num}$ is the intersection of half-spaces defined by the constraints of the labeled datapoints. Formally,
\begin{align*}
  \Pi^t &= \{ \hyp \in \R^{\num \times \num} \mid \indfunc(c^{t'} \geq \hyp_{i^{t'}, j^{t'}}) = y^{t'} \quad \forall t' \leq t\} \\
    &= \bigcap_{t' \leq t} \{ \hyp \in \R^{\num \times \num} \mid \indfunc(c^{t'} \geq \hyp_{i^{t'}, j^{t'}}) = y^{t'} \} \\
    &= \bigcap_{t' \leq t} \{ \hyp \in \R^{\num \times \num} \mid c^{t'} \geq \hyp_{i^{t'}, j^{t'}} \textnormal{ if }  y^{t'}  = 1, \; \textnormal{ or } c^{t'} < \hyp_{i^{t'}, j^{t'}} \textnormal{ if }  y^{t'}  = 0\}.
\end{align*}
Consequently, it follows from an easy geometric argument that $\closure(\hypclass^t) = \hypclass \cap \overline{\Pi}^t$, where
\begin{align*}
  \overline{\Pi}^t &= \bigcap_{t' \leq t} \{ \hyp \in \R^{\num \times \num} \mid c^{t'} \geq \hyp_{i^{t'}, j^{t'}} \textnormal{ if }  y^{t'}  = 1, \; \textnormal{ or } c^{t'} \leq \hyp_{i^{t'}, j^{t'}} \textnormal{ if }  y^{t'}  = 0\}.
\end{align*}

It is now easy to see the claimed equivalence of $\hypclass(\widetilde{L}^{t}, \widetilde{U}^{t})$ and $\closure(\hypclass^t)$. Consider the following two cases:
\begin{itemize}
  \item {\bf C1:} The response is given as $y^t=1$. In this case $\widetilde{U}^{t}$ is updated to $\widetilde{U}^{t} = c^t$ in Algorithm~\ref{alg.learnhm} (lines 9--12). Hence, $\hypclass(\widetilde{L}^{t}, \widetilde{U}^{t})$ consists of all hypothesis $\hyp \in \hypclass({L}^{t-1}, {U}^{t-1}) =  \closure( \hypclass^{t-1} )$ that satisfy $c^t \geq \hyp_{i^t,j^t}$. Also, $\closure(\hypclass^t) = \closure(\hyp^{t-1}) \cap \{ \hyp \in \R^{\num \times \num} \mid c^{t'} \geq \hyp_{i^{t'}, j^{t'}} \}$, \emph{i.e.}, the same hypotheses are in $\closure(\hypclass^t)$.
  \item {\bf C2:} The response is given as $y^t=0$. In this case $\widetilde{L}^{t}$ is updated to $\widetilde{L}^{t} = c^t$ in Algorithm~\ref{alg.learnhm} (lines 9--12). Hence, $\hypclass(\widetilde{L}^{t}, \widetilde{U}^{t})$ consists of all hypothesis $\hyp \in \hypclass({L}^{t-1}, {U}^{t-1}) =  \closure( \hypclass^{t-1} )$ that satisfy $c^t \leq \hyp_{i^t,j^t}$. Also, $\closure(\hypclass^t) = \closure(\hyp^{t-1}) \cap \{ \hyp \in \R^{\num \times \num} \mid c^{t'} \leq \hyp_{i^{t'}, j^{t'}} \}$, \emph{i.e.}, the same hypotheses are in $\closure(\hypclass^t)$.
\end{itemize}
\end{proof}
\begin{lemma} \label{lemma.projection.valid2}
Problem~\ref{sec.algoPart2-LUBounds.proj.5} has a unique optimal solution $U^{*t}$ that is given by 
$U^{*t}_{i,j} = \max_{\hyp \in \hypclass(\widetilde{L}^{t}, \widetilde{U}^{t})} \hyp_{i,j}$ and satisfies $\hypclass(\widetilde{L}^{t}, U^{*t}) = \hypclass(\widetilde{L}^{t}, \widetilde{U}^{t})$.
\end{lemma}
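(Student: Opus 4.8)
The plan is to prove the three assertions of Lemma~\ref{lemma.projection.valid2} in the following order: (i) that $U^{*t}$ as defined pointwise by $U^{*t}_{i,j} = \max_{\hyp \in \hypclass(\widetilde{L}^{t}, \widetilde{U}^{t})} \hyp_{i,j}$ is well-defined and lies in $\uclass$; (ii) that $\hypclass(\widetilde{L}^{t}, U^{*t}) = \hypclass(\widetilde{L}^{t}, \widetilde{U}^{t})$; and (iii) that this $U^{*t}$ is the unique optimizer of Problem~\ref{sec.algoPart2-LUBounds.proj.5}.

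\textbf{Step (i): well-definedness and feasibility.} First I would note that $\hypclass(\widetilde{L}^{t}, \widetilde{U}^{t})$ is nonempty (it contains $\hypstar$, since the bounds are valid by construction — this uses Lemma~\ref{lemma.projection.valid1} together with validity of $L^{t-1},U^{t-1}$) and is a closed bounded set, being the intersection of the compact hemimetric polytope $\hypclass$ with a box; hence the coordinate maxima defining $U^{*t}_{i,j}$ are attained and finite. Clearly $U^{*t} \leq \widetilde{U}^{t}$ (every $\hyp$ in the set satisfies $\hyp \leq \widetilde U^t$) and $U^{*t}_{i,i}=0$, $0 \le U^{*t}_{i,j} \le \range$. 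The key point is that $U^{*t} \in \uclass$, i.e.\ $U^{*t}$ satisfies the triangle inequalities. For this, fix $i,j,k$; pick $\hyp \in \hypclass(\widetilde{L}^t,\widetilde{U}^t)$ achieving $\hyp_{i,j} = U^{*t}_{i,j}$. Then $U^{*t}_{i,j} = \hyp_{i,j} \le \hyp_{i,k} + \hyp_{k,j} \le U^{*t}_{i,k} + U^{*t}_{k,j}$, where the last inequality is pointwise optimality of $U^{*t}$. So $U^{*t}\in\uclass$, and together with $U^{*t} \le \widetilde U^t$ this makes $U^{*t}$ feasible for Problem~\ref{sec.algoPart2-LUBounds.proj.5}.

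\textbf{Step (ii): the version space is unchanged.} The inclusion $\hypclass(\widetilde{L}^{t}, U^{*t}) \subseteq \hypclass(\widetilde{L}^{t}, \widetilde{U}^{t})$ is immediate from $U^{*t} \leq \widetilde{U}^{t}$. For the reverse, take any $\hyp \in \hypclass(\widetilde{L}^{t}, \widetilde{U}^{t})$; then for every $i,j$ we have $\hyp_{i,j} \leq \max_{\hyp' \in \hypclass(\widetilde{L}^{t}, \widetilde{U}^{t})} \hyp'_{i,j} = U^{*t}_{i,j}$, so $\hyp \leq U^{*t}$, hence $\hyp \in \hypclass(\widetilde{L}^{t}, U^{*t})$.

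\textbf{Step (iii): uniqueness and optimality.} Let $U$ be any feasible point of Problem~\ref{sec.algoPart2-LUBounds.proj.5}, so $U \in \uclass$, $U \le \widetilde U^t$, and $\hypclass(\widetilde L^t, U) \supseteq \hypclass^t$ (equivalently $\supseteq \hypclass(\widetilde L^t,\widetilde U^t)$, using Lemma~\ref{lemma.projection.valid1} and that $\hypclass(\widetilde L^t,U)$ is closed). I claim $U \geq U^{*t}$ pointwise, which together with $U \le \widetilde U^t$ and the objective $\|U-\widetilde U^t\|_1 = \sum_{i,j}(\widetilde U^t_{i,j}-U_{i,j})$ being decreasing in each coordinate gives both optimality and uniqueness of $U^{*t}$. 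To see the claim, fix $(i,j)$ and let $\hyp^\star \in \hypclass(\widetilde L^t,\widetilde U^t)$ attain $\hyp^\star_{i,j}=U^{*t}_{i,j}$; since $\hyp^\star \in \hypclass(\widetilde L^t,\widetilde U^t) \subseteq \hypclass(\widetilde L^t, U)$, we get $U_{i,j} \ge \hyp^\star_{i,j} = U^{*t}_{i,j}$. Hence $\|U-\widetilde U^t\|_1 \ge \|U^{*t}-\widetilde U^t\|_1$ with equality iff $U=U^{*t}$.

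\textbf{Main obstacle.} The routine parts are the coordinatewise arguments; the one genuinely load-bearing step is establishing $U^{*t}\in\uclass$ in Step (i) — i.e.\ that taking coordinatewise maxima over the version space preserves the triangle inequalities — since without this $U^{*t}$ would not even be a candidate solution. The argument above handles it, but it is the crux; it relies essentially on the fact that for the \emph{upper} bounds the constraint set is exactly $\hypclass$ (closed under the relevant operation), which is precisely what fails for $\lclass$ and is why Lemma~\ref{lemma.projection.valid2} is stated only for $U^{*t}$.
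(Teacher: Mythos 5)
Your Steps (i) and (ii) are correct and coincide with the paper's own argument: the coordinatewise maximum over $\hypclass(\widetilde{L}^{t}, \widetilde{U}^{t})$ is attained, satisfies the triangle inequalities by exactly the pick-a-maximizer argument you give, and leaves the version space unchanged. The gap is in Step (iii). Problem~\ref{sec.algoPart2-LUBounds.proj.5} as stated has feasible set $\{U \in \uclass : U \leq \widetilde{U}^{t}\}$ and nothing more; the condition $\hypclass(\widetilde{L}^{t}, U) \supseteq \hypclass^t$ that you fold into ``feasibility'' is a constraint of Problem~\ref{eq.jointprojection}, not of Problem~\ref{sec.algoPart2-LUBounds.proj.5} --- indeed, showing that the unconstrained downward projection P2 nevertheless yields a valid bound is precisely the content of the lemma. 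Consequently the containment $\hypclass(\widetilde{L}^{t}, \widetilde{U}^{t}) \subseteq \hypclass(\widetilde{L}^{t}, U)$ on which your claim ``$U \geq U^{*t}$'' rests is unjustified, and it is false for, e.g., $U = 0$, which is feasible for P2. Moreover, even granting your reading of the feasible set, your concluding inequality points the wrong way: if $U \geq U^{*t}$ pointwise and both are $\leq \widetilde{U}^{t}$, then $\|U - \widetilde{U}^{t}\|_1 = \sum_{i,j}(\widetilde{U}^{t}_{i,j} - U_{i,j}) \leq \|U^{*t} - \widetilde{U}^{t}\|_1$, so the domination you establish would make $U^{*t}$ the \emph{maximizer} of the objective among your feasible points, not the minimizer.

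What is actually needed is the opposite domination: every $U$ feasible for P2 satisfies $U \leq U^{*t}$, where $U^{*t}$ denotes the coordinatewise maximum you defined. The paper obtains this via an elementwise-max construction, which is the genuinely load-bearing step: set $M_{i,j} = \max\left(U_{i,j}, U^{*t}_{i,j}\right)$. A short case analysis (as in your Step (i)) shows $M \in \uclass$; clearly $M \leq \widetilde{U}^{t}$, and $M \geq U^{*t} \geq \widetilde{L}^{t}$, the last inequality using nonemptiness of $\hypclass(\widetilde{L}^{t}, \widetilde{U}^{t})$. Hence $M \in \hypclass(\widetilde{L}^{t}, \widetilde{U}^{t})$, so $M \leq U^{*t}$ by definition of the coordinatewise maximum, forcing $U \leq M \leq U^{*t}$. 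Since the objective is strictly decreasing in each coordinate on $\{U \leq \widetilde{U}^{t}\}$, this gives $\|U - \widetilde{U}^{t}\|_1 \geq \|U^{*t} - \widetilde{U}^{t}\|_1$ with equality iff $U = U^{*t}$, which is the optimality and uniqueness claim. Your closing observation --- that everything hinges on $\uclass$ being closed under the coordinatewise maximum --- is the right instinct, but it must be deployed at the optimality step, not only to certify feasibility of $U^{*t}$.
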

\begin{proof} 
% Therefore, let $U$ be an optimal solution of Problem~\ref{sec.algoPart2-LUBounds.proj.5}. Thus, $U \leq \widetilde{U}^t$ and $U$ is a hemimetric.
Let us define $\widetilde{\hyp}_{\max}^t \in \R^{\num \times \num}$ via
\begin{align*}
  \widetilde{\hyp}_{\max;i, j}^t = \max_{\hyp \in \hypclass(\widetilde{L}^t, \widetilde{U}^t)} \hyp_{i,j} \quad \forall i,j \in [\num].
\end{align*}
Note that $\widetilde{\hyp}_{\max}^t \in \R^{\num \times \num}$ is feasible for Problem~\ref{sec.algoPart2-LUBounds.proj.5}:
\begin{itemize}
  \item $\widetilde{\hyp}_{\max}^t \leq \widetilde{U}^t$ is obvious from the definition.
  \item To see that $\widetilde{\hyp}_{\max}^t \in \hypclass$, fix $i,j \in [\num]$. Then, there exists $\hyp \in \hypclass$ such that $\widetilde{\hyp}_{\max;i,j}^t = \hyp_{i,j}$. Since $\hyp$ is a hemimetric, we have $\widetilde{\hyp}_{\max;i,j}^t = \hyp_{i,j} \leq \hyp_{i,k} + \hyp_{k,j} \leq \widetilde{\hyp}_{\max;i,k}^t + \widetilde{\hyp}_{\max;k,j}^t$.
\end{itemize}

% If $U^{*t} = \widetilde{\hyp}_{\max}^t$, we are done. Thus, assume otherwise:
We now show that $U^{*t}_{i,j} = \max_{\hyp \in \hypclass(\widetilde{L}^{t}, \widetilde{U}^{t})} \hyp_{i,j}$, which also implies uniqueness of the optimal solution. Consider the following two cases:
\begin{itemize}
  \item Assume $U^{*t} \leq \widetilde{\hyp}_{\max}^t$. If $U^{*t} = \widetilde{\hyp}_{\max}^t$, then we are done. Otherwise, this violates optimality of $U^{*t}$ as $\widetilde{\hyp}_{\max}^t$ is a better solution. 
  \item Otherwise, assume $\exists i,j \in [\num] \colon U^{*t}_{i,j} > \widetilde{\hyp}_{\max;i,j}^t$. To show a contradiction, define $M \in \R^{\num \times \num}$ via
\begin{align}
  M_{i,j} = \max \left({{U}^{*t}_{i,j}, \widetilde{\hyp}_{\max;i,j}^t}\right) \quad \forall i,j \in [\num].
\end{align}
The matrix $M$ has strictly smaller objective value for Problem~\ref{sec.algoPart2-LUBounds.proj.5} than $U^{*t}$. By showing that $M$ is feasible for Problem~\ref{sec.algoPart2-LUBounds.proj.5}, we reach a contradiction to the optimality of $U^{*t}$. Clearly, $M \leq \widetilde{U}^t$. 
To see that $M \in \hypclass$ consider the following two cases for $i,j \in [\num]$:
\begin{itemize}
  \item Case $M_{i,j} = {U}^{*t}_{i,j}$. From this we get
\begin{align*}
  M_{i,j} = {U}^{*t}_{i,j} \leq {U}^{*t}_{i,k} + {U}^{*t}_{k,j} \leq M_{i,k} + M_{k,j},
\end{align*}
where the first inequality holds because $U^{*t}$ is a hemimetric and the second inequality holds by definition of $M$.
  \item Case $M_{i,j} = \widetilde{\hyp}_{\max;i,j}^t$. Here we have
\begin{align*}
  M_{i,j} = \widetilde{\hyp}_{\max;i,j}^t \leq \widetilde{\hyp}_{\max;i,k}^t + \widetilde{\hyp}_{\max;k,j}^t \leq M_{i,k} + M_{k,j},
\end{align*}
where the first inequality holds because $\widetilde{\hyp}_{\max}$ is a hemimetric and the second inequality again holds by definition of $M$. 
\end{itemize}
 This shows that $M$ is feasible --- we reach a contradiction, and we must have $U^{*t} = \widetilde{\hyp}_{\max}^t$.
\end{itemize}

Next, we show that $\hypclass(\widetilde{L}^{t}, U^{*t}) = \hypclass(\widetilde{L}^{t}, \widetilde{U}^{t})$. Observe that $\hypclass(\widetilde{L}^{t}, U^{*t}) \subseteq \hypclass(\widetilde{L}^{t}, \widetilde{U}^{t})$ is obvious. On the other hand, any metric $\hyp \in \hypclass(\widetilde{L}^{t}, \widetilde{U}^{t})$ is also in $\hypclass(\widetilde{L}^{t}, U^{*t})$ because $U^{*t} = \widetilde{\hyp}_{\max}^t$ is by definition element-wise larger than any metric in $\hypclass(\widetilde{L}^{t}, \widetilde{U}^{t},)$ . Hence, $\hypclass(\widetilde{L}^{t}, U^{*t}) = \hypclass(\widetilde{L}^{t}, \widetilde{U}^{t})$.
Informally, this implies that replacing the upper bound $\widetilde{U}^{t}$ on the solution space by ${U}^{*t}$ does not remove any hypotheses.
\end{proof}
%%%%%%%%%%%%%%%%%%%%%%%%%%%%%%%%%%%%%%%%%%%%%%%%%%%%%%%%%

%%%%%%%%%%%%%%%%%%%%%%%%%%%%%%%%%%%%%%%%%%%%%%%%%%%%%%%%%
\begin{lemma} \label{lemma.projection.valid3}
Problem~\ref{sec.algoPart2-LUBounds.proj.6} has a unique solution $L^{*t}$ that is given by 
$L^{*t}_{i,j} = \min_{\hyp \in \hypclass(\widetilde{L}^{t}, {U}^{*t})} \hyp_{i,j}$ and satisfies $\hypclass({L}^{*t}, U^{*t}) = \hypclass(\widetilde{L}^{t}, U^{*t})$.
\end{lemma}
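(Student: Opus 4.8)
The plan is to mirror the proof of Lemma~\ref{lemma.projection.valid2}, replacing coordinatewise maxima over the feasible set by coordinatewise minima; the new difficulty is that feasibility must now be checked against the more intricate set $\lclass(U^{*t})$ rather than against $\hypclass$. Since $\hypclass(\widetilde{L}^t, U^{*t})$ is the intersection of a closed polytope with a box, it is compact, and it is nonempty (by Lemma~\ref{lemma.projection.valid2} it equals $\hypclass(\widetilde{L}^t,\widetilde{U}^t)$, which contains $\hypstar$), so the matrix $\widetilde{\hyp}^t_{\min}$ defined by $\widetilde{\hyp}^t_{\min;i,j} = \min_{\hyp \in \hypclass(\widetilde{L}^t, U^{*t})} \hyp_{i,j}$ is well defined. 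I will show $L^{*t} = \widetilde{\hyp}^t_{\min}$, which gives both the formula and uniqueness.

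First I would verify that $\widetilde{\hyp}^t_{\min}$ is feasible for Problem~\ref{sec.algoPart2-LUBounds.proj.6}. The bound $\widetilde{\hyp}^t_{\min} \geq \widetilde{L}^t$ is immediate, and $\widetilde{\hyp}^t_{\min;i,i}=0$, $0 \le \widetilde{\hyp}^t_{\min;i,j} \le U^{*t}_{i,j}$ hold because every $\hyp$ in the feasible set is a nonnegative hemimetric bounded above by $U^{*t}$. For the defining inequalities of $\lclass(U^{*t})$, fix $i,j,k$ and pick $\hyp$ attaining the minimum at $(i,j)$: the genuine triangle inequality $\hyp_{i,k} \le \hyp_{i,j} + \hyp_{j,k}$ rearranges to $\hyp_{i,j} \ge \hyp_{i,k} - \hyp_{j,k} \ge \widetilde{\hyp}^t_{\min;i,k} - U^{*t}_{j,k}$ (using $\hyp_{i,k} \ge \widetilde{\hyp}^t_{\min;i,k}$ and $\hyp_{j,k} \le U^{*t}_{j,k}$), and symmetrically $\hyp_{k,j} \le \hyp_{k,i} + \hyp_{i,j}$ gives $\hyp_{i,j} \ge \widetilde{\hyp}^t_{\min;k,j} - U^{*t}_{k,i}$. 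Hence $\widetilde{\hyp}^t_{\min} \in \lclass(U^{*t})$.

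Next I would establish optimality and uniqueness by the same dichotomy as in Lemma~\ref{lemma.projection.valid2}. If $L^{*t} \ge \widetilde{\hyp}^t_{\min}$ coordinatewise, then since $\widetilde{\hyp}^t_{\min}$ is feasible with $\widetilde{L}^t \le \widetilde{\hyp}^t_{\min} \le L^{*t}$ its objective satisfies $\|\widetilde{\hyp}^t_{\min} - \widetilde{L}^t\|_1 \le \|L^{*t} - \widetilde{L}^t\|_1$ with equality only if the matrices coincide, so optimality forces $L^{*t} = \widetilde{\hyp}^t_{\min}$. Otherwise there is a pair with $L^{*t}_{i,j} < \widetilde{\hyp}^t_{\min;i,j}$, and I set $M_{i,j} = \min(L^{*t}_{i,j}, \widetilde{\hyp}^t_{\min;i,j})$, which obeys $\widetilde{L}^t \le M \le L^{*t}$ with strict inequality to $L^{*t}$ somewhere, hence strictly smaller objective. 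The crux is to show $M \in \lclass(U^{*t})$: $M_{i,i}=0$ and $0 \le M_{i,j}\le U^{*t}_{i,j}$ are clear, and for $M_{i,j} \ge M_{i,k} - U^{*t}_{j,k}$ one splits on whether $M_{i,j}$ equals $L^{*t}_{i,j}$ or $\widetilde{\hyp}^t_{\min;i,j}$, using in either case that the corresponding matrix lies in $\lclass(U^{*t})$ together with $M_{i,k} \le L^{*t}_{i,k}$ and $M_{i,k} \le \widetilde{\hyp}^t_{\min;i,k}$; the inequality $M_{i,j} \ge M_{k,j} - U^{*t}_{k,i}$ is handled identically. Feasibility of $M$ then contradicts optimality of $L^{*t}$, so this case is vacuous and $L^{*t} = \widetilde{\hyp}^t_{\min}$, uniquely. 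Finally $\hypclass(L^{*t}, U^{*t}) = \hypclass(\widetilde{L}^t, U^{*t})$ follows from $L^{*t}\ge\widetilde{L}^t$ for $\subseteq$, and from $\hyp_{i,j}\ge\widetilde{\hyp}^t_{\min;i,j}=L^{*t}_{i,j}$ for every $\hyp\in\hypclass(\widetilde{L}^t,U^{*t})$ for $\supseteq$. I expect the main obstacle to be exactly these two feasibility checks against $\lclass(U^{*t})$: because $\lclass$ is not a set of hemimetrics one cannot invoke triangle-inequality closure as in Lemma~\ref{lemma.projection.valid2}, and must instead combine the true triangle inequalities of a witnessing hemimetric with the upper bounds $U^{*t}$ to reproduce the mixed constraints $L_{i,j} \ge L_{i,k} - U_{j,k}$ and $L_{i,j} \ge L_{k,j} - U_{k,i}$.
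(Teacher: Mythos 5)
Your proposal is correct and follows essentially the same route as the paper's proof: define $\widetilde{\hyp}^t_{\min}$ as the coordinatewise minimum over $\hypclass(\widetilde{L}^t,U^{*t})$, verify its membership in $\lclass(U^{*t})$ by combining a witnessing hemimetric's triangle inequalities with the upper bounds $U^{*t}$, and rule out $L^{*t}\neq\widetilde{\hyp}^t_{\min}$ via the coordinatewise-minimum matrix $M$ whose feasibility contradicts optimality. The only differences are cosmetic (you add an explicit well-definedness remark via compactness, and in the feasibility check for $M$ you reuse $\widetilde{\hyp}^t_{\min}\in\lclass(U^{*t})$ directly where the paper re-derives the inequality from a witnessing hemimetric), so no further changes are needed.
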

\begin{proof}
This proof essentially follows the lines of the proof of Lemma~\ref{lemma.projection.valid2}.
Let us define $\widetilde{\hyp}_{\min}^t \in \R^{\num \times \num}$ via
\begin{align*}
  \widetilde{\hyp}_{\min;i, j}^t = \min_{\hyp \in \hypclass(\widetilde{L}^t, {U}^{*t})} \hyp_{i,j} \quad \forall i,j \in [\num].
\end{align*}
Note that $\widetilde{\hyp}_{\min}^t \in \R^{\num \times \num}$ is feasible for Problem~\ref{sec.algoPart2-LUBounds.proj.6}:
\begin{itemize}
  \item $\widetilde{\hyp}_{\min}^t \geq \widetilde{L}^t$ is obvious from the definition.
  \item To see that $\widetilde{\hyp}_{\min}^t \in \lclass(U^{t*})$, fix $i,j \in [\num]$. Then, there exists $\hyp \in \hypclass(\widetilde{L}^t, {U}^{*t})$ such that $\widetilde{\hyp}_{\min;i,j}^t = \hyp_{i,j}$. Since $\hyp$ is a hemimetric, we have $\widetilde{\hyp}_{\min;i,j}^t = \hyp_{i,j} \geq \hyp_{i,k} - \hyp_{j,k} \geq \widetilde{\hyp}_{\min;i,k}^t - {U}^{*t}_{j,k}$ since $\hyp_{j,k} \leq {U}^{*t}_{j,k}$. ($\widetilde{\hyp}_{\min;i,j}^t \geq \widetilde{\hyp}_{\min;k,j}^t - {U}^{*t}_{k,i}$ follows analogously)
\end{itemize}
We now show that $L^{*t}_{i,j} = \min_{\hyp \in \hypclass(\widetilde{L}^{t}, {U}^{*t})} \hyp_{i,j}$, which also implies uniqueness of the optimal solution.
Consider the following two cases:
\begin{itemize}
  \item Assume $L^{*t} \geq \widetilde{\hyp}_{\min}^t$. If $L^{*t} = \widetilde{\hyp}_{\min}^t$, then we are done. Otherwise this violates the optimality of $L^{*t}$, as $\widetilde{\hyp}_{\min}^t$ is a better solution.
  \item Otherwise, assume $\exists i,j \in [\num]\colon L^{*t}_{i,j} < \widetilde{\hyp}_{\min;i,j}^t$. To show a contradiction, define $M \in \R^{\num \times \num}$ via
  \begin{align*}
    M_{i,j} = \min \left({{L}^{*t}_{i,j}, \widetilde{\hyp}_{\min;i,j}^t}\right) \quad \forall i,j \in [\num].
  \end{align*}
  The matrix $M$ has strictly smaller objective value for Problem~\ref{sec.algoPart2-LUBounds.proj.6} than $L^{*t}$. By showing that $M$ is feasible for Problem~\ref{sec.algoPart2-LUBounds.proj.6}, we reach a contradiction to the optimality of $L^{*t}$. Clearly, $M \geq \widetilde{L}^t$. To see that $M \in \lclass(U^{t*})$ consider the following two cases for $i,j \in [\num]$:
    \begin{itemize}
      \item Case $M_{i,j} = {L}^{*t}_{i,j}$. As ${L}^{*t}_{i,j} \in \lclass(U^{*t})$, we have that
        \begin{align*}
          M_{i,j} = {L}^{*t}_{i,j} \geq {L}^{*t}_{i,k} - U^{*t}_{j,k} \geq M_{i,k} - U^{*t}_{j,k},
        \end{align*} 
        where the first second inequality follows from the definition of $M$. ($M_{i,j} \geq M_{k,j} - {U}^{*t}_{k,i}$ follows analogously)
      \item Case $M_{i,j} = \widetilde{\hyp}_{\min;i,j}^t$. From this we get that there is some $D \in \hypclass(\widetilde{L}^t, {U}^{*t})$ such that $\hyp_{i,j} = \widetilde{\hyp}_{\min;i,j}^t$. As above, this gives 
      \begin{align*}
        M_{i,j} = \hyp_{i,j} \geq \hyp_{i,k} - \hyp_{j,k} \geq M_{i,k} - {U}^{*t}_{j,k},
      \end{align*}
      where the first inequality follows because $\hyp$ is a hemimetric and the seconds inequality follows by the definition of $M$ and the fact that $\hyp_{j,k} \leq {U}^{*t}_{j,k}$. ($M_{i,j} \geq M_{k,j} - {U}^{*t}_{k,i}$ follows analogously)
    \end{itemize}
     This shows that $M$ is feasible --- we reach a contradiction, and we must have $L^{*t} = \widetilde{\hyp}_{\min}^t$.
\end{itemize}

Next we show that $\hypclass({L}^{*t}, U^{*t}) = \hypclass(\widetilde{L}^{t}, {U}^{*t})$. Observe that $\hypclass({L}^{*t}, U^{*t}) \subseteq \hypclass(\widetilde{L}^{t}, {U}^{*t})$ is obvious. 
On the other hand, any metric $\hyp \in \hypclass(\widetilde{L}^{t}, {U}^{*t})$ is also in $\hypclass({L}^{*t}, U^{*t})$ because $L^{*t} = \widetilde{\hyp}_{\min}^t$ is by definition element-wise smaller than any metric in $\hypclass(\widetilde{L}^{t}, {U}^{*t},)$ . Hence, $\hypclass({L}^{*t}, U^{*t}) = \hypclass(\widetilde{L}^{t}, {U}^{*t})$.
\end{proof}

We can now prove the theorem.
\begin{proof}[\bfseries{Proof of Theorem~\ref{thm.projection}}]
  Let $L',U'$ be an optimal solution of Problem~\ref{eq.jointprojection}. Thus $\hypclass(L',U') \supseteq \hypclass^t$.
  
  Define $\hyp_{\min}$ and $\hyp_{\max}$ as the element-wise minimum and maximum of all $\hyp \in \closure(\hypclass^t)$, respectively. Thus, by using Lemmas~\ref{lemma.projection.valid1},~\ref{lemma.projection.valid2} and~\ref{lemma.projection.valid3} we have that $L^{*t} = \hyp_{\min}$ and $U^{*t} = \hyp_{\max}$. Consequently, $\hypclass(L^{*t}, U^{*t}) = \closure(\hypclass^t) \supseteq \hypclass^t$, \emph{i.e.}, $L^{*t}, U^{*t}$ are feasible for Problem~\ref{eq.jointprojection}.

We now show that $L^{*t} = L'$ and $U^{*t} = U'$. If $L^{*t} = L'$ and $U^{*t} = U'$, then we are done. Otherwise, $L^{*t} \neq L'$ or $U^{*t} \neq U'$ contradicts either the optimality or feasibility of $L', U'$ --- consider the following two cases:
\begin{itemize}
  \item Let $U^{*t} \neq U'$.
    \begin{itemize}
	  \item If $U^{*t} \leq U'$ and $\exists i,j \in [\num]: U^{*t}_{i,j} < U'_{i,j}$, then $U'$ cannot be optimal because $L', U^{*t}$ is a feasible solution with smaller objective value.
	  \item If on the other hand, $\exists i,j \in [\num]: U^{*t}_{i,j} > U'_{i,j}$, then $U'$ cannot be feasible because it \emph{rules out} all hemimetrics that coincide with $U^{*t}$ on the $(i,j)$th value.
    \end{itemize}
    
  \item For the case $L^{*t} \neq L'$, similar arguments as above hold.
\end{itemize}
Hence, $L'=L^{*t}$, $U'=U^{*t}$ is the unique optimal solution for Problem~\ref{eq.jointprojection}.
\end{proof}

%
%UNIQUENESS: The optimal solution of Problem~\ref{eq.jointprojection} is unique.
%\begin{proof}
%  Let $L', U'$ be an optimal solution of Problem~\ref{eq.jointprojection} and $L^*, U^*$ be optimal solutions of Problems~\ref{sec.algoPart2-LUBounds.proj.6} and ~\ref{sec.algoPart2-LUBounds.proj.5}. We can replace $L'$ by $\hyp_{\min}$ from Lemma~\ref{lemma.projection.tight2} and $U'$ by $\hyp_{\max}$ from Lemma~\ref{lemma.projection.tight1} while guaranteeing that $\hyp_{\min} \geq L'$, $\hyp_{\max} \leq U'$, and $\hypclass(\hyp_{\min}, \hypclass_{\max}) = \hypclass(L',U')$. If $\exists i,j \in \num \colon \hyp_{\min;i,j} > L'_{i,j}$ or $D_{\max;i,j} < U'_{i,j}$, then we have a contradiction since $L',U'$ cannot be optimal solutions of Problem~\ref{eq.jointprojection} ($\hyp_{\min},\hyp_{\max}$ achieves smaller objective value and is feasible). As $\hyp_{\min}$ and $\hyp_{\max}$ are unique, the optimal solution of Problem~\ref{eq.jointprojection} must be unique.
%  
%  
%\end{proof}

%%%%%%%%%%%%%%%%%%%%%%%%%%%%%%%%%%%%%%%%%%%%%%%%%%%%%%%%%
%%%%%%%%%%%%%%%%%%%%%%%%%%%%%%%%%%%%%%%%%%%%%%%%%%%%%%%%%

%%%%%%%%%%%%%%%%%%%%%%%%%%%%%%%%%%%%%%%%%%%%%%%%%%%%%%%%%
%!TEX root = arxiv-hemimetrics.tex

%%%%%%%%%%%%%%%%%%%%%%%%%%%%%%%%%%%%%%%%%%%%%%%%%%%%%%%%%
%%%%%%%%%%%%%%%%%%%%%%%%%%%%%%%%%%%%%%%%%%%%%%%%%%%%%%%%%
%\begin{theorem} \label{thm.luprojalgo}
%The lower and upper bounds $L^t, U^t$ returned by \luproj are the unique optimal solution of Problem~\ref{eq.jointprojection}. 
%\end{theorem}
%The proof of the Theorem is given in Appendix~\ref{app.thm.luprojalgo}.
%

\section{Proof of Theorem~\ref{thm.luprojalgo}} \label{app.thm.luprojalgo}

The proof of Theorem~\ref{thm.luprojalgo} builds upon Lemma~\ref{lemma.luprojalgo.convergeU} and Lemma~\ref{lemma.luprojalgo.convergeL}.
%, Lemma~\ref{lemma.luprojalgo.optU} and Lemma~\ref{lemma.luprojalgo.optL}.

$\uproj$ is in fact equivalent to the Floyd-Warshall algorithm for solving the all-pair shortest paths problem in a graph~\cite{floyd1962algorithm}. Similar equivalence has been shown by~\citet{brickell2008metric} while studying the problem of projecting a non-metric matrix to a metric via decrease-only projections. \citet{brickell2008metric} used a similar result as Lemma~\ref{lemma.luprojalgo.convergeU}, however they directly used the interpretation of shortest-path problem. We will give an algebric proof for completeness which also helps us develop similar proof for Lemma~\ref{lemma.luprojalgo.convergeL}

%The $\uproj$ has close resemblance and is in fact equivalent to the Floyd-Warshall algorithm for computing all-pair shortest paths in a graph \cite{floyd1962algorithm}.  Similar equivalence has been shown by the authors of \citet{brickell2008metric} who studied the problem of projecting a non-metric matrix to a metric and considered a specific case of downward-only projection. 
%%%%%%%%%%%%%%%%%%%%%%%%%%%%%%%%%%%%%%%%%%%%%%%%%%%%%%%%%
\begin{lemma} \label{lemma.luprojalgo.convergeU}
The function $\uproj$ in Algorithm~\ref{alg.uproj} with input parameters $\widetilde{U}^t$ returns $U^t \in \uclass$ with the following property:  for all $i,j\in[\num]$ it holds that $\big( \textnormal{\texttt{Either}}$ $U^t_{i,j} = \widetilde{U}^t_{i,j}$; $\textnormal{\texttt{OR}}$ \ $\exists k \textnormal{ s.t. }  U^t_{i,j} = U^t_{i,k} + U^t_{k,j} \big) $.
\end{lemma}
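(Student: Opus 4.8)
Write $U^{(0)}=\widetilde U^t$ and, for $k=1,\dots,\num$, let $U^{(k)}$ be the matrix held by Algorithm~\ref{alg.uproj} after the outer-loop iteration with pivot $k$ has finished, so that $U^t=U^{(\num)}$. The plan is to first observe that the pivot-$k$ iteration realises, for every $i,j$, the recursion $U^{(k)}_{i,j}=\min\!\bigl(U^{(k-1)}_{i,j},\,U^{(k-1)}_{i,k}+U^{(k-1)}_{k,j}\bigr)$, and then to prove by induction on $k$ two invariants: (I) $0\le U^{(k)}_{i,j}\le U^{(k-1)}_{i,j}$ and $U^{(k)}_{i,i}=0$ for all $i,j$; and (II) $U^{(k)}_{i,j}\le U^{(k)}_{i,l}+U^{(k)}_{l,j}$ for all $i,j$ and all $l\le k$. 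Instantiating (I) and (II) at $k=\num$ gives $U^t\in\uclass$ (the entrywise bound $U^t_{i,j}\le\range$ and the zero diagonal are inherited from $\widetilde U^t$ and preserved by the decrease-only updates, and (II) at $k=\num$ is exactly the triangle inequality). Justifying the recursion itself is pure bookkeeping: during the pivot-$k$ iteration the cell $(i,k)$ is updated by $U_{i,k}\gets\min(U_{i,k},U_{i,k}+U_{k,k})$, a no-op since $U_{k,k}$ stays $0$ throughout (by (I)), and symmetrically for $U_{k,j}$; hence when cell $(i,j)$ is processed the right-hand side uses exactly $U^{(k-1)}_{i,k}$ and $U^{(k-1)}_{k,j}$. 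Invariant (I) is then immediate from the recursion, and $U^{(k)}_{i,i}=\min(0,U^{(k-1)}_{i,k}+U^{(k-1)}_{k,i})=0$.

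\textbf{The triangle invariant (II).} For $l=k$ this is the recursion together with $U_{i,k},U_{k,j}$ being unchanged at pivot $k$, giving $U^{(k)}_{i,j}\le U^{(k-1)}_{i,k}+U^{(k-1)}_{k,j}=U^{(k)}_{i,k}+U^{(k)}_{k,j}$; this also supplies the base case $k=1$. For $l<k$ I expand $U^{(k)}_{i,l}+U^{(k)}_{l,j}$ into the four sums coming from the two defining minima and lower-bound each by $U^{(k)}_{i,j}$: the pure combination $U^{(k-1)}_{i,l}+U^{(k-1)}_{l,j}\ge U^{(k-1)}_{i,j}\ge U^{(k)}_{i,j}$ by the inductive (II) with intermediate $l\le k-1$ and by (I); the two mixed combinations each contain a sub-sum $U^{(k-1)}_{i,l}+U^{(k-1)}_{l,k}$ or $U^{(k-1)}_{k,l}+U^{(k-1)}_{l,j}$, which the inductive (II) bounds below by $U^{(k-1)}_{i,k}$ resp.\ $U^{(k-1)}_{k,j}$, leaving a residual $\ge U^{(k-1)}_{i,k}+U^{(k-1)}_{k,j}\ge U^{(k)}_{i,j}$ (the last step being the recursion); and the fourth combination contains $U^{(k-1)}_{k,l}+U^{(k-1)}_{l,k}\ge U^{(k-1)}_{k,k}=0$, again leaving $\ge U^{(k-1)}_{i,k}+U^{(k-1)}_{k,j}\ge U^{(k)}_{i,j}$. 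So $U^{(k)}_{i,l}+U^{(k)}_{l,j}\ge U^{(k)}_{i,j}$ in all cases, closing the induction.

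\textbf{Tightness.} Fix $i,j$ with $U^t_{i,j}\ne\widetilde U^t_{i,j}$; by (I) this means $U^t_{i,j}<U^{(0)}_{i,j}$, so $U_{i,j}$ is strictly decreased at some pivot. Let $k$ be the \emph{last} pivot at which this occurs. The recursion at step $k$ must then have selected its second argument, so $U^{(k)}_{i,j}=U^{(k-1)}_{i,k}+U^{(k-1)}_{k,j}$, and since $U_{i,j}$ is untouched afterwards, $U^t_{i,j}=U^{(k-1)}_{i,k}+U^{(k-1)}_{k,j}$. By monotonicity (I) we have $U^t_{i,k}\le U^{(k-1)}_{i,k}$ and $U^t_{k,j}\le U^{(k-1)}_{k,j}$, hence $U^t_{i,k}+U^t_{k,j}\le U^t_{i,j}$; combined with the triangle inequality (II) at $k=\num$, i.e.\ $U^t_{i,j}\le U^t_{i,k}+U^t_{k,j}$, this forces $U^t_{i,j}=U^t_{i,k}+U^t_{k,j}$, the desired witness. (Moreover $k\notin\{i,j\}$, since taking $k=i$ in the recursion at pivot $i$, or $k=j$ at pivot $j$, is a no-op by the zero diagonal, so $U_{i,j}$ could not have strictly decreased there.)

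\textbf{Where the work is.} The one substantive step is invariant (II): showing the triangle inequality survives \emph{every} pivot. The naive argument ``it held right after pivot $l$'' fails because later pivots further shrink $U_{i,l}$ and $U_{l,j}$; the remedy is to carry the stronger statement ``(II) for all $l\le k$'' and perform the four-way case analysis at each pivot step. Everything else — the recursion bookkeeping, invariant (I), the identification of $U^t\in\uclass$, and the tightness deduction — is routine once (II) is in hand.
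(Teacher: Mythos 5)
Your proof is correct, and its core — the induction over pivots $k$ carrying the strengthened invariant ``$U^{(k)}_{i,j}\le U^{(k)}_{i,l}+U^{(k)}_{l,j}$ for \emph{all} $l\le k$,'' proved by expanding both updated entries into their defining minima and bounding each resulting combination — is exactly the argument in the paper's proof (the paper's Equation~\ref{lemma.luprojalgo.convergeU.eq1} and its case analysis over whether $M^k_{a,k'}$ and $M^k_{k',b}$ were updated at pivot $k$ are your four combinations). You also handle, more explicitly than the paper, the in-place bookkeeping point that row $k$ and column $k$ are fixed during the pivot-$k$ sweep because the diagonal stays zero. Where you genuinely diverge is the ``\texttt{OR}'' clause: the paper carries a second invariant (its Equation~\ref{lemma.luprojalgo.convergeU.eq2}) through the induction, which forces an extra case analysis ({\bf C1}, {\bf C2.1}, {\bf C2.2}) and a small proof by contradiction to show that a witness sum established at an earlier pivot $e$ is not destroyed by later shrinkage. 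You instead derive the witness post hoc: take the last pivot at which $U_{i,j}$ strictly decreased, use decrease-only monotonicity to get $U^t_{i,k}+U^t_{k,j}\le U^t_{i,j}$, and combine with the final triangle inequality to force equality. This eliminates roughly half of the paper's inductive machinery at no cost in rigor; the paper's version of the second invariant is, however, the form reused verbatim in its proof of Theorem~\ref{thm.luprojalgo}, and your final statement is equivalent for that purpose. Both proofs are sound; yours is the leaner write-up of the same underlying idea.
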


\begin{proof}[\bfseries{Proof of Lemma~\ref{lemma.luprojalgo.convergeU}}]
For ease of notation, let us define $M^0 = \widetilde{U}^t$. During an iteration of the outer loop of \emph{pivots} $k \in [\num]$ for \uproj, we update $M^k_{i, j} = \min {\left(M^{k-1}_{i, j}, M^{k-1}_{i, k} + M^{k-1}_{j, k}\right)}$. At the end of execution, we have $k=\num$, and the output of \uproj is $M^\num$. We will prove the following properties of $M^k$ at the end of every iteration $k \in [\num]$ and $\forall a, b \in [\num]$:
\begin{align}
&\forall k' \in [k] \ M^{k}_{a,b} \leq  M^{k}_{a, k'} + M^{k}_{k', b} \label{lemma.luprojalgo.convergeU.eq1} \\
&\texttt{Either } M^{k}_{a,b} = M^{0}_{a,b}  \texttt{ OR } \exists k' \in [k] \textnormal{ s.t. } M^k_{a,b} = M^k_{a,k'} + M^k_{k',b} \label{lemma.luprojalgo.convergeU.eq2} 
\end{align}

Note that the conditions hold trivially at the start since the range of $k'$ is $\emptyset$. Also, the claim of the lemma is equivalent to the statement that the two conditions above hold at the end when $k = \num$. We will now prove this by induction.

{\bfseries Base case for $M^0$, $M^1$:}\\
Conditions in Equations~\ref{lemma.luprojalgo.convergeU.eq1},~\ref{lemma.luprojalgo.convergeU.eq2} hold trivially for $M^0$ since the range of $k'$ is $\emptyset$. Conditions in Equations~\ref{lemma.luprojalgo.convergeU.eq1},\ref{lemma.luprojalgo.convergeU.eq2} hold for $M^1$ since the range of $k'$ is $[1]$, and $\forall a, b \in [n] \ \text{ update } M^1_{a, b} = \min {\left(M^{0}_{a, b}, M^{0}_{a, 1} + M^{0}_{1, b}\right)}$ ensures that the conditions hold.

{\bfseries Induction hypothesis:}\\
At $k-1$, $M^{k-1}$ satisfies the conditions in Equations~\ref{lemma.luprojalgo.convergeU.eq1},~\ref{lemma.luprojalgo.convergeU.eq2}.

{\bfseries Inductive step:}\\
Prove that $M^{k}$ satisfies the conditions in Equations~\ref{lemma.luprojalgo.convergeU.eq1},~\ref{lemma.luprojalgo.convergeU.eq2} after we make the following update at iteration $k$:
\begin{align*}
\forall a, b \in [\num] \ \text{ update } M^k_{a, b} = \min {\left(M^{k-1}_{a, b}, M^{k-1}_{a, k} + M^{k-1}_{k, b}\right)}
\end{align*}

Consider any $a,b \in [n]$. We consider the following $3$ cases depending on the type of update to $M^k_{a, b}$.
\begin{enumerate}[label={\bf C{\arabic*}}:]
\item $M^k_{a, b} =  M^{k-1}_{a, k} + M^{k-1}_{k, b}$
\item $M^k_{a, b} =  M^{k-1}_{a, b}$
	\begin{enumerate}[label={\bf C{\arabic{enumi}.\arabic*}}:] 
		\item $M^{k-1}_{a, b} = M^{0}_{a, b}$
		\item $M^{k-1}_{a, b} = M^{k-1}_{a, e} + M^{k-1}_{e, b} \text{ where } e \in [k-1]$
	\end{enumerate}
\end{enumerate}
%are mutually exclusive and
Note that the cases {\bf C1}, {\bf C2.1} and {\bf C2.2} are exhaustive given the update rule at $k$ and the induction hypothesis for $M^{k-1}$.

{\bfseries Proving that the condition of Equation~\ref{lemma.luprojalgo.convergeU.eq1} holds for $M^k$}.\\
Let us first prove that the condition of Equation~\ref{lemma.luprojalgo.convergeU.eq1} holds for $M^k$. The proof is same for all the three cases {\bf C1}, {\bf C2.1} and {\bf C2.2}.

For $k'=k$. Then,
\begin{align*}
M^{k}_{a, k'} + M^{k}_{k',b} &= M^{k-1}_{a, k} + M^{k-1}_{k,b} \\
							    &\geq \min {\left(M^{k-1}_{a, b}, M^{k-1}_{a, k} + M^{k-1}_{k, b}\right)} \\
							    &= M^{k}_{a, b}
\end{align*}

For $k'\in [k], k' \neq k$. If, after the update, we have $M^{k}_{a, k'} = M^{k-1}_{a, k'}$ and $M^{k}_{k',b} = M^{k-1}_{k',b}$. Then, 
\begin{align*}
M^{k}_{a, k'} + M^{k}_{k',b} &= M^{k-1}_{a, k'} + M^{k-1}_{k',b} \\
							    &\geq M^{k-1}_{a, b} \\
							    &\geq \min {\left(M^{k-1}_{a, b}, M^{k-1}_{a, k} + M^{k-1}_{k, b}\right)} \\
							    &= M^{k}_{a, b}
\end{align*}

For $k'\in [k], k' \neq k$. If, after the update, we have $M^{k}_{a, k'} = M^{k-1}_{a, k'}$ and $M^{k}_{k',b} = M^{k-1}_{k',k} + M^{k-1}_{k,b}$. Then, 
\begin{align*}
M^{k}_{a, k'} + M^{k}_{k',b} &= M^{k-1}_{a, k'} + M^{k-1}_{k',k} + M^{k-1}_{k,b} \\
								&\geq M^{k-1}_{a, k} + M^{k-1}_{k,b}\\ 
                                 &\geq \min {\left(M^{k-1}_{a, b}, M^{k-1}_{a, k} + M^{k-1}_{k, b}\right)} \\
                                 &= M^{k}_{a, b}
\end{align*}

For $k'\in [k], k' \neq k$. If, after the update, we have $M^{k}_{a, k'} = M^{k-1}_{a,k} + M^{k-1}_{k,k'}$ and $M^{k}_{k',b} = M^{k-1}_{k',b}$. Then, 
\begin{align*}
M^{k}_{a, k'} + M^{k}_{k',b} &= M^{k-1}_{a,k} + M^{k-1}_{k,k'} + M^{k-1}_{k',b} \\
								&\geq M^{k-1}_{a, k} + M^{k-1}_{k,b}\\ 
                                 &\geq \min {\left(M^{k-1}_{a, b}, M^{k-1}_{a, k} + M^{k-1}_{k, b}\right)} \\
                                 &= M^{k}_{a, b}
\end{align*}

For $k'\in [k], k' \neq k$. If, after the update, we have $M^{k}_{a, k'} = M^{k-1}_{a,k} + M^{k-1}_{k,k'}$ and $M^{k}_{k',b} =  M^{k-1}_{k',k} + M^{k-1}_{k,b}$. Then, 
\begin{align*}
M^{k}_{a, k'} + M^{k}_{k',b} &= M^{k-1}_{a,k} + M^{k-1}_{k,k'} + M^{k-1}_{k',k} + M^{k-1}_{k,b}  \\
								&\geq M^{k-1}_{a, k} + M^{k-1}_{k,b}\\ 
                                 &\geq \min {\left(M^{k-1}_{a, b}, M^{k-1}_{a, k} + M^{k-1}_{k, b}\right)} \\
                                 &= M^{k}_{a, b}
\end{align*}

{\bfseries Proving that the condition of Equation~\ref{lemma.luprojalgo.convergeU.eq2} holds for $M^k$}.\\
We will do this analysis case by case for  the three cases {\bf C1}, {\bf C2.1} and {\bf C2.2}.% depending on the update rule applied to $M^k_{a,b}$.

For case {\bf C1}, \emph{i.e.} $M^k_{a, b} =  M^{k-1}_{a, k} + M^{k-1}_{k, b}$. Then, 
\begin{align*}
M^k_{a, b}  &=  M^{k-1}_{a, k} + M^{k-1}_{k, b} \\
			 &= M^{k}_{a, k} + M^{k}_{k, b}
\end{align*}
Hence the condition of Equation~\ref{lemma.luprojalgo.convergeU.eq2} holds.

For case {\bf C2.1}, \emph{i.e.} $M^k_{a, b} =  M^{k-1}_{a, b} = M^{0}_{a, b}$. Then, 
\begin{align*}
M^k_{a, b} &=  M^{k-1}_{a, b} \\
             &= M^{0}_{a, b}
\end{align*}
Hence the condition of Equation~\ref{lemma.luprojalgo.convergeU.eq2} holds.

For case {\bf C2.2}, \emph{i.e.} $M^k_{a, b} =  M^{k-1}_{a, b} = M^{k-1}_{a, e} + M^{k-1}_{e, b} \text{ where } e \in [k-1]$. Then, if $M^{k-1}_{a, e} + M^{k-1}_{e, b} = M^{k}_{a, e} + M^{k}_{e, b}$, we have:
%Then, if $M^{k-1}_{a, e} = M^{k}_{a, e}$ and $M^{k-1}_{e, b} = M^{k}_{e, b}$, we have:
\begin{align*}
M^k_{a, b}  &=  M^{k-1}_{a, e} + M^{k-1}_{e, b} \\
			 &= M^{k}_{a, e} + M^{k}_{e, b}
\end{align*}
Hence, assuming $M^{k-1}_{a, e} + M^{k-1}_{e, b} = M^{k}_{a, e} + M^{k}_{e, b}$, the condition of Equation~\ref{lemma.luprojalgo.convergeU.eq2} holds. 
%We can show that it must be true that  $M^{k-1}_{a, e} = M^{k}_{a, e}$ and $M^{k-1}_{e, b} = M^{k}_{e, b}$. 
We conclude by showing that $M^{k-1}_{a, e} + M^{k-1}_{e, b} = M^{k}_{a, e} + M^{k}_{e, b}$ holds for {\bf C2.2}.
We show this by contradiction as follows. 
%For the sake of contradiction, let us assume that $M^{k}_{a, e} < M^{k-1}_{a, e}$ or $M^{k}_{e, b} < M^{k-1}_{e, b}$. 
For the sake of contradiction, let us assume that $M^{k-1}_{a, e} + M^{k-1}_{e, b} > M^{k}_{a, e} + M^{k}_{e, b}$. 
Then, 
\begin{align*}
M^k_{a, b} = M^{k-1}_{a, b} &= M^{k-1}_{a, e} + M^{k-1}_{e, b} \\
 							   &> M^{k}_{a, e} + M^{k}_{e, b} \\
\implies M^k_{a, b} &> M^{k}_{a, e} + M^{k}_{e, b} 
\end{align*}
This is a contradiction to the proof we did above showing that $M^k$ satisfies Equation~\ref{lemma.luprojalgo.convergeU.eq1}. Hence, it holds that $M^k_{a, b}  = M^{k}_{a, e} + M^{k}_{e, b}$.

Now, putting this all together, we have proved that $M^k$ satisfies the conditions of Equations~\ref{lemma.luprojalgo.convergeU.eq1},~\ref{lemma.luprojalgo.convergeU.eq2}. Hence by induction, the final output $M^n$ of the algorithm \uproj satisfies these conditions as well, which is exactly equivalent to the claim of the lemma we want to prove.
\end{proof}

%%%%%%%%%%%%%%%%%%%%%%%%%%%%%%%%%%%%%%%%%%%%%%%%%%%%%%%%%
%%%%%%%%%%%%%%%%%%%%%%%%%%%%%%%%%%%%%%%%%%%%%%%%%%%%%%%%%

\begin{lemma} \label{lemma.luprojalgo.convergeL}
The function $\lproj$ in Algorithm~\ref{alg.lproj} with input parameters $\widetilde{L}^t, U^t$  returns $L^t \in \lclass(U^t)$ with the following property: for all $i,j\in[\num]$ it holds that  $\big( \textnormal{\texttt{Either}}$ $L^t_{i,j} = \widetilde{L}^t_{i,j}$; $\textnormal{\texttt{OR}}$ \ $\exists k \textnormal{ s.t. } L^{t}_{i, j} = \max {\big(L^{t}_{i, k} - U^{t}_{j, k}, L^{t}_{k, j} - U^{t}_{k, i}\big)} \big)$.
\end{lemma}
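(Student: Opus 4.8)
The plan is to mirror the proof of Lemma~\ref{lemma.luprojalgo.convergeU} almost line for line, exploiting the crucial structural fact that throughout the execution of \lproj the matrix $U^t$ is held \emph{fixed} (only the $L$-entries are modified), and that $U^t \in \uclass$ by Lemma~\ref{lemma.luprojalgo.convergeU}, so the triangle inequalities $U^t_{a,b} \le U^t_{a,c} + U^t_{c,b}$ are available as immutable facts. Set $M^0 = \widetilde L^t$ and let $M^k$ be the matrix after the outer-loop iteration with pivot $k\in[\num]$, so that the update rule is $M^k_{a,b} = \max\big(M^{k-1}_{a,b},\, M^{k-1}_{a,k} - U^t_{b,k},\, M^{k-1}_{k,b} - U^t_{k,a}\big)$ for all $a,b$, and $L^t = M^\num$. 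I would prove by induction on $k$ the two invariants: \textbf{(i)} for all $k'\in[k]$ and all $a,b\in[\num]$, $M^k_{a,b} \ge \max\big(M^k_{a,k'} - U^t_{b,k'},\, M^k_{k',b} - U^t_{k',a}\big)$; and \textbf{(ii)} for all $a,b$, either $M^k_{a,b} = M^0_{a,b}$, or there exists $k'\in[k]$ with $M^k_{a,b} = \max\big(M^k_{a,k'} - U^t_{b,k'},\, M^k_{k',b} - U^t_{k',a}\big)$. At $k=\num$ invariant (i) gives precisely the defining inequalities of $\lclass(U^t)$, and invariant (ii) is exactly the ``Either/OR'' property claimed by the lemma.

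The base case ($M^0$, and $M^1$) is immediate since the range of $k'$ is empty, resp.\ $\{1\}$, and the $k=1$ update handles it directly. For the inductive step I would, exactly as in the \uproj proof, distinguish \textbf{C1}: $M^k_{a,b}$ equals one of the two pivot-$k$ terms $M^{k-1}_{a,k} - U^t_{b,k}$ or $M^{k-1}_{k,b} - U^t_{k,a}$; \textbf{C2.1}: $M^k_{a,b} = M^{k-1}_{a,b} = M^0_{a,b}$; and \textbf{C2.2}: $M^k_{a,b} = M^{k-1}_{a,b} = \max\big(M^{k-1}_{a,e} - U^t_{b,e},\, M^{k-1}_{e,b} - U^t_{e,a}\big)$ for some $e\in[k-1]$ (these three are exhaustive by the induction hypothesis for $M^{k-1}$). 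To establish invariant (i) for $M^k$ I fix $k'\in[k]$ and further split according to whether $M^k_{a,k'}$ and $M^k_{k',b}$ were raised during the pivot-$k$ update; in each sub-case the desired inequality reduces, after substituting the explicit form of the raised entry, to a single triangle inequality of $U^t$ --- e.g.\ if $M^k_{a,k'} = M^{k-1}_{a,k} - U^t_{k',k}$, then $M^k_{a,k'} - U^t_{b,k'} = M^{k-1}_{a,k} - U^t_{k',k} - U^t_{b,k'} \le M^{k-1}_{a,k} - U^t_{b,k} \le M^k_{a,b}$, where the middle step is $U^t_{b,k} \le U^t_{b,k'} + U^t_{k',k}$. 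For $k'=k$ one also uses that the pivot-$k$ update leaves row $k$ and column $k$ of $M$ unchanged; this follows from a routine sub-induction that simultaneously maintains $M^k \le U^t$ and $M^k_{i,i}=0$ (using $U^t_{k,k}=0$, nonnegativity, and again the triangle inequalities of $U^t$).

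Invariant (ii) is then cheap: in case \textbf{C1} and in case \textbf{C2.2}, the ``$\le$'' direction of the claimed equality is exactly invariant (i) just proved (applied with $k'=k$, resp.\ $k'=e$), while the ``$\ge$'' direction follows from monotonicity $M^{k-1} \le M^k$ (for instance $M^k_{a,e} - U^t_{b,e} \ge M^{k-1}_{a,e} - U^t_{b,e}$, and one of the two relevant terms already equals $M^{k-1}_{a,b} = M^k_{a,b}$); case \textbf{C2.1} is immediate. This replaces the proof-by-contradiction used for the \uproj analogue by a direct argument. Finally, $L^t\in\lclass(U^t)$: invariant (i) at $k=\num$ gives the triangle-type inequalities of $\lclass(U^t)$; $L^t \ge 0$ because $\widetilde L^t \ge 0$ and updates only raise entries; $L^t \le U^t$ is the $k=\num$ case of the sub-induction above, whose base case $M^0 = \widetilde L^t \le U^t$ holds since $\widetilde L^t$ and $U^t = U^{*t}$ sandwich the nonempty version space (\emph{cf.}\ Lemma~\ref{lemma.projection.valid2}); and $L^t_{i,i}=0$ is preserved for the same reason.

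The step I expect to be the main obstacle is the bookkeeping in the inductive step for invariant (i): relative to \uproj, each entry of $M$ is a \emph{maximum of two} directional terms rather than a single sum, so the number of sub-cases --- raised via the ``$a,k$'' branch versus the ``$k,b$'' branch, for each of $M_{a,k'}$ and $M_{k',b}$ --- roughly doubles, and one must track which triangle inequality of $U^t$ is being invoked each time. The simplification that makes this manageable is precisely that $U^t$ is never altered by \lproj, so those inequalities are fixed once and for all and need not be re-derived as the $L$-entries change.
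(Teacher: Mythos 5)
Your proposal is correct and follows essentially the same route as the paper's proof: the same two invariants (the paper's Equations~\ref{lemma.luprojalgo.convergeL.eq1} and~\ref{lemma.luprojalgo.convergeL.eq2}), the same induction over pivots with base case $M^0,M^1$, and the same exhaustive case split \textbf{C1}/\textbf{C2.1}/\textbf{C2.2}, with the fixed triangle inequalities of $U^t$ discharging each sub-case of the first invariant. The only (harmless) deviations are that you handle case \textbf{C2.2} of the second invariant directly via monotonicity of the updates rather than by the paper's contradiction argument, and that you explicitly verify the remaining membership conditions $0 \leq L^t \leq U^t$ and $L^t_{i,i}=0$ for $\lclass(U^t)$, which the paper leaves implicit.
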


\begin{proof}[\bfseries{Proof of Lemma~\ref{lemma.luprojalgo.convergeL}}]
The proof follows similar ideas as in Lemma~\ref{lemma.luprojalgo.convergeU}, however requires much more algebraic manipulations and cases for complete analysis. Furthermore, there is no direct analogue to shortest-path problem that existed for Lemma~\ref{lemma.luprojalgo.convergeU}.

For ease of notation, let us define $M^0 = \widetilde{L}^t$. And, during an iteration of the outer loop of \emph{pivots} $k \in [\num]$ for \lproj, we update
$$M^k_{i, j} = \max {\left(M^{k-1}_{i, j}, M^{k-1}_{i, k} - U^{t}_{j, k}, M^{k-1}_{k, j} - U^{t}_{k, i}\right)}$$
At the end of execution, we have $k=\num$, and the output of \lproj is $M^\num$. We will prove following properties of $M^k$ at the end of every iteration $k \in [\num]$ for $\forall a, b \in [\num]$:
\begin{align}
&\forall k' \in [k] \ M^{k}_{a,b} \geq  \max \left(M^{k}_{a, k'} - U^{t}_{b, k'}, M^{k}_{k', b} - U^{t}_{k', a}\right)  \label{lemma.luprojalgo.convergeL.eq1} \\
&\texttt{Either } M^{k}_{a,b} = M^{0}_{a,b}  \texttt{ OR } \exists k' \in [k] \text{ s.t. } M^k_{a,b} = \max \left(M^{k}_{a, k'} - U^{t}_{b, k'}, M^{k}_{k', b} - U^{t}_{k', a}\right) \label{lemma.luprojalgo.convergeL.eq2} 
\end{align}

Note that the conditions hold trivially at the start since the range of $k'$ is $\emptyset$. Also, the claim of the lemma is equivalent to the statement that the two conditions above hold at the end when $k = \num$. We will now prove this by induction.

{\bfseries Base case for $M^0$, $M^1$:}\\
Conditions in Equations~\ref{lemma.luprojalgo.convergeL.eq1},~\ref{lemma.luprojalgo.convergeL.eq2} hold trivially for $M^0$ since the range of $k'$ is $\emptyset$. Conditions in Equations~\ref{lemma.luprojalgo.convergeL.eq1},\ref{lemma.luprojalgo.convergeL.eq2} hold for $M^1$ since the range of $k'$ is $[1]$, and $\forall a, b \in [n] \ \text{ update } M^1_{a, b} = \max {\left(M^{0}_{a, b}, M^{0}_{a, 1} - U^{t}_{b, 1}, M^{0}_{1, a} - U^{t}_{1, b}\right)} $ ensures that conditions hold.

{\bfseries Induction hypothesis:}\\
At $k-1$, $M^{k-1}$ satisfies the conditions in Equations~\ref{lemma.luprojalgo.convergeL.eq1},~\ref{lemma.luprojalgo.convergeL.eq2}.

{\bfseries Inductive step:}\\
Prove that $M^{k}$ satisfies the conditions in Equations~\ref{lemma.luprojalgo.convergeL.eq1},~\ref{lemma.luprojalgo.convergeL.eq2}
after we make the following update at iteration $k$:
\begin{align*}
\forall a, b \in [\num] \ \text{ update } M^k_{a, b} = \max {\left(M^{k-1}_{a, b}, M^{k-1}_{a, k} - U^{t}_{b, k}, M^{k-1}_{k, b} - U^{t}_{k, a}\right)}
\end{align*}

Consider any $a,b \in [n]$. We consider the following $3$ cases depending on the type of update to $M^k_{a, b}$.
\begin{enumerate}[label={\bf C{\arabic*}}:]
\item $M^k_{a, b} =  \max {\left(M^{k-1}_{a, k} - U^{t}_{b, k}, M^{k-1}_{k, b} - U^{t}_{k, a}\right)}$
\item $M^k_{a, b} =  M^{k-1}_{a, b}$
	\begin{enumerate}[label={\bf C{\arabic{enumi}.\arabic*}}:] 
		\item $M^{k-1}_{a, b} = M^{0}_{a, b}$
		\item $M^{k-1}_{a, b} = \max {\left(M^{k-1}_{a, e} - U^{t}_{b, e}, M^{k-1}_{e, b} - U^{t}_{e, a}\right)} \text{ where } e \in [k-1]$
	\end{enumerate}
\end{enumerate}
% mutually exclusive and
Note that the cases {\bf C1}, {\bf C2.1} and {\bf C2.2} are exhaustive given the update rule at $k$ and the induction hypothesis for $M^{k-1}$.

{\bfseries Proving that the condition of Equation~\ref{lemma.luprojalgo.convergeL.eq1} holds for $M^k$.}\\
Let us first prove that the condition of Equation~\ref{lemma.luprojalgo.convergeL.eq1} holds for $M^k$. The proof is same for all the three cases {\bf C1}, {\bf C2.1} and {\bf C2.2}.

For $k'=k$. Then,
\begin{align*}
\max {\left(M^{k}_{a, k'} - U^{t}_{b, k'}, M^{k}_{k', b} - U^{t}_{k', a}\right)} &= \max {\left(M^{k-1}_{a, k} - U^{t}_{b, k}, M^{k-1}_{k, b} - U^{t}_{k, a}\right)} \\
&\leq \max {\left(M^{k-1}_{a, b}, M^{k-1}_{a, k} - U^{t}_{b, k}, M^{k-1}_{k, b} - U^{t}_{k, a}\right)} \\
&= M^{k}_{a, b}
\end{align*}

For $k'\in [k], k' \neq k$. If, after update, we have $M^{k}_{a, k'} = M^{k-1}_{a, k'}$ and $M^{k}_{k',b} = M^{k-1}_{k',b}$. Then, 
\begin{align*}
\max {\left(M^{k}_{a, k'} - U^{t}_{b, k'}, M^{k}_{k', b} - U^{t}_{k', a}\right)} &= \max {\left(M^{k-1}_{a, k'} - U^{t}_{b, k'}, M^{k-1}_{k', b} - U^{t}_{k', a}\right)} \\
&\leq M^{k-1}_{a, b} \\
&\leq \max {\left(M^{k-1}_{a, b}, M^{k-1}_{a, k} - U^{t}_{b, k}, M^{k-1}_{k, b} - U^{t}_{k, a}\right)} \\
&= M^{k}_{a, b}
\end{align*}

For $k'\in [k], k' \neq k$. If, after update, we have $M^{k}_{a, k'} = M^{k-1}_{a, k'}$ and $M^{k}_{k',b} = M^{k-1}_{k',k} - U^{t}_{b,k}$. Then, 
\begin{align*}
\max {\left(M^{k}_{a, k'} - U^{t}_{b, k'}, M^{k}_{k', b} - U^{t}_{k', a}\right)} &= \max {\left(M^{k-1}_{a, k'} - U^{t}_{b, k'},  M^{k-1}_{k',k} - U^{t}_{b,k} - U^{t}_{k', a}\right)} \\
&= \max {\left(M^{k-1}_{a, k'} - U^{t}_{b, k'},  M^{k-1}_{k',k} - U^{t}_{k', a} - U^{t}_{b,k}\right)} \\
&\leq \max {\left(M^{k-1}_{a, b},  M^{k-1}_{a,k} - U^{t}_{b,k}\right)} \\
&\leq \max {\left(M^{k-1}_{a, b}, M^{k-1}_{a, k} - U^{t}_{b, k}, M^{k-1}_{k, b} - U^{t}_{k, a}\right)} \\
&= M^{k}_{a, b}
\end{align*}

For $k'\in [k], k' \neq k$. If, after update, we have $M^{k}_{a, k'} = M^{k-1}_{a, k'}$ and $M^{k}_{k',b} = M^{k-1}_{k, b} - U^{t}_{k, k'}$. Then, 
\begin{align*}
\max {\left(M^{k}_{a, k'} - U^{t}_{b, k'}, M^{k}_{k', b} - U^{t}_{k', a}\right)} &= \max {\left(M^{k-1}_{a, k'} - U^{t}_{b, k'},  M^{k-1}_{k, b} - U^{t}_{k, k'} - U^{t}_{k', a}\right)} \\
&\leq \max {\left(M^{k-1}_{a, k'} - U^{t}_{b, k'},  M^{k-1}_{k, b} - U^{t}_{k, a}\right)} \\
&\leq \max {\left(M^{k-1}_{a, b},  M^{k-1}_{k, b} - U^{t}_{k, a}\right)} \\
&\leq \max {\left(M^{k-1}_{a, b}, M^{k-1}_{a, k} - U^{t}_{b, k}, M^{k-1}_{k, b} - U^{t}_{k, a}\right)} \\
&= M^{k}_{a, b}
\end{align*}

For $k'\in [k], k' \neq k$. If, after update, we have $M^{k}_{a, k'} = M^{k-1}_{a, k} - U^{t}_{k', k}$ and $M^{k}_{k',b} = M^{k-1}_{k',b}$. Then, 
\begin{align*}
\max {\left(M^{k}_{a, k'} - U^{t}_{b, k'}, M^{k}_{k', b} - U^{t}_{k', a}\right)} &=\max {\left(M^{k-1}_{a, k} - U^{t}_{k', k} - U^{t}_{b, k'}, M^{k-1}_{k',b} - U^{t}_{k', a}\right)} \\
&\leq\max {\left(M^{k-1}_{a, k} - U^{t}_{b, k}, M^{k-1}_{k',b} - U^{t}_{k', a}\right)} \\
&\leq \max {\left(M^{k-1}_{a,k} - U^{t}_{b,k}, M^{k-1}_{a, b}\right)} \\
&\leq \max {\left(M^{k-1}_{a, b}, M^{k-1}_{a, k} - U^{t}_{b, k}, M^{k-1}_{k, b} - U^{t}_{k, a}\right)} \\
&= M^{k}_{a, b}
\end{align*}

For $k'\in [k], k' \neq k$. If, after update, we have $M^{k}_{a, k'} = M^{k-1}_{k, k'} - U^{t}_{k, a}$ and $M^{k}_{k',b} = M^{k-1}_{k',b}$. Then, 
\begin{align*}
\max {\left(M^{k}_{a, k'} - U^{t}_{b, k'}, M^{k}_{k', b} - U^{t}_{k', a}\right)} &=\max {\left(M^{k-1}_{k, k'} - U^{t}_{k, a} - U^{t}_{b, k'}, M^{k-1}_{k',b} - U^{t}_{k', a}\right)} \\
&=\max {\left(M^{k-1}_{k, k'} - U^{t}_{b, k'} - U^{t}_{k, a}, M^{k-1}_{k',b} - U^{t}_{k', a}\right)} \\
&\leq\max {\left(M^{k-1}_{k, b} - U^{t}_{k, a}, M^{k-1}_{a, b}\right)} \\
&\leq \max {\left(M^{k-1}_{a, b}, M^{k-1}_{a, k} - U^{t}_{b, k}, M^{k-1}_{k, b} - U^{t}_{k, a}\right)} \\
&= M^{k}_{a, b}
\end{align*}

For $k'\in [k], k' \neq k$. We need to consider following four more possibilities after update:
\begin{itemize}
\item $M^{k}_{a, k'} = M^{k-1}_{a, k} - U^{t}_{k', k}$ and $M^{k}_{k',b} = M^{k-1}_{k',k} - U^{t}_{b,k}$
\item $M^{k}_{a, k'} = M^{k-1}_{a, k} - U^{t}_{k', k}$ and $M^{k}_{k',b} = M^{k-1}_{k, b} - U^{t}_{k, k'}$
\item $M^{k}_{a, k'} = M^{k-1}_{k, k'} - U^{t}_{k, a}$ and $M^{k}_{k',b} = M^{k-1}_{k',k} - U^{t}_{b,k}$
\item $M^{k}_{a, k'} = M^{k-1}_{k, k'} - U^{t}_{k, a}$ and $M^{k}_{k',b} = M^{k-1}_{k, b} - U^{t}_{k, k'}$
\end{itemize}
These four possibilities follow exactly the same arguments as used above.
% and hence we prove the condition of Equations~\ref{lemma.luprojalgo.convergeL.eq1}.

{\bfseries Proving that the condition of Equation~\ref{lemma.luprojalgo.convergeL.eq2} holds for $M^k$}.\\
We will do this analysis case by case for the three cases {\bf C1}, {\bf C2.1} and {\bf C2.2}. % depending on the update rule applied to $M^k_{a,b}$.

For case {\bf C1}, \emph{i.e.} $M^k_{a, b} =  \max {\left(M^{k-1}_{a, k} - U^{t}_{b, k}, M^{k-1}_{k, b} - U^{t}_{k, a}\right)}$. Then, 
\begin{align*}
M^k_{a, b}  &=  \max {\left(M^{k-1}_{a, k} - U^{t}_{b, k}, M^{k-1}_{k, b} - U^{t}_{k, a}\right)} \\
			 &= \max {\left(M^{k}_{a, k} - U^{t}_{b, k}, M^{k}_{k, b} - U^{t}_{k, a}\right)}
\end{align*}
Hence the condition of Equation~\ref{lemma.luprojalgo.convergeL.eq2} holds.

For case {\bf C2.1}, \emph{i.e.} $M^k_{a, b} =  M^{k-1}_{a, b} = M^{0}_{a, b}$. Then, 
\begin{align*}
M^k_{a, b} &=  M^{k-1}_{a, b} \\
             &= M^{0}_{a, b}
\end{align*}
Hence the condition of Equation~\ref{lemma.luprojalgo.convergeL.eq2} holds.

For case {\bf C2.2}, \emph{i.e.} $M^k_{a, b} =  M^{k-1}_{a, b} = \max {\left(M^{k-1}_{a, e} - U^{t}_{b, e}, M^{k-1}_{e, b} - U^{t}_{e, a}\right)} \text{ where } e \in [k-1]$. Then, if we assume that
\begin{align*}
\max {\left(M^{k-1}_{a, e} - U^{t}_{b, e}, M^{k-1}_{e, b} - U^{t}_{e, a}\right)} = \max {\left(M^{k}_{a, e} - U^{t}_{b, e}, M^{k}_{e, b} - U^{t}_{e, a}\right)},
\end{align*} 
we have:
\begin{align*}
M^k_{a, b}  &=  \max {\left(M^{k-1}_{a, e} - U^{t}_{b, e}, M^{k-1}_{e, b} - U^{t}_{e, a}\right)}\\
			 &= \max {\left(M^{k}_{a, e} - U^{t}_{b, e}, M^{k}_{e, b} - U^{t}_{e, a}\right)}
\end{align*}
Hence the condition of Equation~\ref{lemma.luprojalgo.convergeL.eq2} holds. We conclude by showing that the above assumption must be true for {\bf C2.2}. 
%\begin{align*}
%\max {\left(M^{k-1}_{a, e} - U^{t}_{b, e}, M^{k-1}_{e, b} - U^{t}_{e, a}\right)} = \max {\left(M^{k}_{a, e} - U^{t}_{b, e}, M^{k}_{e, b} - U^{t}_{e, a}\right)}.
%\end{align*} 
We show this by contradiction as follows. For the sake of contradiction, let us assume that
\begin{align*}
\max {\left(M^{k-1}_{a, e} - U^{t}_{b, e}, M^{k-1}_{e, b} - U^{t}_{e, a}\right)} < \max {\left(M^{k}_{a, e} - U^{t}_{b, e}, M^{k}_{e, b} - U^{t}_{e, a}\right)}
\end{align*} 

Then, 
\begin{align*}
M^k_{a, b}  &=  \max {\left(M^{k-1}_{a, e} - U^{t}_{b, e}, M^{k-1}_{e, b} - U^{t}_{e, a}\right)}\\
			 &< \max {\left(M^{k}_{a, e} - U^{t}_{b, e}, M^{k}_{e, b} - U^{t}_{e, a}\right)} \\
\implies M^k_{a, b} &< \max {\left(M^{k}_{a, e} - U^{t}_{b, e}, M^{k}_{e, b} - U^{t}_{e, a}\right)}
\end{align*}

This is a contradiction to the proof we did above showing that $M^k$ satisfies Equation~\ref{lemma.luprojalgo.convergeL.eq1}. Hence, it holds that $M^k_{a, b}  = \max {\left(M^{k}_{a, e} - U^{t}_{b, e}, M^{k}_{e, b} - U^{t}_{e, a}\right)}$.

Now, putting this all together, we have proved that $M^k$ satisfies the conditions of Equations~\ref{lemma.luprojalgo.convergeL.eq1},~\ref{lemma.luprojalgo.convergeL.eq2}. Hence  by induction, the final output $M^n$ of the algorithm \lproj satisfies these conditions as well, which is exactly equivalent to the claim of the lemma we want to prove.
\end{proof}

%%%%%%%%%%%%%%%%%%%%%%%%%%%%%%%%%%%%%%%%%%%%%%%%%%%%%%%%%

%%%%%%%%%%%%%%%%%%%%%%%%%%%%%%%%%%%%%%%%%%%%%%%%%%%%%%%%%%
%\begin{lemma} \label{lemma.luprojalgo.optU}
%statement
%\end{lemma}
%
%\begin{proof}[\bfseries{Proof of Lemma~\ref{lemma.luprojalgo.optU}}]
%\end{proof}
%
%%%%%%%%%%%%%%%%%%%%%%%%%%%%%%%%%%%%%%%%%%%%%%%%%%%%%%%%%%
%\begin{lemma} \label{lemma.luprojalgo.optL}
%statement
%\end{lemma}
%
%\begin{proof}[\bfseries{Proof of Lemma~\ref{lemma.luprojalgo.optL}}]
%\end{proof}

%%%%%%%%%%%%%%%%%%%%%%%%%%%%%%%%%%%%%%%%%%%%%%%%%%%%%%%%%

\begin{proof}[\bfseries{Proof of Theorem~\ref{thm.luprojalgo}}]
We want to prove that the lower and upper bounds $L^t, U^t$ returned by \luproj are the unique optimal solution of Problem~\ref{eq.jointprojection}. Given the results from Theorem~\ref{thm.projection}, it is enough to show that the upper bound matrix $U^t$ is the optimal solution of Problem~\ref{sec.algoPart2-LUBounds.proj.5} and the lower bound matrix $L^t$ is the optimal solution of Problem~\ref{sec.algoPart2-LUBounds.proj.6}.
%\end{theorem}

%We want to prove that the $L^t, U^t$ returned by \luproj  are valid and tight, i.e., $\hypclass(L^t, U^t) \equiv  \hypclass^t$, for every iteration $t$. It is enough to prove that the lower and upper bounds $L^t, U^t$ returned by \luproj in Algorithm~\luproj are optimal solutions to the convex program in Equation~\ref{sec.algoPart2-LUBounds.proj.5}, ~\ref{sec.algoPart2-LUBounds.proj.6}. This, along with the statement of Theorem~\ref{thm.projection} proves the desired result.

We will prove this statement of optimality separately for $U^t$ and for $L^t$. The approach to prove optimality of upper bounds $U^t$ is similar in spirit to that of \citet{brickell2008metric} who showed the optimality of downward-only projection for the metric-nearness problem. For our optimality proofs, we will use the specific properties of $U^t$ and $L^t$ from Lemmas~\ref{lemma.luprojalgo.convergeU} and~Lemma~\ref{lemma.luprojalgo.convergeL}, respectively.
%The same ideas also work out for the proving the optimality of lower bounds $L^t$.

{\bfseries Proof of optimality of upper bounds $U^t$.}\\
We will prove by contradiction. Consider the optimal solution of Problem~\ref{sec.algoPart2-LUBounds.proj.5} given by $U^{*t}$. Let us assume that $U^t$ is not an optimal solution. Let us sort the indices $(i,j) \in [n]^2$ in order of increasing values of $U^t_{i,j}$. For the sake of contradiction, let $(a,b)$ be the first pair of indices in this sorted order such that $U^{*t}_{a,b} > U^t_{a,b}$. We will show that no such pair of indices $(a,b)$ can exist.

By Lemma~\ref{lemma.luprojalgo.convergeU}, we know that $U^t$ returned by \uproj satisfies either:
\begin{enumerate}[label={\bf C{\arabic*}}:]
\item $U^t_{a,b} = \widetilde{U}^t_{a,b}$ or
\item $\exists k \in [\num] \text{ s.t. } U^t_{a,b} = U^t_{a,k} + U^t_{k,b}$
\end{enumerate}

Considering the case {\bf C1}, we have the following contradiction:
\begin{align*}
U^t_{a,b} = \widetilde{U}^t_{a,b} \geq U^{*t}_{a,b}
\end{align*}

Considering the case {\bf C2}, we have the following contradiction:
\begin{align*}
U^t_{a,b} = U^t_{a,k} + U^t_{k,b} \geq U^{*t}_{a,k} + U^{*t}_{k,b} \geq U^{*t}_{a,b},
\end{align*}
where the first inequality holds because of the considered order on the pair of indices.

Hence, we must have $U^t \geq U^{*t}$ showing that $U^t$ is an optimal solution. In fact, given the uniqueness of the optimal solution $U^{*t}$ for Problem~\ref{sec.algoPart2-LUBounds.proj.5} which follows from results of Lemma~\ref{lemma.projection.valid2}, we have  $U^t = U^{*t}$.

{\bfseries Proof of optimality of lower bounds $L^t$.}\\
The ideas are similar as those used above for proving the optimality of $U^t$. Again, we will prove by contradiction. Consider the optimal solution of Problem~\ref{sec.algoPart2-LUBounds.proj.6} given by $L^{*t}$. Let us assume that $L^t$ is not an optimal solution. Let us sort the indices $(i,j) \in [n]^2$ in order of decreasing values of $L^t_{i,j}$. For the sake of contradiction, let $(a,b)$ be the first pair of indices in this sorted order such that $L^{*t}_{a,b} < L^t_{a,b}$. By will show that it is not possible for this pair of indices $(a,b)$ to exist.

By Lemma~\ref{lemma.luprojalgo.convergeL}, we know that $L^t$ returned by \lproj satisfies either:
\begin{enumerate}[label={\bf C{\arabic*}}:]
\item $L^t_{a,b} = \widetilde{L}^t_{a,b}$ or
\item $\exists k \in [\num] \text{ s.t. } L^{t}_{a, b} = \max {\left(L^{t}_{a, k} - U^{t}_{b, k}, L^{t}_{k, b} - U^{t}_{k, a}\right)}$
\end{enumerate}

Considering the case {\bf C1}, we have the following contradiction:
\begin{align*}
L^t_{a,b} = \widetilde{L}^t_{a,b} \leq L^{*t}_{a,b}
\end{align*}

Considering the case {\bf C2}, we have the following contradiction:
\begin{align*}
L^t_{a,b}  &= \max {\left(L^{t}_{a, k} - U^{t}_{b, k}, L^{t}_{k, b} - U^{t}_{k, a}\right)} \\
             &\leq \max {\left(L^{*t}_{a, k} - U^{t}_{b, k}, L^{*t}_{k, b} - U^{t}_{k, a}\right)} \\
             &= \max {\left(L^{*t}_{a, k} - U^{*t}_{b, k}, L^{*t}_{k, b} - U^{*t}_{k, a}\right)} \\
             &\leq L^{*t}_{a, b},
\end{align*}
where the first inequality holds because of the considered order on the pair of indices.

Hence, we must have $L^t \leq L^{*t}$ showing that $L^t$ is an optimal solution. In fact, given the uniqueness of optimal solution $L^{*t}$ for Problem~\ref{sec.algoPart2-LUBounds.proj.6} which follows from results of Lemma~\ref{lemma.projection.valid3}, we have  $L^t = L^{*t}$.
\end{proof}

}
%%%%%%%%%%%%%%%%%%%%%%%%%%%%%%%%%%%%%%%%%%%%%%%%%%%%%%%%%

%%%%%%%%%%%%%%%%%%%%%%%%%%%%%%%%%%%%%%%%%%%%%%%%%%%%%%%%%
%%%%%%%%%%%%%%%%%%%%%%%%%%%%%%%%%%%%%%%%%%%%%%%%%%%%%%%%% END
%\balancecolumns
\end{document}